\PassOptionsToPackage{dvipsnames}{xcolor}
\documentclass{article}

\usepackage[preprint]{neurips_2026}

\usepackage[utf8]{inputenc} 
\usepackage[T1]{fontenc}    
\usepackage{hyperref}       
\usepackage{url}            
\usepackage{booktabs}       
\usepackage{amsfonts}       
\usepackage{nicefrac}       
\usepackage{silence}
\usepackage{microtype}

\usepackage{amsmath}
\usepackage{amssymb}
\usepackage{amsthm}
\usepackage{graphicx}
\usepackage{subcaption}
\usepackage{enumitem}
\usepackage{cleveref}
\usepackage{tikz}
\usetikzlibrary{positioning,calc}
\usepackage{array}
\usepackage{makecell}
\usepackage{tabularx}

\tikzset{
  stagebox/.style={draw, rounded corners=1pt, minimum height=0.42cm, inner sep=1pt},
  fwd/.style={stagebox, fill=blue!18},
  bwd/.style={stagebox, fill=orange!28},
  syncbar/.style={draw=gray!70, fill=gray!22},
  workbar/.style={draw=teal!60!black, fill=teal!20}
}

\usepackage{multirow}

\usepackage[T1]{fontenc}
\usepackage{calligra}
\usepackage{layout}

\usepackage{amsmath}
\usepackage{amssymb}
\usepackage{amsfonts}
\usepackage{amsthm}

\usepackage{mathtools}
\usepackage{thmtools} 
\usepackage{thm-restate}

\usepackage{latexsym}

\usepackage[dvipsnames]{xcolor}
\usepackage{graphicx}
\usepackage{verbatim}

\usepackage{algorithmic}
\usepackage{algorithm}

\usepackage{url}
\usepackage{hyperref}
\hypersetup{colorlinks=false, urlcolor=magenta}

\usepackage{cleveref}

\usepackage{bbm}
\usepackage{bm} 

\usepackage{nicefrac}
\usepackage{adjustbox}
\usepackage{threeparttable}

\usepackage{import, ifthen}

\usepackage{tcolorbox}
\usepackage{pifont}
\definecolor{mydarkred}{RGB}{192,25,25}
\definecolor{mydarkgreen}{RGB}{25,192,25}
\definecolor{mydarkblue}{RGB}{25,25,192}
\newcommand{\red}{\color{RedOrange}}

\usepackage{xspace}

\newcommand{\algname}[1]{{\color{gray}\small\sf#1}\xspace}

\newcommand{\norm}[1]{{\left\| #1 \right\|}}

\newcommand{\anglebr}[1]{\left\langle#1\right\rangle} 

\newcommand{\inner}[2]{\anglebr{#1, #2}}

\newcommand{\cO}{\mathcal{O}}

\newcommand{\R}{\mathbb{R}}  

\newcommand{\del}[1]{}

\newcommand{\eqdef}{\coloneqq}

\DeclareMathOperator{\E}{{\bf E}}   

\newcommand{\Exp}[1]{{\red\mathbb{E}} \left[ #1 \right]}

\newcommand{\pr}[1][]{
  \ifthenelse { \equal{#1}{} }
  { \ensuremath{\mathrm{P}} }
  { \ensuremath{\mathrm{P}\left(#1\right)} }
}

\usepackage[colorinlistoftodos,bordercolor=orange,backgroundcolor=orange!20,linecolor=orange,textsize=scriptsize]{todonotes}

\newcommand{\ppeter}[1]{}

\setcitestyle{authoryear,round,citesep={;},aysep={,},yysep={;}} 

\newcommand{\hidesolutions}[1]{} 

\definecolor{thmbackground}{rgb}{240, 240, 255}

\declaretheoremstyle[
  spaceabove = 10pt, 
  spacebelow = 10pt,
  postheadspace=\newline,  
    postheadhook={\textcolor{black}{\rule[.6ex]{\linewidth}{0.4pt}}\\},  
  headfont = \bfseries,
  bodyfont = \normalfont\itshape,
  notefont = \mdseries\bfseries,
  notebraces = (),
 mdframed={
            frametitlebackgroundcolor=blue!60,
            backgroundcolor=red!0!white, 
           linecolor=black!100!white, 
            linewidth=0.4pt,     
            roundcorner=5pt,                     
            innertopmargin=6pt,
            innerbottommargin=20pt,   
            skipabove=2\parsep, 
            skipbelow=2\parsep } 
]{mythmstyle}

\declaretheorem[
  name=Theorem,
  style=mythmstyle,
  numberwithin=section
]{theorem}

\declaretheorem[
  name=Corollary,
  style=mythmstyle,
  numberwithin=section
]{corollary}

\declaretheorem[
  name=Proposition,
  style=mythmstyle,
  numberwithin=section
]{proposition}

\declaretheoremstyle[
  spaceabove = 10pt, 
  spacebelow = 10pt,
  postheadspace=\newline,  
    postheadhook={\textcolor{black}{\rule[.6ex]{\linewidth}{0.4pt}}\\},  
  headfont = \bfseries,
  bodyfont = \normalfont\itshape,
  notefont = \mdseries\bfseries,
  notebraces = (),
 mdframed={
            backgroundcolor=green!0!white, 
            linecolor=black!100!white, 
            innertopmargin=6pt,
            roundcorner=5pt, 
            innerbottommargin=20pt, 
            skipabove=2\parsep, 
            skipbelow=2\parsep } 
]{mylemmastyle}

\declaretheorem[
  name=Lemma,
  style=mylemmastyle,
  numberwithin=section
]{lemma}

\declaretheoremstyle[
  spaceabove = 10pt, 
  spacebelow = 10pt,
  postheadspace=\newline,  
    postheadhook={\textcolor{black}{\rule[.6ex]{\linewidth}{0.4pt}}\\},  
  headfont = \bfseries,
  bodyfont = \normalfont\itshape,
  notefont = \mdseries\bfseries,
  notebraces = (),
 mdframed={
            backgroundcolor=black!0!white, 
            linecolor=black!100!white, 
            innertopmargin=6pt,
            roundcorner=5pt, 
            innerbottommargin=20pt, 
            skipabove=2\parsep, 
            skipbelow=2\parsep } 
]{myexamplestyle}

\theoremstyle{plain}
\newtheorem{assumption}{Assumption}\numberwithin{assumption}{section}
\numberwithin{claim}{section}
\numberwithin{fact}{section}
 \numberwithin{exercise}{section}

\theoremstyle{definition}
\numberwithin{definition}{section}

\title{Demystifying Pipeline Parallelism:\\First Theory for PipeDream}

\author{%
  Ivan Ilin \\
  KAUST \\
  \texttt{ivan.ilin@kaust.edu.sa} \\
  \And
  Peter~Richt\'{a}rik \\
  KAUST \\
  \texttt{peter.richtarik@kaust.edu.sa}
}

\begin{document}

\maketitle

\begin{abstract}
Training modern machine learning models increasingly requires computation to be distributed across many accelerators. Data parallelism remains the default choice and is often paired with tensor-parallel sharding, but model parallelism becomes unavoidable once parameters, activations, or optimizer states no longer fit on a single device. This paper studies pipeline model parallelism through the lens of PipeDream (\algname{PD}) \citep{harlap2018pipedream}. Our first contribution is theoretical: we introduce Randomized PipeDream (\algname{RPD}), a stale block-SGD abstraction that yields, to our knowledge, the first clean nonconvex convergence guarantee for a \algname{PD}-style method. Our second contribution is a scaling diagnosis: we prove that the delay induced by steady-state \algname{PD} grows as $S^2 - \nicefrac{S}{2} + O(1)$ for $S$ stages, so the stale-read contribution in the convergence theorem scales as $\Theta(\gamma^2 S^4)$, equivalently as $\Theta(S^4/K)$ in the tuned-rate form. Our third contribution is a comparison with \algname{LocalSGD}, whose periodic model averaging trades weight staleness for synchronization bubbles. In our reported simulated-time experiments, the better-performing method depends on the objective: \algname{PD} performs better on the quadratic objective and on a small language-modeling training-loss task, while for logistic regression \algname{LocalSGD} becomes superior as the number of stages increases.
\end{abstract}

\section{Introduction}

The recent success of deep learning has been driven in part by scaling: larger models, larger datasets, and longer training runs typically deliver better downstream performance. That scaling pressure immediately creates a systems problem. Even when a model fits on one accelerator, training time can be prohibitive; once the parameters, activations, and optimizer states no longer fit on one accelerator, single-device training is impossible. Distributed optimization is therefore not an implementation detail but a prerequisite for modern large-scale learning \citep{dean2012large,goyal2017accurate}.

The most widely used training strategy is data parallelism: each worker holds a model replica, computes gradients on different mini-batches, and synchronizes updates across workers. In practice, large training stacks often combine this outer-loop data parallelism with tensor-parallel sharding inside layers to spread matrix multiplications across multiple devices \citep{shoeybi2019megatron}. By contrast, \emph{model parallelism} partitions the model itself across devices. This partition can be even or uneven, and it may follow layers, tensor dimensions, or other architectural boundaries. Model parallelism is less ubiquitous than pure data parallelism, but it is equally important whenever the model simply cannot be stored or executed on a single device \citep{huang2019gpipe,narayanan2019pipedream}.

This paper focuses on pipeline model parallelism. The basic idea is simple: split a deep network into $S$ stages, place stage $s$ on device $s$, and stream microbatches through the stage sequence. However, the naive schedule is painfully inefficient. If a model is partitioned across four devices, a single microbatch moves forward from stage $1$ to stage $4$ and then backward from stage $4$ to stage $1$. During much of that time, several machines are idle (\Cref{fig:naive-pipeline}). These idle regions are the classic \emph{pipeline bubbles}. PipeDream (\algname{PD}) \citep{narayanan2019pipedream} was one of the first influential methods to attack this issue: it overlaps multiple microbatches and uses a one-forward-one-backward (1F1B) steady-state schedule so that machines spend far less time doing nothing (\Cref{fig:pipedream-1f1b}).

The systems advantage of \algname{PD} is clear, but its optimization behavior is hard to analyze. The core obstacle is not merely delay; it is \emph{structured}, stage-dependent delay produced by weight stashing and by the deterministic 1F1B schedule. Each backward pass uses a collection of stage-local weight versions encountered earlier during the corresponding forward pass, and these versions are coupled through the pipeline timeline. Existing literature provides excellent systems insight into pipeline-parallel training, yet a tractable convergence theory for \algname{PD} itself remains largely missing.

This paper has three main contributions.
\begin{enumerate}[leftmargin=1.2em,itemsep=0.2em,topsep=0.2em]
    \item We introduce Randomized PipeDream (\algname{RPD}), a mathematical abstraction in which the active stage and mini-batch are sampled uniformly and the stage-wise staleness is arbitrary but bounded by a hyperparameter $\delta$. For this abstraction we prove a clean nonconvex convergence rate, and we then connect the theory back to the original method by showing that steady-state \algname{PD} is described by $\delta = S^2 - \nicefrac{S}{2} + O(1)$.
    \item We use this theory to explain why \algname{PD} scales poorly with the number of stages. Because the theorem contains a $\delta^2$ term and because $\delta = \Theta(S^2)$, the stale-read term grows as $\Theta(\gamma^2S^4)$, or $\Theta(S^4/K)$ after the tuned stepsize substitution. Under a fixed iteration budget, increasing the number of stages slows objective decrease substantially.
    \item We study \algname{LocalSGD} as an alternative schedule that trades PipeDream-style stale weight versions for synchronization barriers and local-replica drift. The resulting comparison is not one-sided: our experiments on quadratic objectives, logistic regression, and an $11.1$M-parameter NanoChat-style Transformer show that the better schedule depends on the objective and scaling regime. \algname{PD} is stronger on the quadratic and language-modeling tasks, while \algname{LocalSGD} outperforms it on logistic regression when the number of stages is large and gradients are noisy.
\end{enumerate}

The implementation and code for reproducing our experiments are available at \href{https://github.com/vectozavr/randomized-pipedream}{\texttt{github.com/vectozavr/randomized-pipedream}}.

\subsection{Notation} All key notation used in this paper is summarized in a tabular form in \Cref{sec:notation}; see Table~\ref{tab:notation_table}.

\section{Related Work}
\label{sec:related-work}

\paragraph{Distributed and pipeline-parallel training.}
Distributed training is central to scaling modern deep networks, either by replicating models across workers in data-parallel SGD
\citep{dean2012large,goyal2017accurate}, or by partitioning computation, parameters, activations, and optimizer states across devices. Tensor-parallel systems such as Megatron-LM split large Transformer layers across accelerators \citep{shoeybi2019megatron}, while \algname{ZeRO} reduces data-parallel memory redundancy by partitioning optimizer states, gradients, and parameters \citep{rajbhandari2020zero}. More general systems such as \algname{FlexFlow} and \algname{Alpa} search over hybrid parallelization plans that combine data, operator, tensor, and pipeline parallelism \citep{jia2019beyond,zheng2022alpa}. Our work is orthogonal to these systems contributions: we isolate the optimization effect of
the stale weights induced by pipeline-parallel execution.

Pipeline parallelism partitions a model into consecutive stages and streams microbatches through them. \algname{GPipe} uses synchronous microbatch pipelining and updates weights only after a mini-batch has been flushed through the pipeline
\citep{huang2019gpipe}. \algname{PD} instead uses inter-batch pipelining with a one-forward-one-backward schedule to keep stages active \citep{harlap2018pipedream,narayanan2019pipedream}. Its weight-stashing mechanism preserves consistency between the forward and backward pass of each microbatch, but gradients are evaluated using stale stage-local weights. Subsequent systems such as \algname{PipeDream-2BW} and \algname{DAPPLE} reduce memory overhead, improve scheduling, or combine pipeline and data parallelism \citep{narayanan2021memory,fan2021dapple}. These works primarily optimize throughput, memory, placement, and scheduling. In contrast, we study the nonconvex optimization cost of the stale weight versions created by PipeDream-style execution.

\paragraph{Stale-gradient and randomized block optimization.}
The stale gradients arising in pipeline parallelism are related to asynchronous optimization. \algname{Hogwild!} established convergence of lock-free stochastic updates under sparsity assumptions \citep{recht2011hogwild}, while later work analyzed asynchronous SGD for nonconvex objectives under bounded delays \citep{lian2015asynchronous}. The perturbed-iterate framework models stale reads as gradients evaluated at perturbed versions of the current iterate \citep{mania2017perturbed}, and delay compensation methods attempt to correct staleness using Taylor or Hessian-based approximations \citep{zheng2017asynchronous}. Our setting differs from generic asynchronous SGD because the delay structure is generated by a deterministic pipeline schedule, stage-local weight stashing, and forward/backward ordering. Our Randomized PipeDream abstraction keeps the stale block-gradient structure while removing the deterministic scheduling combinatorics.

Our abstraction also connects to randomized coordinate and block-coordinate methods, which update only part of the variable at each iteration \citep{nesterov2012efficiency,richtarik2014iteration}. Block stochastic-gradient methods combine data sampling with block-wise updates and have been analyzed in convex and nonconvex settings \citep{xu2015block}. However, unlike standard block-coordinate methods, our active block gradient is evaluated at a stale model assembled from different stage-local weight versions.

\paragraph{Local SGD and model averaging.}  \algname{LocalSGD} reduces communication by letting workers perform several local stochastic-gradient steps before averaging their models. This idea goes back to parallelized SGD with model averaging \citep{zinkevich2010parallelized}; later work showed that \algname{LocalSGD} can match mini-batch SGD rates with fewer communication rounds in convex settings \citep{stich2018local}, and explained its behavior for nonconvex deep learning objectives \citep{yu2019parallel}. Empirical work also suggests that local updates can improve efficiency and generalization compared with simply increasing the mini-batch size \citep{lin2018don}. We use \algname{LocalSGD} as an alternative policy for stage-distributed training: several full-model trajectories are executed through
the same pipeline stages and periodically averaged. This avoids \algname{PD}'s cross-stage weight staleness, but introduces synchronization barriers and local-replica drift, yielding a concrete staleness-versus-synchronization comparison.

\section{Background: Why Pipeline Parallelism?}

We study the finite-sum problem
\begin{equation}
\label{eq:objective}
\min_{w \in \R^d} f(w),
\qquad
f(w) = \frac{1}{M}\sum_{m=1}^{M} f_m(w),
\end{equation}
where the model is partitioned into $S$ blocks
$$w = \bigl(w^{(1)},\dots,w^{(S)}\bigr), \qquad w^{(s)} \in \R^{d_s}, \qquad \sum_{s=1}^{S} d_s = d.$$
Each block corresponds to one pipeline stage. In practice, the partition may be balanced or intentionally uneven to match heterogeneous devices or layer costs.

\paragraph{Naive pipelining.}
If we send a single microbatch through the pipeline, the forward pass proceeds stage-by-stage from $1$ to $S$, and the backward pass returns from $S$ to $1$. This schedule is functionally correct but extremely inefficient because most devices are idle during startup and drain. The result is a large bubble fraction, especially for deep pipelines (\Cref{fig:naive-pipeline}).

\paragraph{PipeDream.}
\algname{PD} reduces these bubbles by admitting multiple microbatches into the pipeline and switching each stage into a 1F1B steady state after startup \citep{narayanan2019pipedream}. The price is that gradients become stale: a backward pass does not use the newest global model, but rather the stage-local weights that were seen earlier during the corresponding forward pass. Weight stashing preserves forward/backward consistency within each stage, but it also creates a complicated pattern of cross-stage version dependencies.

\Cref{fig:naive-vs-pipedream} visualizes the contrast between \algname{PD} and a naive pipeline. The systems motivation for \algname{PD} is therefore immediate: it converts idle pipeline time into useful work. The theoretical question is more subtle: what is the optimization cost of the induced staleness, and how does that cost scale with the number of stages? The discrete-event procedure used to generate the \algname{PD} timelines in our experiments is described in Appendix~\ref{sec:pd-1f1b-schedule}.

\begin{figure}[t]
    \centering
    \begin{subfigure}[t]{0.85\linewidth}
        \centering
        \includegraphics[width=\linewidth]{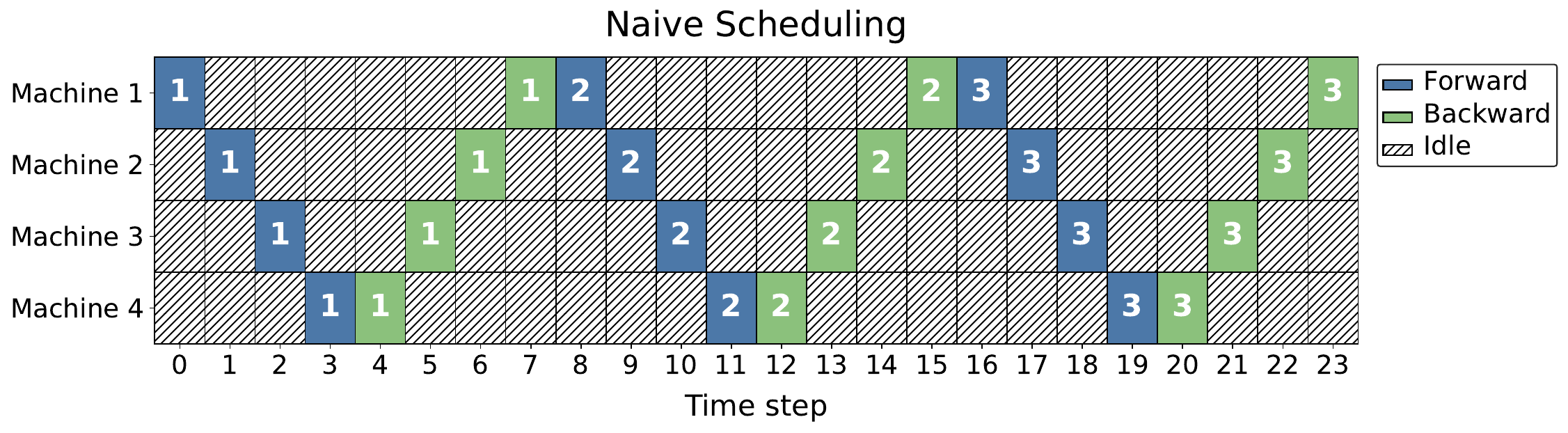}
        \caption{Naive pipeline for one microbatch. The blank area is pure idle time.}
        \label{fig:naive-pipeline}
    \end{subfigure}
    \hfill
    \begin{subfigure}[t]{0.85\linewidth}
        \centering
        \includegraphics[width=\linewidth]{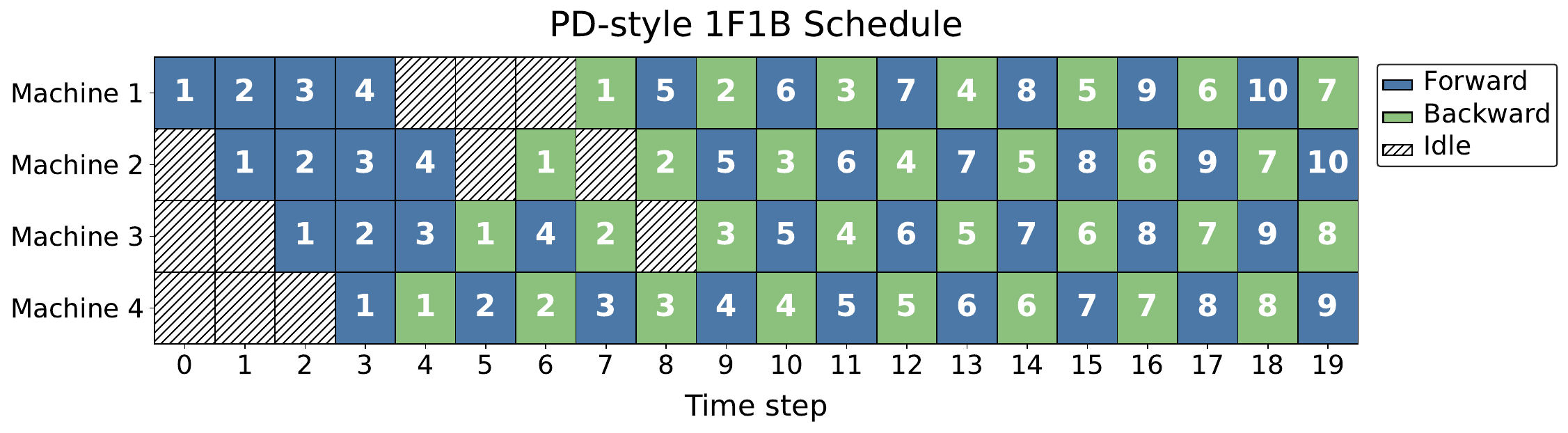}
        \caption{\algname{PD} 1F1B schedule fills most of the bubble after startup.}
        \label{fig:pipedream-1f1b}
    \end{subfigure}
    \caption{Pipeline scheduling for a four-stage model. The naive schedule is sequential and wastes hardware; \algname{PD} overlaps multiple microbatches to keep stages busy.}
    \label{fig:naive-vs-pipedream}
\end{figure}

\paragraph{From schedules to optimization updates.}
A pipeline schedule determines not only which device is active, but also which parameter version is used when a gradient is computed. We distinguish two notions of progress. The first is the \emph{block-update budget} $K$, which counts how many individual stage/block updates have been applied. This is the natural unit for the convergence theory. The second is \emph{simulator time}, which counts discrete pipeline time steps and is closer to a wall-clock proxy: in one simulator step, several stages may perform work in parallel. We use $K$ for theoretical comparisons and simulator time for experiments that study hardware-utilization tradeoffs. This distinction is important because increasing the number of stages can increase parallel work per time step while also increasing the staleness of the gradients being applied.

\section{Randomized PipeDream: A Tractable Theory}

The exact deterministic \algname{PD} schedule is difficult to analyze because the active stage, the microbatch identity, and the stage-wise weight versions are all jointly coupled by the 1F1B timeline. We therefore introduce Randomized PipeDream (\algname{RPD}), a simplified model that keeps the essential stale block-gradient structure while removing the deterministic scheduling combinatorics.

At iteration $k$, let \[w_k = \bigl(w_k^{(1)},\dots,w_k^{(S)}\bigr)\] denote the current model after $k$ block updates, where $w_k^{(s)}$ is the current value of the $s^{\text{th}}$ stage/block. \algname{RPD} first chooses a delay vector \[j_k=\bigl(j_k^{(1)},\dots,j_k^{(S)}\bigr)\] using only randomness available before the current active stage and mini-batch are sampled, with \[0 \le k-j_k^{(s)}\le \delta\]  for all $s$. It then forms a stale model \[z_k = \bigl( w_{j_k^{(1)}}^{(1)}, \dots, w_{j_k^{(S)}}^{(S)} \bigr),\] where $j_k^{(s)}$ is the past global iteration whose stage-$s$ block is used in the stale read. Conditional on this stale read, \algname{RPD} samples a pair $(s_k,m_k)$ uniformly from $\{1,\dots,S\}\times\{1,\dots,M\}$ and updates only the active block:
\begin{equation}
\label{eq:gpd-update}
w_{k+1}^{(s_k)} =
w_k^{(s_k)} - \gamma \nabla_{w^{(s_k)}} f_{m_k}(z_k),
\qquad
w_{k+1}^{(i)} = w_k^{(i)} \quad \forall i \neq s_k.
\end{equation}
Here $\delta$ is a tunable staleness bound. Unlike original \algname{PD}, which visits stages and microbatches in a highly structured order, \algname{RPD} randomizes both. This is the key simplification that makes a rigorous analysis possible. The resulting randomized version of \algname{PD} is summarized in \Cref{alg:gpd}.

\begin{algorithm}[H]
\caption{Randomized PipeDream (\algname{RPD})}
\label{alg:gpd}
\begin{algorithmic}[1]
\STATE Input: stepsize $\gamma>0$, number of stages $S$, delay bound $\delta$, initial point $w_0\in\R^d$
\STATE Initialize: $k=0$, $j_k = (j_k^{(1)}, \dots, j_k^{(S)}) = (0,\dots, 0) \in \mathbb{Z}^{S}$.
\FOR{$k=0,1,2,\dots$}
    \STATE Select a delay vector: $j_k = (j_k^{(1)}, \dots, j_k^{(S)}),$ with $0 \le k - j_k^{(i)} \le \delta \quad \forall i$
    \STATE Form a stale model: $z_k \eqdef \bigl(w_{j_k^{(1)}}^{(1)},\dots,w_{j_k^{(S)}}^{(S)}\bigr)$
    \STATE Sample $(s_k, m_k)$ uniformly from $\{1,\dots,S\} \times \{1,\dots,M\}$, cond. independently of $z_k$.
    \STATE Update the active stage block: $w_{k+1}^{(s_k)} = w_k^{(s_k)}-\gamma\,\nabla_{w^{(s_k)}} f_{m_k}(z_k)$
    \STATE Keep the remaining blocks unchanged: $w_{k+1}^{(i)}=w_k^{(i)} \qquad \forall i\neq s_k$
\ENDFOR
\end{algorithmic}
\end{algorithm}

We now analyze \Cref{alg:gpd} as a randomized stale block-SGD method. Let $\Delta_0 := f(w_0)-f_\star$. Two elementary facts drive the proof. First, because $(s_k,m_k)$ is sampled uniformly after the stale read is fixed, the randomized block update is an unbiased estimate of the full gradient direction up to the factor $1/S$. Second, trajectory-bounded block gradients and bounded delay imply that the stale read cannot be too far from the current iterate: $\norm{w_k-z_k} \le \gamma \delta G$. Combining these facts with $L$-smoothness gives the following convergence guarantee.

\begin{theorem}[Convergence of \algname{RPD}]
\label{thm:gpd}Consider \eqref{eq:objective} and the \algname{RPD} update \eqref{eq:gpd-update}. Suppose that, at each iteration, the delay vector is fixed before sampling $(s_k,m_k)$, and that conditional on the past and this delay vector, $(s_k,m_k)$ is sampled uniformly. Suppose further that 
\begin{itemize}
\item[(i)] $f$ is lower bounded, 
\item[(ii)] $L$-smooth,
\item [(iii)] there exists $G\geq 0$ such that the block gradients along the stale-read trajectory satisfy \[\norm{\nabla_{w^{(s)}} f_m(z_k)} \le G\] for all $k,m,s$ almost surely, and 
\item [(iv)] the stage-wise staleness is bounded by $\delta\geq 0$. 
\end{itemize}
If the stepsize satisfies $0 < \gamma \le 1/L$, then for every $K \ge 1$, the iterates of \algname{RPD} satisfy \begin{equation}
\label{eq:gpd-rate}
\frac{1}{K}\sum_{k=0}^{K-1}\E\bigl[\norm{\nabla f(w_k)}^2\bigr]
\le
\frac{2S\Delta_0}{\gamma K}
\;+\;
\gamma S L G^2
\;+\;
\gamma^2 L^2 \delta^2 G^2.
\end{equation}
\end{theorem}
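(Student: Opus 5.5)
The plan is the standard descent-lemma argument for stale SGD, applied to the single-block update. Write $g_k \eqdef \nabla_{w^{(s_k)}} f_{m_k}(z_k)$, embedded in $\R^d$ by zeros outside block $s_k$, so that $w_{k+1} = w_k - \gamma g_k$. By $L$-smoothness of $f$,
\[
f(w_{k+1}) \le f(w_k) - \gamma \inner{\nabla f(w_k)}{g_k} + \tfrac{L\gamma^2}{2}\norm{g_k}^2 .
\]
By assumption (iii), $\norm{g_k}^2 \le G^2$ almost surely. Next, condition on the past together with the delay vector $j_k$; this makes $w_k$ and $z_k$ fixed. Since $(s_k,m_k)$ is then uniform, $\E_k[g_k] = \tfrac1S \nabla f(z_k)$. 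This gives
\[
\E_k f(w_{k+1}) \le f(w_k) - \tfrac{\gamma}{S}\inner{\nabla f(w_k)}{\nabla f(z_k)} + \tfrac{L\gamma^2 G^2}{2}.
\]

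To handle the inner product, use the polarization identity $\inner{a}{b} = \tfrac12\norm{a}^2 + \tfrac12\norm{b}^2 - \tfrac12\norm{a-b}^2$, discard $\tfrac12\norm{\nabla f(z_k)}^2 \ge 0$, and apply $\norm{\nabla f(w_k)-\nabla f(z_k)} \le L\norm{w_k - z_k}$. The result is
\[
-\inner{\nabla f(w_k)}{\nabla f(z_k)} \le -\tfrac12\norm{\nabla f(w_k)}^2 + \tfrac{L^2}{2}\norm{w_k-z_k}^2 .
\]

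The remaining task, and the step needing care, is the staleness bound $\norm{w_k - z_k} \le \gamma\delta G$. Block $s$ of $w_k - z_k$ equals $w_k^{(s)} - w_{j_k^{(s)}}^{(s)} = -\gamma\sum_{t=j_k^{(s)}}^{k-1} \mathbbm{1}[s_t = s]\, g_t^{(s)}$. Each block is touched only at iterations with $s_t = s$, and the $g_t$ involved are block gradients at stale reads, each of norm at most $G$.
\begin{itemize}
\item The crude route uses the triangle inequality on the whole vector: $w_k - z_k$ is a combination of the at most $\delta$ updates $g_t$, $t\in[k-\delta,k-1]$, restricted to the blocks whose stale index precedes $t$. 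Since each $g_t$ lives in a single block with norm $\le G$, we get $\norm{w_k - z_k} \le \sum_{t=k-\delta}^{k-1}\norm{g_t} \le \gamma\delta G$ (absorbing $\gamma$).
\item This works because every update affects exactly one block, so no factor $\sqrt S$ appears. I would state the bound as $\norm{w_k-z_k}^2 \le \gamma^2\delta^2G^2$ almost surely.
\end{itemize}

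Substituting into the descent inequality gives
\[
\E_k f(w_{k+1}) \le f(w_k) - \tfrac{\gamma}{2S}\norm{\nabla f(w_k)}^2 + \tfrac{\gamma L^2\gamma^2\delta^2G^2}{2S} + \tfrac{L\gamma^2G^2}{2}.
\]
Take full expectations, sum over $k=0,\dots,K-1$, telescope using $f(w_K)\ge f_\star$, and multiply by $2S/(\gamma K)$. This yields
\[
\tfrac1K\sum_k \E\norm{\nabla f(w_k)}^2 \le \tfrac{2S\Delta_0}{\gamma K} + \gamma S L G^2 + \gamma^2L^2\delta^2G^2,
\]
which matches \eqref{eq:gpd-rate}. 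The condition $\gamma\le 1/L$ is not actually needed in this argument. The only delicate points are measurability (conditioning on $j_k$ before the sampling of $(s_k,m_k)$, exactly as the hypothesis allows) and the one-block-per-step observation in the staleness bound.
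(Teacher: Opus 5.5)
Your proposal is correct and follows the paper's proof essentially step for step. The shared ingredients are the descent lemma for $w_{k+1}=w_k-\gamma g_k$, the conditional unbiasedness $\E[S g_k\mid\mathcal F_k]=\nabla f(z_k)$, the inequality $\langle a,b\rangle\ge\frac12\|a\|^2-\frac12\|a-b\|^2$, the one-block-per-step bound $\|w_k-z_k\|\le\gamma\delta G$, and telescoping. Your side remark is also accurate: the paper's argument never actually uses $\gamma\le 1/L$.
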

The proof is given in Appendix~\ref{sec:gpd-proof}. The bound separates three effects. The first term is the usual optimization term and decreases with the number of block updates $K$. The second term is the stochastic block-update price: at each iteration, \algname{RPD} updates only one of the $S$ blocks. The third term is the stale-read penalty. Its dependence on $\delta^2$ is the key structural fact that will later explain the scaling behavior of \algname{PD}.

For later comparison, it is useful to tune the stepsize in \eqref{eq:gpd-rate}. If we choose \[\gamma = \min\!\left\{\frac{1}{L}, \sqrt{\frac{2\Delta_0}{L G^2 K}}\right\},\] then the \algname{RPD} rate becomes
\[
\frac{1}{K}\sum_{k=0}^{K-1}\E\bigl[\norm{\nabla f(w_k)}^2\bigr]
=
\cO\!\left(\frac{S}{\sqrt K} + \frac{\delta^2}{K}\right).
\]

\paragraph{From \algname{RPD} back to \algname{PD}.}
The point of \algname{RPD} is not to claim that the deterministic \algname{PD} scheduler samples stages and mini-batches uniformly. Rather, \algname{RPD} is a deliberately simplified stale block-SGD model: it preserves the feature of \algname{PD} that matters for the analysis---gradients are computed on stage-wise stale models---while replacing the deterministic 1F1B combinatorics by uniform sampling. 

The link back to the original 1F1B method is captured by the next result.
\begin{proposition}[Steady-state \algname{PD} delay]
\label{prop:pipedream-delay}Embed a steady-state $S$-stage \algname{PD} {\rm 1F1B} execution into the global-history model of \algname{RPD}. Under the slot-batched global-history embedding used by our simulator, the worst-case delay satisfies
\begin{equation}
\label{eq:pipedream-delay}
\delta_{\mathrm{PD}} \eqdef S^2 - \frac{S}{2} + O(1).
\end{equation}
For even $S$, this becomes the exact identity \[\delta_{\mathrm{PD}}=S^2-\frac{S}{2}.\]
\end{proposition}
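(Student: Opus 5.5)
Since the statement depends on the ``slot-batched global-history embedding used by our simulator'' (Appendix~\ref{sec:pd-1f1b-schedule}), I would first fix that embedding explicitly. Time runs in discrete slots. In steady state each slot contains one forward and one backward operation per stage under the 1F1B pattern. All backward operations completing in a slot are appended to the global history in a fixed stage order, so each slot advances the global counter $k$ by exactly $S$ block updates. The delay of a read is then $k-j_k^{(s)}$, where $k$ is the global index at which the backward step of a microbatch $b$ at stage $s_k$ is applied, and $j_k^{(s)}$ is the global index of the stage-$s$ weight version stashed when $b$ passed forward through stage $s$. The worst-case delay $\delta_{\mathrm{PD}}$ is the maximum of this quantity over $s$, $s_k$ and steady-state microbatches.

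The first step is to write closed-form slot indices for the steady-state 1F1B timeline. With $S$ microbatches in flight, microbatch $b$ does its forward at stage $s$ in slot $t^F_s(b)=c\,b+(s-1)$ and its backward at stage $s$ in slot $t^B_s(b)=c\,b+2S-s$, for the appropriate period $c$. I would verify this against the $S=4$ picture in \Cref{fig:pipedream-1f1b}. Weight stashing means the version read at stage $s$ is the latest one committed before slot $t^F_s(b)$. Converting slots to global indices gives $k\approx S\,t^B_{s_k}(b)+(\text{position of }s_k)$ and $j^{(s)}\approx S\,t^F_s(b)+(\text{position of }s)$. The delay is therefore
\[
S\bigl(t^B_{s_k}(b)-t^F_s(b)\bigr)+\text{(within-slot offsets)}.
\]

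Next I would maximize over $(s,s_k)$. The slot gap $t^B_{s_k}(b)-t^F_s(b)=2S-s_k-s+1$ is largest for $s=s_k=1$, giving about $2S$ slots. Since only half the slots in the steady-state alternation carry a backward commit, this presumably yields $S\cdot S$ global steps. The within-slot ordering then subtracts a linear correction, which is where the $-S/2$ comes from. For even $S$ the alternation of F and B phases aligns with the slot boundaries, and the offsets sum exactly to $S/2$. For odd $S$ a half-slot mismatch leaves an $O(1)$ remainder, which I would bound by checking both parities of the phase.

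\textbf{Main obstacle.} The hardest part is pinning down the exact bookkeeping conventions of the simulator's embedding: which slot a stashed version belongs to, whether updates within a slot are visible to reads in the same slot, and the ordering of stages inside a batched slot. The constant $-S/2$ depends entirely on these conventions. I would resolve this by computing the delays by hand for $S=2$ and $S=4$ (targets $3$ and $14$), fitting the slot formulas to those values, and then proving the general formula by induction on the steady-state period.
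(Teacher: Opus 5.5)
Your slot formulas $t^F_s(b)=cb+(s-1)$ and $t^B_s(b)=cb+2S-s$ are exactly what the paper's proof rests on. So is the resulting gap $2S-s-s_k+1$, maximized at $s=s_k=1$: it is the window length $L_{r,a}=2S-r-a+1$. The gap is in converting slots into global-history increments. You assume each stage performs one forward and one backward per slot, so that every slot appends $S$ updates. That contradicts the simulator, where each stage executes at most one action per slot. It also contradicts your own ``$S$ microbatches in flight'': with $c=1$ and a $2S$-slot round trip there would be $2S$ in flight. You then switch to ``presumably'' half, but this per-slot count is precisely what has to be proved. It follows from your formulas with $c=2$. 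Stage $s$ runs forward in slots $t\equiv s-1$ and backward in slots $t\equiv s \pmod 2$. Hence the number $b_t$ of backward updates in slot $t$ is exactly $S/2$ for even $S$, and alternates between $(S-1)/2$ and $(S+1)/2$ for odd $S$. The paper's argument is simply that the delay for the pair $(r,a)$ equals $\sum b_t$ over the $L_{r,a}$ slots of the window.

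Second, you attribute the $-S/2$ to the wrong source. With $b_t=S/2$, the delay at $r=a=1$ is $\frac{S}{2}(2S-1)=S^2-\frac{S}{2}$ directly. The $-S/2$ is one slot's worth of backward updates, because the window has $2S-1$ slots, not $2S$. Within-slot ordering plays no role. In your own decomposition $k-j\approx(\text{rate})(t^B-t^F)+\bigl(\mathrm{pos}(s_k)-\mathrm{pos}(s)\bigr)$, the offsets cancel at $s=s_k=1$. You reach the right number only because two errors cancel: you round $2S-1$ up to $2S$, then subtract an offset that is not there. Keeping the exact gap and also subtracting your offset would give $S^2-S$. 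The odd case is then immediate. An alternating sequence summed over any window of length $L$ gives $\frac{S}{2}L+\varepsilon$ with $|\varepsilon|\le\frac12$. No ``half-slot mismatch'' analysis or induction on the period is needed. Finally, fitting the conventions to hand-computed values at $S=2$ and $S=4$ presupposes the answer rather than deriving it.
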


The proof is given in Appendix~\ref{sec:delta-scale-proof}. This proposition identifies the delay parameter that should be used when \algname{RPD} is used as a proxy for the original deterministic \algname{PD} schedule. In particular, although \algname{RPD} randomizes the active stage and mini-batch, the worst-case stage-wise staleness produced by steady-state 1F1B \algname{PD} scales as $\delta_{\mathrm{PD}} = S^2 - \nicefrac{S}{2} + O(1)$. Under the slot-batched convention, each backward operation in a simulator slot appends one block update to the global history. Thus, to model the delay scale of real \algname{PD} within the \algname{RPD} abstraction, we should instantiate \algname{RPD} with $\delta=\delta_{\mathrm{PD}}$.

Thus \Cref{thm:gpd} should be read as a convergence theorem for a randomized \algname{PD}-style abstraction, not as a direct theorem for deterministic \algname{PD}. The role of \Cref{prop:pipedream-delay} is to transfer the scheduler-induced delay scale of \algname{PD} into this abstraction. 

\begin{figure*}[t]
    \centering
    \begin{subfigure}[t]{0.48\textwidth}
        \centering
        \includegraphics[width=\linewidth]{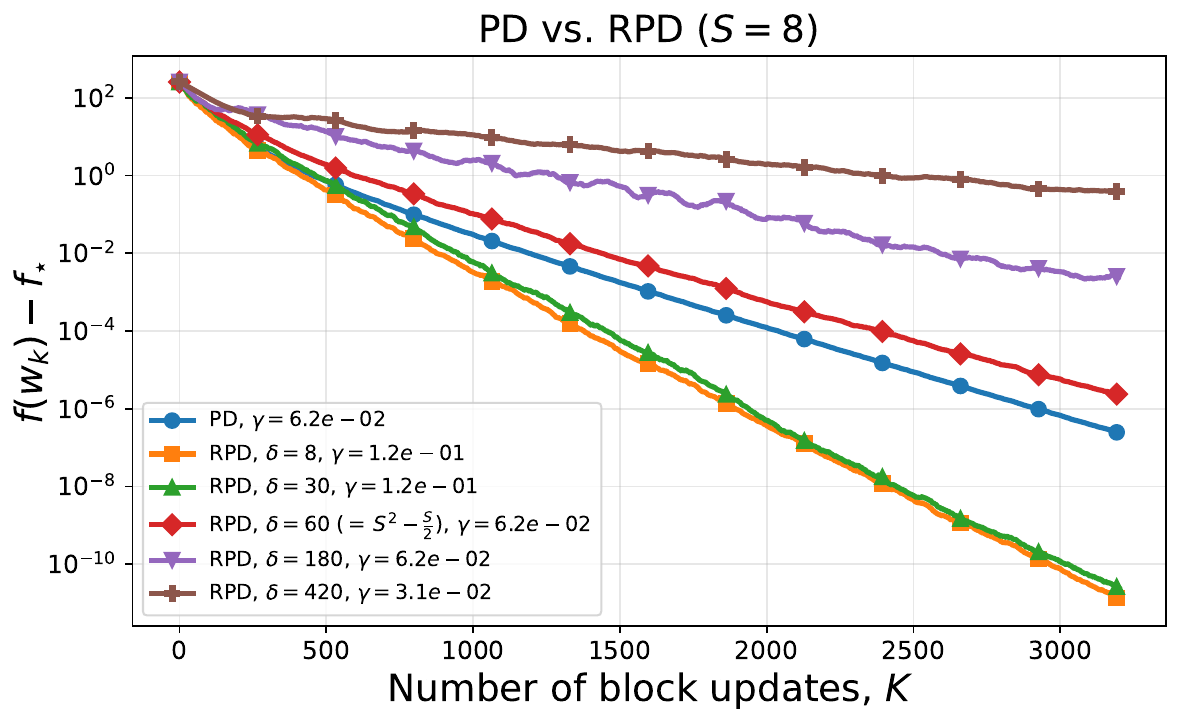}
        \caption{Random quadratic objective: best-tuned \algname{PD} and \algname{RPD} trajectories. $S=8$, batch size is $10$, $M=60$, trained for $5$ epochs.}
        \label{fig:rpd-validation-and-scaling}
    \end{subfigure}
    \hfill
    \begin{subfigure}[t]{0.48\textwidth}
        \centering
        \includegraphics[width=\linewidth]{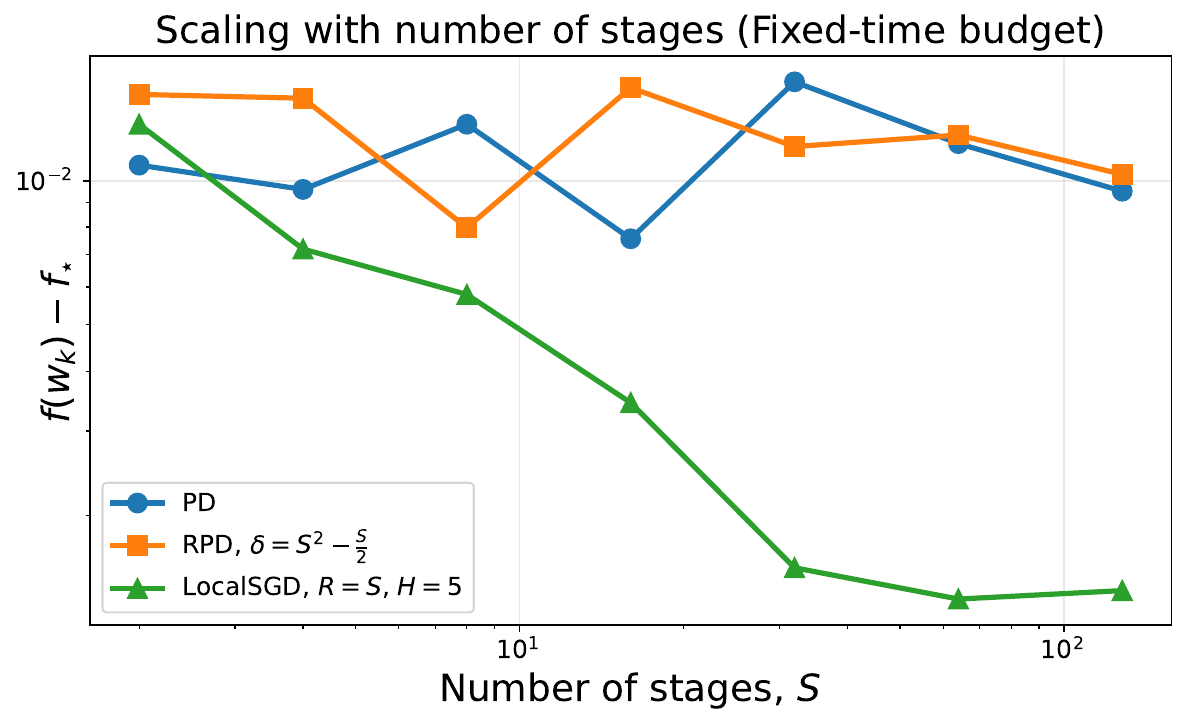}
        \caption{Logistic regression: final objective versus the number of stages $S$ under a fixed simulator-time budget. Batch size is $10$, $M=60$, trained for $5$ epochs. $H=5$, $\lambda=10^{-4}$.}
        \label{fig:fixed-time-scaling}
    \end{subfigure}
    \caption{The left plot validates \algname{RPD} as a theoretical proxy for \algname{PD}: when \algname{RPD} is instantiated in the delay regime predicted for steady-state 1F1B execution, its trajectory closely matches that of \algname{PD} on the quadratic objective. The right plot studies scaling on logistic regression: for each method and each stage count $S$, we measure the final objective reached under the same fixed simulator-time budget. $d=512$.}

    \label{fig:gpd-validation}
\end{figure*}

\section{Scaling Limits of PipeDream}

By \Cref{prop:pipedream-delay}, steady-state \algname{PD} induces $\delta = \delta_{\mathrm{PD}} = \Theta(S^2)$. Plugging this into \Cref{eq:gpd-rate} yields the staleness contribution $\Theta(\gamma^2 S^4)$, or $\Theta(S^4/K)$ in the tuned-rate form. Thus the stale-read part of the worst-case theorem has quartic dependence on the number of pipeline stages.

This does \emph{not} mean that throughput scaling disappears; \algname{PD} still reduces idle time very effectively. The point is instead that better utilization can come with a rapidly growing optimization penalty. Under a fixed computational budget, i.e., a fixed number of block updates or iterations, increasing the number of stages makes the final objective decrease more slowly. Reaching a target solution quality therefore requires substantially more optimization steps as $S$ grows.

\section{LocalSGD as an Alternative}

The scaling diagnosis above suggests that the central weakness of \algname{PD} is not pipelining itself, but the particular way in which \algname{PD} buys utilization: it keeps stages busy by allowing different parts of the model to participate in gradients computed from old stage-local versions. A natural alternative is to recover consistency through occasional synchronization. This leads to \algname{LocalSGD}: instead of maintaining one stale pipeline trajectory, we maintain several full-model trajectories, execute them through the same pipeline stages, and periodically average them.

In standard \algname{LocalSGD}, each replica stores a full parameter vector, performs several local SGD steps, and then synchronizes with the other replicas by model averaging. In our pipeline-parallel setting, each full replica is itself partitioned across the same $S$ stages used by the pipeline. We write
\[
w_{q,r}
=
\bigl(w_{q,r}^{(1)},\dots,w_{q,r}^{(S)}\bigr)\in\R^d,
\qquad
w_{q,r}^{(s)}\in\R^{d_s},
\]
for the logical model of replica $r$ after $q$ local steps. Here $s$ indexes the pipeline stage, $r$ indexes the replica, and $q$ indexes local full-model steps performed by each replica.

This model uses additional replica memory: \algname{LocalSGD} stores $R$ logical copies of the model, partitioned across the stages, whereas the basic \algname{PD} simulator follows a single stale trajectory with weight stashing. Our theory and plots compare optimization progress under matched block-update or simulated-time budgets; they do not charge this extra memory as a separate resource cost.

A logical \algname{LocalSGD} step of replica $r$ is realized by a full forward/backward pass through the $S$-stage pipeline. During this pass, each stage updates its own block $w_{q,r}^{(s)}$ when the corresponding backward computation is executed. After every $H$ local steps, the replicas are synchronized by stage-wise averaging: 
$$\bar w_q^{(s)} = \frac1R\sum_{r=1}^R w_{q,r}^{(s)}, \qquad w_{q,r}^{(s)} \leftarrow \bar w_q^{(s)} \qquad \forall r\in\{1,\dots,R\},\ s\in\{1,\dots,S\}.$$
The resulting \algname{LocalSGD} algorithm is summarized in \Cref{alg:localsgd-1f1b} located in Appendix~\ref{sec:localsgd-rate-derivation}. 

In all experiments, we choose $R=S$ in order to keep the pipeline highly utilized. The corresponding 1F1B timeline generation procedure, including the local-step dependencies and synchronization barriers, is given in Appendix~\ref{sec:localsgd-1f1b-schedule}.

\paragraph{Relation to standard \algname{LocalSGD}.}
\Cref{alg:localsgd-1f1b} is a stage-distributed version of the usual \algname{LocalSGD}. At the optimization level, each replica performs $H$ local stochastic-gradient steps and then the replicas are averaged. At the systems level, one such local step is not executed as one atomic full-vector operation: because the model is split across $S$ stages, the forward and backward passes are scheduled through the pipeline, and each stage updates its own block when its backward computation is executed.

This distinction matters for the iteration count. For \algname{RPD} and \algname{PD}, one iteration means one stage/block update. We therefore let $K$ denote the common block-update budget used in the theory comparison. For \algname{LocalSGD}, we let $T$ denote the number of logical full-model local SGD steps performed by each replica. Since one logical local step updates all $S$ blocks of one replica, and there are $R$ replicas, the common block-update budget is $K = RST$.

The parameter $H$ is the main scheduling knob. If $H$ is small, synchronization is frequent and the system incurs visible synchronization bubbles (\Cref{fig:localsgd-small-h}). If $H$ is large, synchronization becomes rare and the execution visually starts to resemble \algname{PD}: machines spend most of their time computing, with only occasional global coordination (\Cref{fig:localsgd-large-h}).

\begin{figure}[t]
    \centering
    \begin{subfigure}[t]{0.90\linewidth}
        \centering
        \includegraphics[width=\linewidth]{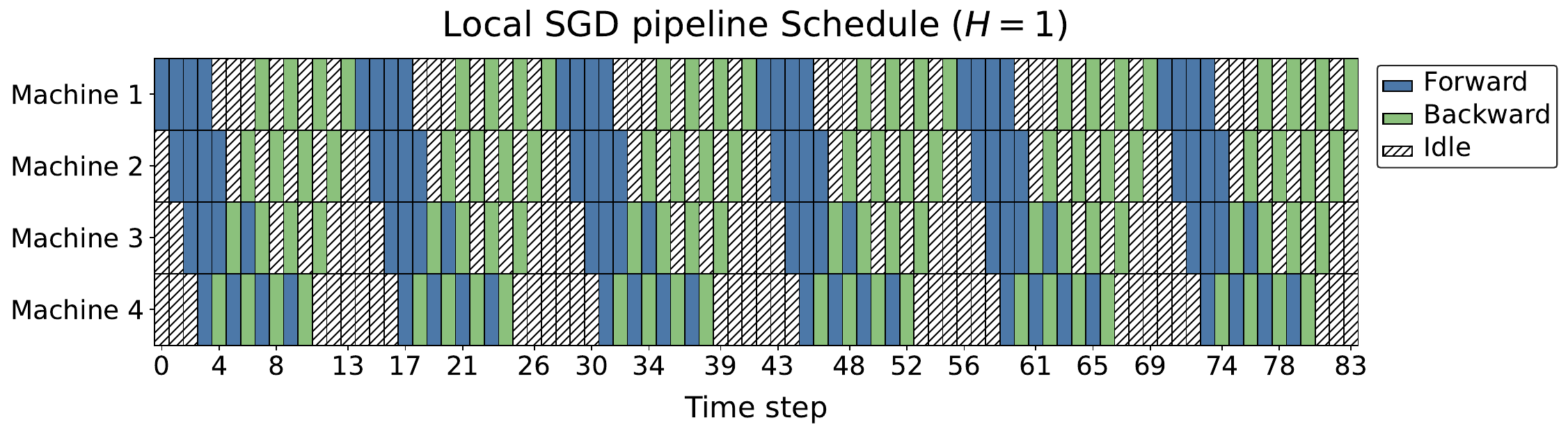}
        \caption{$H=1$, $R=S=4$: frequent synchronization introduces visible bubbles.}
        \label{fig:localsgd-small-h}
    \end{subfigure}
    \hfill
    \begin{subfigure}[t]{0.90\linewidth}
        \centering
        \includegraphics[width=\linewidth]{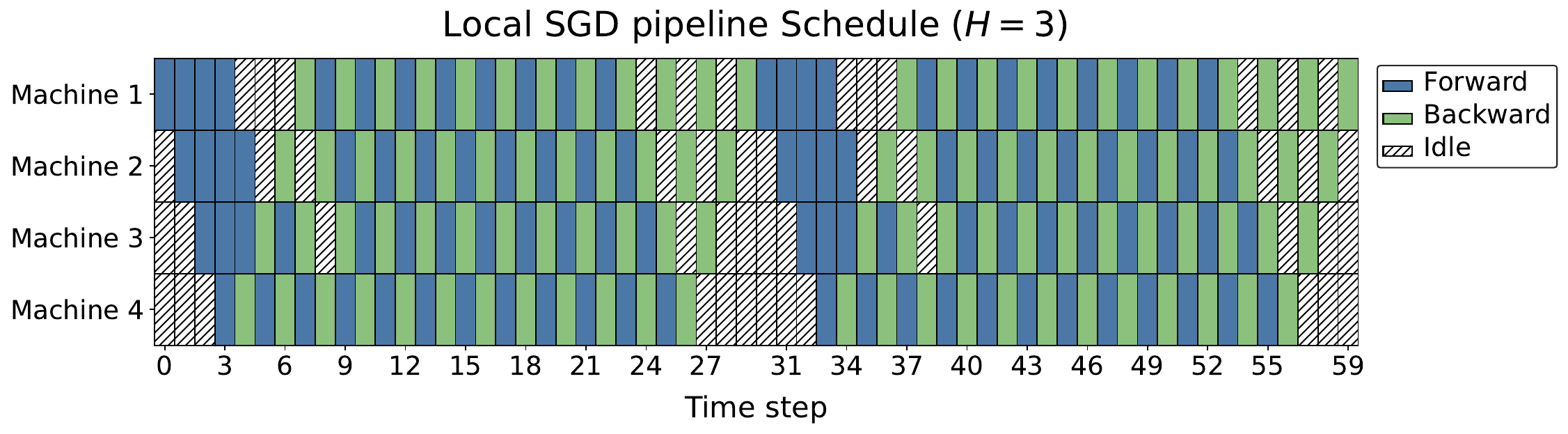}
        \caption{$H=3$, $R=S=4$: synchronization is sparse and the schedule approaches pipeline-style utilization.}
        \label{fig:localsgd-large-h}
    \end{subfigure}
    \caption{Scheduling intuition for \algname{LocalSGD}. Increasing the number of local steps $H$ shrinks synchronization overhead but increases replica drift between averaging events.}
    \label{fig:localsgd-schedule}
\end{figure}

This trade-off is qualitatively different from the trade-off in \algname{PD}. \algname{PD} reduces hardware idle time by accepting stale gradients tied to old weight versions. \algname{LocalSGD} avoids that particular staleness mechanism, but it pays for it with synchronization pauses and with drift between replicas.

\begin{figure*}[t]
    \centering
    \begin{subfigure}[t]{0.48\textwidth}
        \centering
        \includegraphics[width=\linewidth]{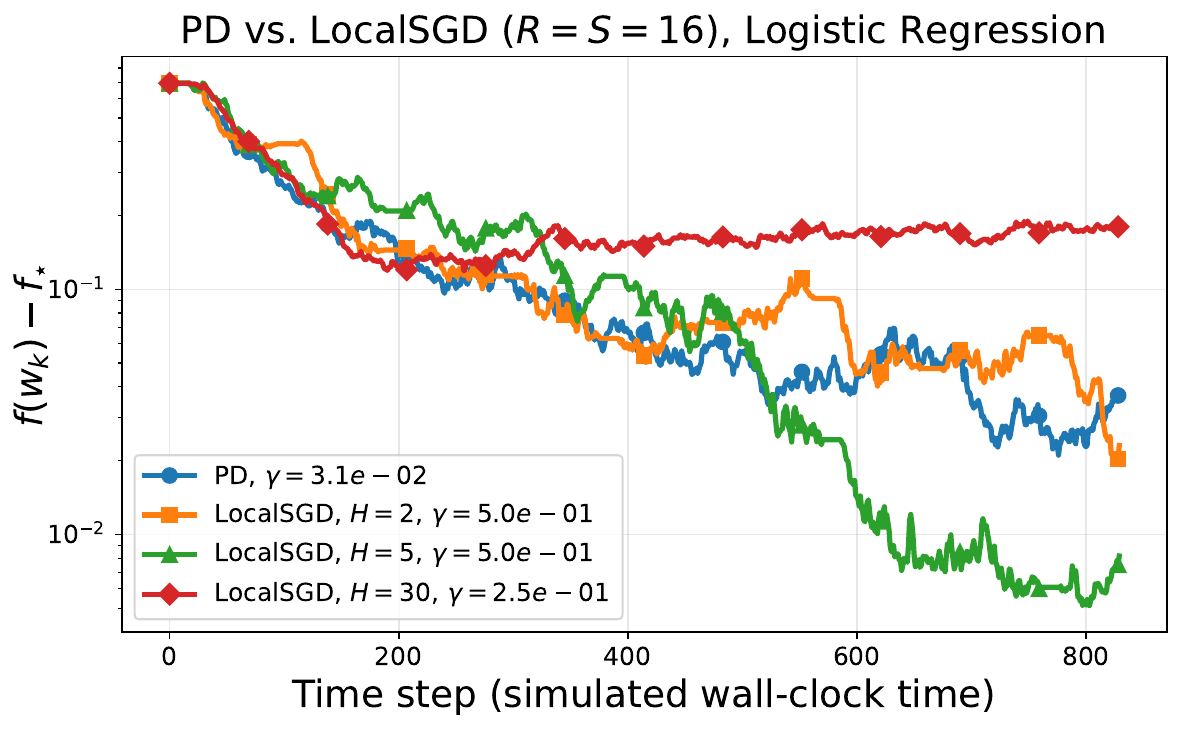}
        \caption{Logistic regression.}
        \label{fig:pd-vs-localsgd-logreg}
    \end{subfigure}
    \hfill
    \begin{subfigure}[t]{0.48\textwidth}
        \centering
        \includegraphics[width=\linewidth]{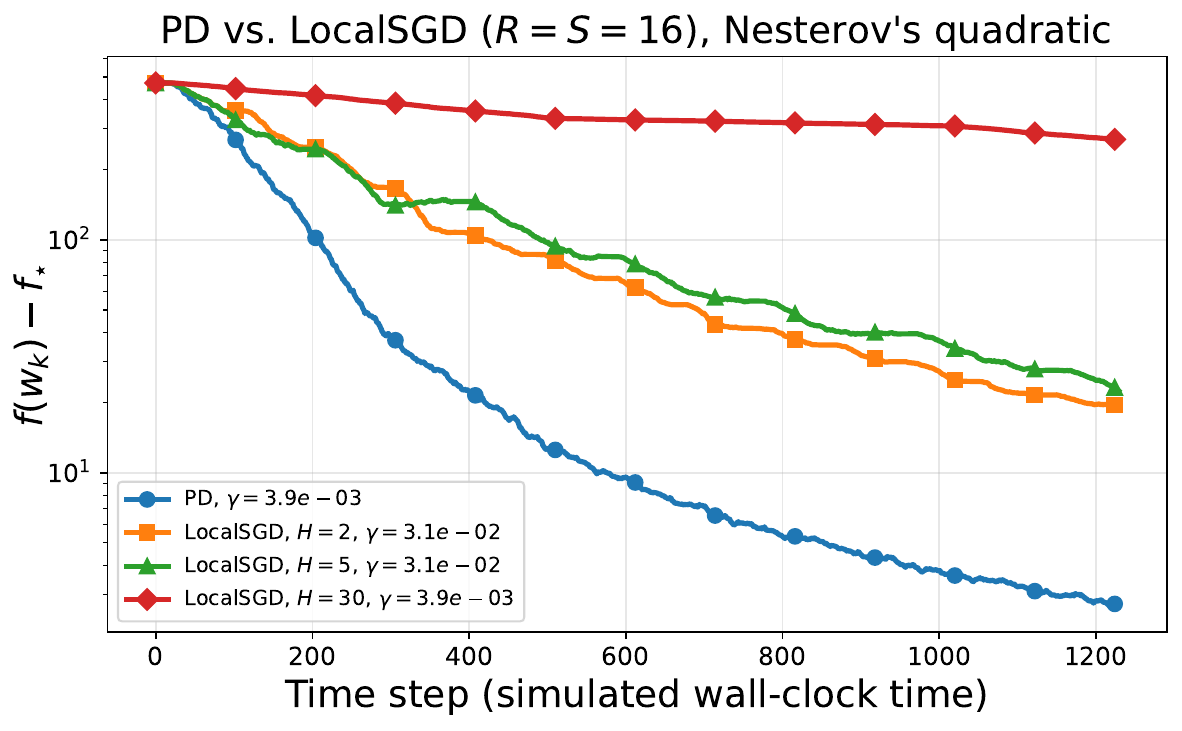}
        \caption{Nesterov-style tridiagonal quadratic objective.}
        \label{fig:pd-vs-localsgd-quad}
    \end{subfigure}
    \hfill
    \begin{subfigure}[t]{0.48\textwidth}
        \centering
        \includegraphics[width=\linewidth]{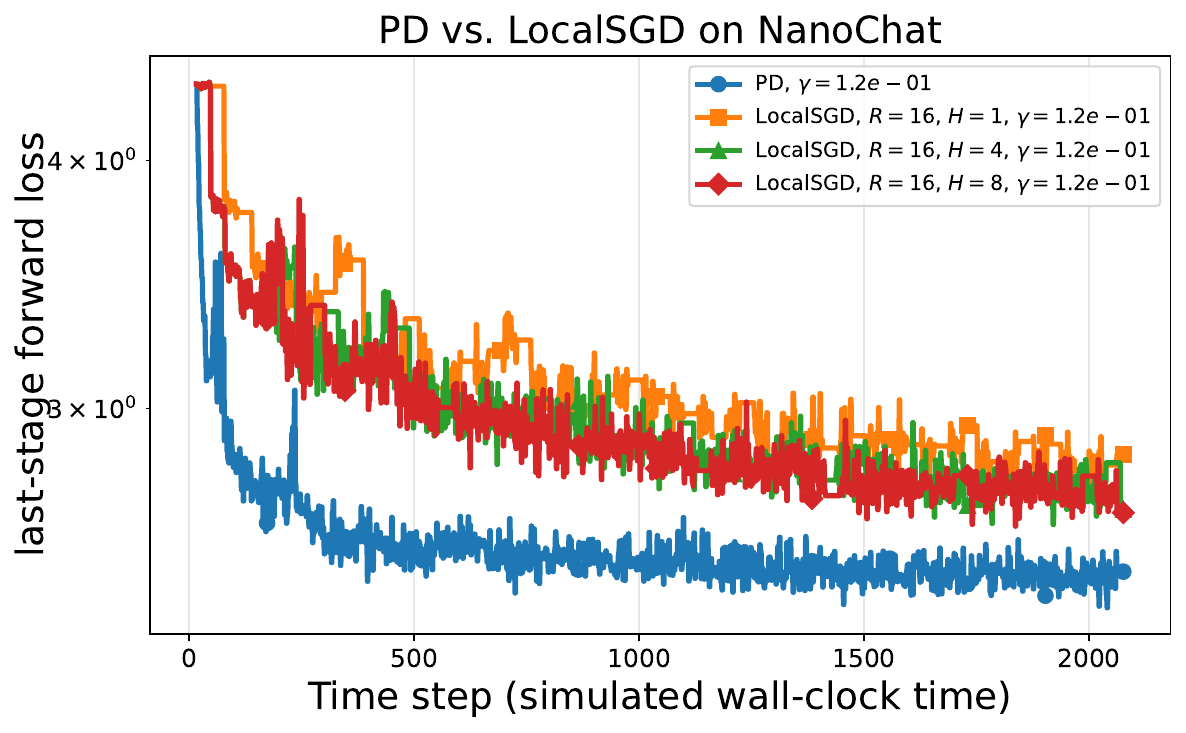}
        \caption{NanoChat character language modeling.}
        \label{fig:pd-vs-localsgd-nanochat}
    \end{subfigure}
    \caption{\algname{PD} versus \algname{LocalSGD} at $S=16$. Panels (a,b) use synthetic finite-sum objectives with $d=512$, batch size $10$, and $M=60$; panel (c) uses an $11.1$M-parameter NanoChat-style Transformer on Tiny Shakespeare with sequence length $128$, batch size $32$, embedding dimension $256$, $8$ attention heads, and tuned learning rates. \algname{PD} is stronger on the quadratic and NanoChat tasks, while \algname{LocalSGD} is stronger on logistic regression.}
    \label{fig:pd-vs-localsgd}
\end{figure*}

\paragraph{Convergence analysis.}
We use the standard nonconvex \algname{LocalSGD} analysis of \citet{yu2019parallel} in the homogeneous-worker case. To avoid overloading the finite-sum components $f_m$ from \eqref{eq:objective}, denote the objective assigned to replica $r$ in the standard LocalSGD formulation by $\phi_r$. In our setting, $\phi_1=\cdots=\phi_R=f,$ where $R$ is the number of replicas, i.e., the number of parallel local trajectories. Thus, the worker count in \citet{yu2019parallel} corresponds to our $R$.

Let $\bar w_q := \frac1R\sum_{r=1}^R w_{q,r}$ be the averaged logical model after $q$ local steps per replica. Assume that $f$ is lower bounded by $f_\star$ and $L$-smooth. For uniformly sampled mini-batches $m_{q,r}\in\{1,\dots,M\}$, assume
\[
\E\!\left[
\nabla f_{m_{q,r}}(w_{q,r})
\mid w_{q,r}
\right]
=
\nabla f(w_{q,r}),
\]
and
\[
\E\!\left[
\left\|
\nabla f_{m_{q,r}}(w_{q,r})
-
\nabla f(w_{q,r})
\right\|^2
\right]
\le \sigma^2,
\qquad
\E\!\left[
\left\|
\nabla f_{m_{q,r}}(w_{q,r})
\right\|^2
\right]
\le B^2.
\]

In all experiments we choose $R=S$ to keep the 1F1B pipeline highly utilized. Assume $K$ is divisible by $RS$; otherwise one can replace $K/(RS)$ by $\lfloor K/(RS)\rfloor$ throughout. Since the common block-update budget satisfies $K=RST$, the choice $R=S$ gives $T=K/S^2$. Applying the standard \algname{LocalSGD} guarantee under this budget gives, for $0<\gamma\le 1/L$,
\begin{equation}
\label{eq:localsgd-rate-K}
\frac{S^2}{K}\sum_{q=0}^{K/S^2-1}
\E\bigl[\|\nabla f(\bar w_q)\|^2\bigr]
\le
\frac{2S^2\Delta_0}{\gamma K}
+
4\gamma^2H^2B^2L^2
+
\frac{L\gamma\sigma^2}{S}.
\end{equation}
The derivation from the standard rate of \citet{yu2019parallel} and the substitution $T=K/(RS)$ are given in Appendix~\ref{sec:localsgd-rate-derivation}.

The first term of \eqref{eq:localsgd-rate-K} is the optimization term, the second is the local-drift penalty caused by taking $H$ unsynchronized local steps, and the third is the stochastic-gradient variance term reduced by averaging across the $S$ replicas.

The tuned form below uses the corollary of \citet{yu2019parallel}: choose $\gamma=\min\{1/L,\sqrt{R}/(L\sqrt{T})\}$. In the regime where the second term is active, $R=S$ and $T=K/S^2$ give the
$R=S$ rate under the common block-update budget:
\begin{equation}
\label{eq:localsgd-tuned-rate-K}
\frac{S^2}{K}\sum_{q=0}^{K/S^2-1}
\E\bigl[\|\nabla f(\bar w_q)\|^2\bigr]
=
\cO\!\left(
\sqrt{\frac{S}{K}}
+
\frac{S^3H^2}{K}
\right).
\end{equation}
The \algname{LocalSGD} guarantee is for the logical averaged model $\bar w_q$, whereas \algname{RPD}/\algname{PD} are measured along a single stale block-update trajectory. In contrast to the \algname{RPD}/\algname{PD} abstraction, the \algname{LocalSGD} bound does not contain a stale-weight term of the form $\delta^2/K$. The dominant additional error is instead the local-replica drift caused by taking $H$ unsynchronized steps.

\begin{table*}[t]
\centering
\footnotesize
\setlength{\tabcolsep}{4pt}
\renewcommand{\arraystretch}{1.45}

\newcolumntype{L}[1]{>{\raggedright\arraybackslash}m{#1}}
\newcolumntype{C}[1]{>{\centering\arraybackslash}m{#1}}

\begin{tabular}{@{}L{0.11\textwidth}C{0.32\textwidth}C{0.19\textwidth}L{0.26\textwidth}@{}}
\toprule
\textbf{Method}
&
\textbf{Bound for block-update budget $K$}
&
\textbf{Tuned rate}
&
\textbf{Assumptions}
\\
\midrule

\algname{RPD}
&
$\begin{aligned}
&\frac{2S\Delta_0}{\gamma K}
+\gamma S L G^2
+\gamma^2 L^2\delta^2G^2
\end{aligned}$
&
$\begin{aligned}
\cO\!\left(
\frac{S}{\sqrt K}
+
\frac{\delta^2}{K}
\right)
\end{aligned}$
&
$L$-smooth, lower bounded, trajectory-bounded block gradients, uniform stage-batch sampling after delay selection, delay $\le \delta$, $0<\gamma\le 1/L$
\\

\addlinespace[0.25em]

\algname{RPD} proxy for \algname{PD}
&
$\begin{aligned}
&\frac{2S\Delta_0}{\gamma K}
+\gamma S L G^2
+\gamma^2 L^2\delta_{\mathrm{PD}}^2G^2
\end{aligned}$
&
$\begin{aligned}
\cO\!\left(
\frac{S}{\sqrt K}
+
\frac{S^4}{K}
\right)
\end{aligned}$
&
same as \algname{RPD}, with $\delta = \delta_{\mathrm{PD}}$
\\

\addlinespace[0.25em]

\algname{LocalSGD}
&
$\begin{aligned}
&\frac{2S^2\Delta_0}{\gamma K}
+4\gamma^2H^2B^2L^2
+\frac{L\gamma\sigma^2}{S}
\end{aligned}$
&
$\begin{aligned}
\cO\!\left(
\sqrt{\frac{S}{K}}
+
\frac{S^3H^2}{K}
\right)
\end{aligned}$
&
$L$-smooth, lower bounded, unbiased gradients, bounded variance and second moments, $0<\gamma\le 1/L$
\\

\bottomrule
\end{tabular}

\caption{Comparison of rates for the studied optimization models. The \algname{PD} row is obtained by inserting the steady-state \algname{PD} delay scale $\delta = \delta_{\mathrm{PD}}$ into the \algname{RPD} theorem. For \algname{RPD} and \algname{PD}, $K$ directly counts stage/block updates. For \algname{LocalSGD}, $T$ denotes the number of logical full-model local SGD steps per replica, and the staged implementation satisfies $K=RST$.}
\label{tab:theory-comparison}
\end{table*}

\paragraph{Which method is better in which regime?}
The two bounds should not be read as proving that one schedule is uniformly
better than the other. Instead, they expose two different failure modes.
\algname{PD} can suffer when stage-wise stale reads dominate, especially as
\(\delta_{\mathrm{PD}}\) grows with \(S\). \algname{LocalSGD} removes this
particular stale-version mechanism, but can suffer from local-replica drift
when \(H\) is large. The empirical question is therefore not whether
\algname{LocalSGD} always improves over \algname{PD}, but which error source is
more damaging for a given objective and scaling regime.

\Cref{tab:theory-comparison} highlights the basic tradeoff between the two methods under the common block-update budget $K$. For \algname{PD}, the main difficulty is pipeline staleness: after substituting $\delta=\delta_{\mathrm{PD}}=\Theta(S^2)$ into the \algname{RPD} bound, the staleness contribution becomes $\Theta(S^4/K)$. Thus, as the number of stages $S$ increases, \algname{PD} becomes increasingly sensitive to stale gradients. \algname{LocalSGD} avoids this particular stale-version mechanism, but pays a different price: the \algname{LocalSGD} rate contains a local-drift term of order $S^3H^2/K$. In this sense, the theory isolates a staleness-versus-drift tradeoff: \algname{PD} is harmed by deep pipelines through the $\Theta(S^4/K)$ staleness term, while \algname{LocalSGD} is harmed by long unsynchronized local trajectories through the $H^2$ drift term.

\Cref{tab:theory-comparison} also clarifies the role of $H$. Taking $H$ very large makes \algname{LocalSGD} visually resemble pipeline execution and reduces synchronization bubbles, but the \algname{LocalSGD} bound shows that the drift penalty grows quadratically in $H$. Taking $H$ very small keeps the method close to synchronized mini-batch SGD, but loses more simulator time to synchronization barriers (\Cref{fig:localsgd-schedule}). Hence the design problem for \algname{LocalSGD} is to choose $H$ large enough to reduce idle time, but not so large that replica drift dominates. We therefore tune the synchronization period $H$ empirically.

\section{Experiments}
\label{sec:experiments}

Our experiments are designed to test three questions. First, does the randomized
\algname{RPD} abstraction predict the behavior of deterministic 1F1B
\algname{PD} when instantiated with the delay scale \(\delta_{\mathrm{PD}}\)?
Second, does the degradation predicted by the \(\delta^2\) term become visible
as the number of stages \(S\) increases? Third, is the resulting
staleness-versus-synchronization tradeoff objective-dependent?

We answer these questions on three regimes: controlled quadratic objectives,
nonlinear logistic regression, and an $11.1$M-parameter NanoChat-style
Transformer language-modeling task \citep{karpathy2025nanochat, karpathy2022nanogpt}. The first two allow exact full-objective
measurement, while the Transformer experiment tests whether the observed
optimization behavior persists beyond synthetic finite-sum objectives.

\paragraph{Experimental objectives.}
We compare \algname{PD}, \algname{RPD}, and \algname{LocalSGD} on synthetic quadratic and logistic-regression finite-sum objectives whose full losses can be evaluated exactly. Each method is tuned separately, and \algname{PD}/\algname{LocalSGD} are run by replaying static 1F1B timelines in a discrete-event simulator. Objective definitions, seeds, learning-rate grids, simulator budgets, simulator scope, and compute resources are given in Appendix~\ref{sec:experiments-extra}.

\paragraph{\algname{PD} versus \algname{RPD}.}
The left panel of \Cref{fig:gpd-validation} shows that \algname{RPD} closely tracks deterministic \algname{PD} on the quadratic objective when instantiated with the delay scale predicted by \Cref{prop:pipedream-delay}, namely \(\delta=\delta_{\mathrm{PD}}\). This supports the use of \algname{RPD} as a tractable proxy for the dominant optimization effect of 1F1B execution: stage-wise stale model versions.

\paragraph{Scaling with the number of stages.}
The right panel of \Cref{fig:gpd-validation} studies logistic regression under a fixed simulator-time budget while varying $S$. This is an empirical wall-clock-proxy experiment rather than a direct corollary of \Cref{thm:gpd}, whose comparison unit is the block-update budget $K$. For \algname{PD} and \algname{LocalSGD}, one simulator tick is one row of the static pipeline timeline; for \algname{RPD}, the same clock determines the number of block updates by counting how many backward operations \algname{PD} completes within the matched timeline at that value of $S$. Thus increasing $S$ can give \algname{RPD}/\algname{PD} more block updates within the same simulated-time horizon, while also increasing the delay scale. Even with this parallel-work advantage, stale-gradient pipeline methods deteriorate as $S$ grows, consistent with the $\delta_{\mathrm{PD}}=\Theta(S^2)$ delay law and the $\delta^2$ term in the convergence bound. \algname{LocalSGD} scales more favorably in this experiment.

\paragraph{\algname{PD} versus \algname{LocalSGD}.}
The experiments isolate optimization effects rather than full systems performance. We use synthetic quadratic and logistic-regression objectives with exactly measurable losses, and a NanoChat \citep{karpathy2025nanochat, karpathy2022nanogpt} character-level language-modeling task on Tiny Shakespeare \citep{karpathy2015charrnn} whose $11{,}123{,}200$-parameter causal Transformer is split across $S=16$ stages. All methods are tuned separately and replay static 1F1B timelines; details are in Appendix~\ref{sec:experiments-extra}. \Cref{fig:gpd-validation} shows that \algname{RPD} tracks deterministic \algname{PD} when instantiated with $\delta=\delta_{\mathrm{PD}}$, and that stale-gradient methods deteriorate as $S$ grows under a fixed simulator-time budget. \Cref{fig:pd-vs-localsgd} shows that \algname{PD} is better on the quadratic and NanoChat tasks, while \algname{LocalSGD} is better on logistic regression.

\paragraph{Results summary.}
Overall, the experiments support the theoretical picture while showing that the practical tradeoff is objective-dependent. On the random quadratic, \algname{RPD} with the \algname{PD}-predicted delay closely follows deterministic \algname{PD}, suggesting that the stale-block abstraction captures the dominant optimization effect of weight stashing. On logistic regression, increasing the number of stages under a fixed simulator-time budget makes stale-gradient pipeline methods less effective, while \algname{LocalSGD} is more robust. At $S=16$, however, \algname{PD} still obtains the best tuned performance on the tridiagonal quadratic and NanoChat experiments, so synchronization is not uniformly preferable to stale pipeline execution.

\section{Conclusion}

We studied \algname{PD} as an optimization algorithm induced by a pipeline
schedule. Directly analyzing deterministic 1F1B \algname{PD} is difficult
because the active stages, microbatches, and stage-wise weight versions are
coupled through the schedule and weight stashing. To make this structure
tractable, we introduced \algname{RPD}, a randomized stale block-SGD proxy that
preserves the key optimization feature of \algname{PD}: gradients are evaluated
on models assembled from stale stage-local parameter versions. For this proxy,
we proved a nonconvex convergence guarantee and connected it back to real 1F1B
\algname{PD} by showing that the induced worst-case delay scales as
\(\delta_{\mathrm{PD}} = S^2 - \nicefrac{S}{2} + O(1)\). Substituting this
delay into the \algname{RPD} theorem exposes a \(\Theta(\gamma^2S^4)\)
stale-read term, or \(\Theta(S^4/K)\) after tuned stepsize substitution,
suggesting that deeper pipelines can incur a rapidly growing
optimization cost.

We also compared this stale-version mechanism with \algname{LocalSGD}, which
avoids PipeDream-style cross-stage staleness but introduces synchronization
bubbles and local-replica drift. Our experiments support this tradeoff view:
\algname{PD} is stronger on the quadratic objective and on the NanoChat-style
language-modeling task, while \algname{LocalSGD} is stronger on logistic
regression at larger numbers of stages. A natural next step is to analyze
deterministic \algname{PD} directly, incorporating the exact 1F1B scheduling
pattern and its deterministic staleness indices into the convergence argument.

\begin{ack}
    The research reported in this publication was supported by funding from King Abdullah University of Science and Technology (KAUST): i) KAUST Baseline Research Scheme, ii) CRG Grant ORFS-CRG12-2024-6460, and iii) Center of Excellence for Generative AI, under award number 5940.
\end{ack}

\bibliographystyle{plainnat}
\bibliography{references}


\clearpage
\appendix

\section{Table of Frequently Used Notation} \label{sec:notation}

\begin{table}[H]
\caption{Notation table}
\label{tab:notation_table}
\centering
\small
\setlength{\tabcolsep}{3pt}
\begin{tabularx}{\linewidth}{@{}r c X@{}}
\toprule
    $d$ & -- & Total dimension of the model parameters. \\
    $M$ & -- & Number of finite-sum components, corresponding to data mini-batches in our experiments. \\
    $S$ & -- & Number of pipeline stages (or partitioned blocks of the model). \\
    $d_s$ & -- & Dimension of the $s^{\text{th}}$ block of the model, such that $\sum_{s=1}^S d_s = d$. \\
    $w$ & -- & Full model parameter vector, $w \in \R^d$. \\
    $w^{(s)}$ & -- & The $s^{\text{th}}$ block of the model weights, corresponding to the $s^{\text{th}}$ stage, $w^{(s)} \in \R^{d_s}$. \\
    $f(w)$ & -- & Global objective function to minimize. \\
    $f_m(w)$ & -- & Loss function evaluated on the $m^{\text{th}}$ data batch. \\
    $f_\star$ & -- & Global lower bound of the objective function $f$. \\
    $\Delta_0$ & $\eqdef$ & Initial objective gap; for \algname{RPD}/\algname{PD}, $\Delta_0=f(w_0)-f_\star$, and for \algname{LocalSGD}, $\Delta_0=f(\bar w_0)-f_\star$. \\
    $k$ & -- & Global iteration counter (for block updates). \\
    $K$ & -- & Block-update budget. For \algname{RPD} and \algname{PD}, $K$ counts stage/block updates directly. For \algname{LocalSGD}, $K=RST$, where $T$ is the number of logical local steps per replica. \\
    $s_k, m_k$ & -- & Active stage index and batch index sampled uniformly at iteration $k$, after the delay vector is selected. \\
    $j_k^{(s)}$ & -- & The global history index whose stage-$s$ block is used in the stale read at iteration $k$. \\
    $z_k$ & $\eqdef$ & $\bigl(w_{j_k^{(1)}}^{(1)},\dots,w_{j_k^{(S)}}^{(S)}\bigr)$ -- Stale model read at iteration $k$. \\
    $\widehat{z}_k$ & -- & Extrapolated stale model used in the \algname{NAG-RPD}. \\
    $\delta$ & -- & Upper bound on the stage-wise staleness/delay, such that $k - j_k^{(s)} \le \delta$. \\
    $\delta_{\mathrm{PD}}$ & -- & Exact worst-case global-history delay of the steady-state \algname{PD} 1F1B schedule. \\
    $\delta_k^{(s)}$ & -- & Exact global-history delay of stale block $s$ at backward-update event $k$ for a \algname{PD} 1F1B schedule. \\
    $a_k$ & -- & Active backward stage at the $k^{\text{th}}$ \algname{PD} backward-update event. \\
    $\bar \delta$ & -- & Average global-history delay over the steady-state execution for a \algname{PD} 1F1B schedule. \\
    $\gamma$ & -- & Stepsize (learning rate). \\
    $L$ & -- & Smoothness constant of the objective function $f$. \\
    $G$ & -- & Trajectory bound on the norm of the block gradients used in the analysis. \\
    $\mu$ & -- & Strong convexity or Polyak--\L ojasiewicz (PL) condition parameter. \\
    $\alpha$ & -- & Prediction/momentum parameter used in \algname{NAG-RPD}. \\
    $U_s$ & -- & Canonical embedding matrix from the block space $\R^{d_s}$ into the full space $\R^d$. \\
    $g_k$ & $\eqdef$ & $U_{s_k}\nabla_{w^{(s_k)}} f_{m_k}(z_k)$ -- Full-space update vector at iteration $k$. \\
    $R$ & -- & Number of independent replicas in \algname{LocalSGD}; in the experiments we use $R=S$. \\
    $T$ & -- & Number of logical local SGD steps performed by each replica. Block-update budget $K=RST$. \\
    $H$ & -- & Synchronization period in \algname{LocalSGD}; replicas are averaged every $H$ local steps. \\
    $q$ & -- & Logical local-step index for \algname{LocalSGD}. \\
    $r$ & -- & Replica index, $r \in \{1,\dots,R\}$. \\
    $m_{q,r}$ & -- & Mini-batch/component index sampled by replica $r$ at local step $q$. \\
    $w_{q,r}$ & -- & Full model of replica $r$ after $q$ local steps. \\
    $w_{q,r}^{(s)}$ & -- & Stage-$s$ block of replica $r$ after $q$ local steps. \\
    $\bar w_q$ & $\eqdef$ & $\frac{1}{R}\sum_{r=1}^R w_{q,r}$ -- Averaged logical model after $q$ local steps per replica. \\
    $B$ & -- & Uniform second-moment bound for stochastic gradients in the \algname{LocalSGD} analysis. \\
    $\sigma^2$ & -- & Uniform variance bound for stochastic gradients in the \algname{LocalSGD} analysis. \\
\bottomrule
\end{tabularx}
\end{table}

\clearpage

\section{Convergence of RPD}
\label{sec:gpd-proof}

We will analyze \algname{RPD} (\Cref{alg:gpd}) under a random pair sampling assumption. The original \algname{PD} schedule is deterministic and couples stage order, mini-batch order, and delay vectors through the 1F1B timeline. In this appendix we analyze the randomized abstraction used in the main text, where the delay vector is selected first and then the active stage and mini-batch are sampled uniformly. This abstraction is designed to isolate the effect of bounded stale block gradients while avoiding the combinatorial complexity of the exact scheduler.

We assume that the delay vector used at iteration $k$ is chosen before the
current active stage and mini-batch. We then sample the pair
\[
(s_k,m_k)\in \{1,\dots,S\}\times\{1,\dots,M\}
\]
randomly.

\begin{assumption}[Uniform random pair sampling after delay selection]
\label{ass:gpd_random_pair}
Let $\mathcal F_k$ denote the sigma-field generated by all randomness before the
current stage/mini-batch draw, including the current delay vector $j_k$ and
therefore the stale point $z_k$. For every $k\ge 0$, conditioned on
$\mathcal F_k$, the pair $(s_k,m_k)$ is sampled uniformly from
\[
\{1,\dots,S\}\times\{1,\dots,M\},
\]
and is independent of all previous randomness not already contained in
$\mathcal F_k$.
\end{assumption}

\begin{assumption}[Bounded staleness]
\label{ass:gpd_staleness_random}
There exists an integer $\delta\ge 0$ such that for every $k\ge 0$,
\[
z_k=\bigl(w_{j_k^{(1)}}^{(1)},\dots,w_{j_k^{(S)}}^{(S)}\bigr),
\qquad
j_k^{(s)}\le k,
\qquad
k-j_k^{(s)}\le \delta
\quad \forall s\in\{1,\dots,S\}.
\]
\end{assumption}

\subsection{Assumptions on the problem}

\begin{assumption}[Lower boundedness]
\label{ass:gpd_lower_random}
The function $f$ is bounded from below:
\[
f(w)\ge f_\star
\qquad \forall w\in\R^d.
\]
\end{assumption}

\begin{assumption}[$L$-smoothness]
\label{ass:gpd_Lsmooth_random}
The function $f:\R^d\to\R$ is differentiable and $L$-smooth:
\[
\|\nabla f(x)-\nabla f(y)\|\le L\|x-y\|
\qquad \forall x,y\in\R^d.
\]
\end{assumption}

\begin{assumption}[Trajectory-bounded block gradients]
\label{ass:gpd_bounded_block_random}
There exists $G\ge 0$ such that
\[
\|\nabla_{w^{(s)}} f_m(z_k)\|\le G
\qquad \forall k\ge 0,\ \forall s\in\{1,\dots,S\},\ \forall m\in\{1,\dots,M\},
\]
almost surely along the trajectory generated by the algorithm.
\end{assumption}

\subsection{Analysis}
Recall that the \algname{RPD} update is
\[
w_{k+1}^{(s_k)}
=
w_k^{(s_k)}-\gamma\,\nabla_{w^{(s_k)}} f_{m_k}(z_k),
\qquad
w_{k+1}^{(r)}=w_k^{(r)} \ \ \forall r\neq s_k.
\]

For convenience, define the full-space update vector
\[
g_k
\eqdef
U_{s_k}\nabla_{w^{(s_k)}} f_{m_k}(z_k),
\]
where $U_s:\R^{d_s}\to\R^d$ is the canonical embedding of block $s$ into the full space.
Then the update can be written compactly as
\[
w_{k+1}=w_k-\gamma g_k.
\]

\Needspace{8\baselineskip}
\begin{lemma}[Unbiasedness of the \algname{RPD} update]
\label{lem:gpd_unbiased_direction}Let \Cref{ass:gpd_random_pair} (uniform random pair sampling) hold. Then
\[
\Exp{S g_k\mid \mathcal F_k}
=
 \nabla f(z_k),
\]
where $\mathcal F_k$ is the sigma-field in \Cref{ass:gpd_random_pair}.
\end{lemma}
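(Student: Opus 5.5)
The plan is to compute the conditional expectation directly, using that $z_k$ is $\mathcal F_k$-measurable. By \Cref{ass:gpd_random_pair}, conditional on $\mathcal F_k$ the pair $(s_k,m_k)$ is uniform on $\{1,\dots,S\}\times\{1,\dots,M\}$. The stale point $z_k$ is determined by $j_k$ and the past iterates, so it is fixed under this conditioning. Hence the random vector $g_k=U_{s_k}\nabla_{w^{(s_k)}} f_{m_k}(z_k)$ takes finitely many values. Each value $U_s\nabla_{w^{(s)}} f_m(z_k)$ occurs with conditional probability $1/(SM)$, and each is $\mathcal F_k$-measurable. This gives
\[
\Exp{g_k\mid\mathcal F_k}=\frac{1}{SM}\sum_{s=1}^{S}\sum_{m=1}^{M}U_s\nabla_{w^{(s)}} f_m(z_k).
\]

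Next I will carry out the inner sum over $m$. Differentiation is linear, so $\frac1M\sum_m\nabla_{w^{(s)}} f_m(z_k)=\nabla_{w^{(s)}} f(z_k)$ by the definition of $f$ in \eqref{eq:objective}. Then I will reassemble the blocks. The canonical embeddings satisfy $\sum_{s=1}^S U_sU_s^\top=I_d$, and the block gradient $\nabla_{w^{(s)}} f(z)$ equals $U_s^\top\nabla f(z)$. Therefore $\sum_{s}U_s\nabla_{w^{(s)}} f(z_k)=\nabla f(z_k)$. Combining these steps gives $\Exp{g_k\mid\mathcal F_k}=\frac1S\nabla f(z_k)$, and multiplying by $S$ yields the claim.

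The only delicate point is the measurability justification. I must check that $z_k$ does not depend on $(s_k,m_k)$, which holds because the delay vector, and hence $z_k$, is selected before the draw. I must also check that the conditional expectation of a function of a uniformly distributed pair and an $\mathcal F_k$-measurable point is the average of that function over the pair. This follows from the independence clause in \Cref{ass:gpd_random_pair} via the standard freezing lemma for conditional expectations. Integrability is automatic, since the sum is finite. Alternatively it follows from \Cref{ass:gpd_bounded_block_random}, although the argument does not need it.
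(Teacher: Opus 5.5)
Your proposal is correct and follows the paper's proof step for step: with $z_k$ fixed under $\mathcal F_k$, you average $U_s\nabla_{w^{(s)}} f_m(z_k)$ over the uniform pair, collapse the $m$-sum to $\nabla_{w^{(s)}} f(z_k)$, and reassemble the blocks into $\nabla f(z_k)$. Your measurability argument and the identity $\sum_s U_sU_s^\top = I_d$ only make explicit what the paper leaves implicit; one caveat is that a finite sum gives an a.s.\ finite conditional expectation rather than integrability of $g_k$, but the paper is equally silent on this and the bounded-gradient assumption settles it.
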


\begin{proof}
Conditioned on $\mathcal F_k$, the stale point $z_k$ is fixed. Hence
\begin{align*}
\Exp{g_k\mid \mathcal F_k}
&=
\frac{1}{SM}
\sum_{s=1}^S\sum_{m=1}^M
U_s \nabla_{w^{(s)}} f_m(z_k) \\
&=
\frac{1}{S}
\sum_{s=1}^S
U_s \left( \frac1M\sum_{m=1}^M \nabla_{w^{(s)}} f_m(z_k) \right) \\
&=
\frac1S
\sum_{s=1}^S
U_s \nabla_{w^{(s)}} f(z_k)
=
\frac1S \nabla f(z_k).
\end{align*}
\end{proof}

\begin{lemma}[Distance between current and stale models]
\label{lem:rpd-stale-distance}Let \Cref{ass:gpd_staleness_random} ($\delta$-bounded staleness) and \Cref{ass:gpd_bounded_block_random} (trajectory-bounded block gradients) hold. Then for every $k\ge 0$,
\[
\|w_k-z_k\|\le \gamma\delta G.
\]
\end{lemma}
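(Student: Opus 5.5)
The plan is to compare $w_k$ and $z_k$ block by block, telescoping each block along the iterations between the stale read and the present. Each step moves a single block by at most $\gamma G$. The key point is that the total number of moves, summed over all blocks, in the last $\delta$ iterations is at most $\delta$, because each iteration updates exactly one block.

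First, fix $k$ and a stage $s$, and write $j=j_k^{(s)}$. By \Cref{ass:gpd_staleness_random}, $k-\delta\le j\le k$. By the update rule \eqref{eq:gpd-update}, the only iterations that change block $s$ are those $t$ with $s_t=s$. Telescoping gives
\[
w_k^{(s)}-w_{j}^{(s)} \;=\; -\gamma\sum_{t=j}^{k-1}\mathbbm{1}[s_t=s]\,\nabla_{w^{(s)}} f_{m_t}(z_t).
\]
Define $n_s \eqdef \#\{t\in\{j_k^{(s)},\dots,k-1\}: s_t=s\}$. The triangle inequality and \Cref{ass:gpd_bounded_block_random} then give $\|w_k^{(s)}-z_k^{(s)}\|\le \gamma G\, n_s$ almost surely.

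Second, I combine the blocks through the Euclidean structure:
\[
\|w_k-z_k\|^2=\sum_{s=1}^S\|w_k^{(s)}-z_k^{(s)}\|^2\le \gamma^2G^2\sum_{s=1}^S n_s^2\le \gamma^2G^2\Bigl(\sum_{s=1}^S n_s\Bigr)^2 .
\]
The last inequality holds because the $n_s$ are nonnegative.

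The remaining step, which is the only non-routine one, is to show $\sum_s n_s\le\delta$. A naive per-block bound $n_s\le\delta$ would give only $\sqrt{S}\,\gamma\delta G$, so it is not enough. Instead, note that every window $\{j_k^{(s)},\dots,k-1\}$ is contained in the common window $W=\{\max(k-\delta,0),\dots,k-1\}$. Hence $n_s\le N_s\eqdef\#\{t\in W: s_t=s\}$. Each $t\in W$ has exactly one active stage $s_t$, so the sets counted by $N_1,\dots,N_S$ partition $W$. This gives $\sum_s n_s\le\sum_s N_s=|W|\le\delta$. Taking square roots yields $\|w_k-z_k\|\le\gamma\delta G$. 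The case $k=0$ (or $\delta=0$) is trivial, since then $z_k=w_k$.
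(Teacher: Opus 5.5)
Your proof is correct and rests on the same fact as the paper's: $z_k$ differs from $w_k$ only by at most $\delta$ single-block increments, each of norm at most $\gamma G$. The bookkeeping differs, though.

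The paper telescopes in time. It bounds $\norm{w_{t+1}-w_t}\le\gamma G$ and applies the triangle inequality to $w_k-z_k$, viewed as a sum of increments over the last $\delta$ iterations, which gives $\gamma\delta G$ in one line. That step tacitly uses two facts. First, only the increments with $t\ge j_k^{(s_t)}$ enter, each in full or not at all, since each lives in one block. Second, the lower summation index should be $\max(k-\delta,0)$. Your counts $n_s$ and common window $W$ make both points explicit.

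You instead telescope block by block and recombine through $\norm{w_k-z_k}^2=\sum_s\norm{w_k^{(s)}-z_k^{(s)}}^2$. This costs the extra relaxation $\sum_s n_s^2\le(\sum_s n_s)^2$. In exchange, your argument passes through the sharper intermediate bound $\norm{w_k-z_k}\le\gamma G\bigl(\sum_s N_s^2\bigr)^{1/2}$, which the time-wise triangle inequality discards.

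For the almost-sure statement the relaxation is harmless, since all $\delta$ updates may hit one block. In expectation it is not harmless. Under the uniform-sampling assumption the $s_t$ are i.i.d.\ uniform, so each $N_s$ is binomial and $\E\sum_s N_s^2\le\delta+\delta^2/S$. The convergence proof only needs $\E\norm{\nabla f(w_k)-\nabla f(z_k)}^2$. Your decomposition would therefore allow replacing $\delta^2$ by $\delta+\delta^2/S$ in the stale-read term, which is of order $S^3$ rather than $S^4$ when $\delta=\Theta(S^2)$.
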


\begin{proof}
At each iteration, only one block is updated, so
\[
\|w_{t+1}-w_t\|
=
\gamma \|g_t\|
=
\gamma \|\nabla_{w^{(s_t)}} f_{m_t}(z_t)\|
\le \gamma G.
\]
Now $z_k$ is assembled from block values that are at most $\delta$ iterations old. Hence $w_k-z_k$ consists only of block increments accumulated during the last $\delta$ iterations. Therefore
\[
\|w_k-z_k\|
\le
\sum_{t=k-\delta}^{k-1}\|w_{t+1}-w_t\|
\le
\sum_{t=k-\delta}^{k-1}\gamma G
=
\gamma\delta G.
\]
\end{proof}

We can now proceed to the main theorem.

\Needspace{16\baselineskip}
\begin{theorem}[Convergence of \algname{RPD}]
\label{thm:rpd-random-nonconvex-app}Let \Cref{ass:gpd_random_pair} (uniform random pair sampling after delay selection), \Cref{ass:gpd_lower_random} (lower boundedness), \Cref{ass:gpd_Lsmooth_random} ($L$-smoothness), \Cref{ass:gpd_staleness_random} ($\delta$-bounded staleness), \Cref{ass:gpd_bounded_block_random} (trajectory-bounded block gradients) hold.
Assume
\[
0<\gamma\le \frac1L.
\]
Then for every integer $K\ge 1$,
\begin{align}
\frac1K\sum_{k=0}^{K-1}\Exp{\|\nabla f(w_k)\|^2}
&\le
\frac{2S\bigl(f(w_0)-f_\star\bigr)}{\gamma K}
+
\gamma SL G^2
+
\gamma^2 L^2\delta^2 G^2.
\label{eq:gpd_random_nonconvex_rate}
\end{align}
\end{theorem}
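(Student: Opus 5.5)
We follow the standard descent-lemma argument for stale SGD, using the compact form $w_{k+1}=w_k-\gamma g_k$ together with \Cref{lem:gpd_unbiased_direction} and \Cref{lem:rpd-stale-distance}. Since $f$ is $L$-smooth (\Cref{ass:gpd_Lsmooth_random}),
\[
f(w_{k+1})\le f(w_k)-\gamma\inner{\nabla f(w_k)}{g_k}+\frac{L\gamma^2}{2}\norm{g_k}^2 .
\]
By \Cref{ass:gpd_bounded_block_random}, $\norm{g_k}\le G$ almost surely. We condition on $\mathcal F_k$; then $w_k$ and $z_k$ are fixed, and \Cref{lem:gpd_unbiased_direction} gives $\Exp{g_k\mid\mathcal F_k}=\frac1S\nabla f(z_k)$. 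Hence
\[
\Exp{f(w_{k+1})\mid\mathcal F_k}\le f(w_k)-\frac{\gamma}{S}\inner{\nabla f(w_k)}{\nabla f(z_k)}+\frac{L\gamma^2G^2}{2}.
\]

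The key step is the treatment of the cross term. We use the polarization identity
\[
\inner{a}{b}=\tfrac12\norm{a}^2+\tfrac12\norm{b}^2-\tfrac12\norm{a-b}^2,
\]
drop the nonnegative term $\frac12\norm{\nabla f(z_k)}^2$, and bound the remainder by smoothness and \Cref{lem:rpd-stale-distance}:
\[
\norm{\nabla f(w_k)-\nabla f(z_k)}^2\le L^2\norm{w_k-z_k}^2\le L^2\gamma^2\delta^2G^2 .
\]
This yields
\[
\Exp{f(w_{k+1})\mid\mathcal F_k}\le f(w_k)-\frac{\gamma}{2S}\norm{\nabla f(w_k)}^2+\frac{\gamma}{2S}L^2\gamma^2\delta^2G^2+\frac{L\gamma^2G^2}{2}.
\]

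We then take total expectations, sum over $k=0,\dots,K-1$, telescope, and use $f(w_K)\ge f_\star$ (\Cref{ass:gpd_lower_random}). Multiplying through by $2S/(\gamma K)$ gives
\[
\frac1K\sum_{k=0}^{K-1}\Exp{\norm{\nabla f(w_k)}^2}\le\frac{2S\Delta_0}{\gamma K}+\gamma SLG^2+\gamma^2L^2\delta^2G^2,
\]
which is exactly \eqref{eq:gpd_random_nonconvex_rate}. The constants work out: the factor $2S/\gamma$ times $\frac{\gamma}{2S}L^2\gamma^2\delta^2G^2$ gives the third term with no $S$, and $2S/\gamma$ times $L\gamma^2G^2/2$ gives $\gamma SLG^2$. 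The assumption $\gamma\le 1/L$ is never used in this argument; it is only a convenient regime for the theorem.

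The main obstacle is measurability. We must make sure that $w_k$ is $\mathcal F_k$-measurable, so that both $\nabla f(w_k)$ and $z_k$ can be pulled out of the conditional expectation. This holds because $w_k$ is determined by the draws $(s_t,m_t)$ for $t<k$, and $\mathcal F_k$ contains all randomness before the current draw. We must also verify that \Cref{lem:rpd-stale-distance} applies at early iterations $k<\delta$. There, only the $k$ past steps contribute, so $\norm{w_k-z_k}\le\gamma kG\le\gamma\delta G$ and the bound still holds.
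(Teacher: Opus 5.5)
Your proposal is correct and matches the paper's proof step by step: the descent lemma, conditional unbiasedness via \Cref{lem:gpd_unbiased_direction}, the polarization bound on the cross term combined with smoothness and \Cref{lem:rpd-stale-distance}, then telescoping and the lower bound $f_\star$. Your remarks on the $\mathcal F_k$-measurability of $w_k$ and on the early iterations $k<\delta$ are small clarifications the paper leaves implicit. You are also right that $\gamma\le 1/L$ is never used in the argument.
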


\begin{proof}
By $L$-smoothness of $f$,
\[
f(w_{k+1})
\le
f(w_k)
+
\inner{\nabla f(w_k)}{w_{k+1}-w_k}
+
\frac{L}{2}\|w_{k+1}-w_k\|^2.
\]
Since $w_{k+1}=w_k-\gamma g_k$, this becomes
\[
f(w_{k+1})
\le
f(w_k)
-
\gamma\inner{\nabla f(w_k)}{g_k}
+
\frac{\gamma^2 L}{2}\|g_k\|^2.
\]
Taking conditional expectation with respect to $\mathcal F_k$, and using \Cref{lem:gpd_unbiased_direction},
\begin{equation} \label{eq:09u9f8yp9fod-fg}
\Exp{f(w_{k+1})\mid\mathcal F_k}
\le
f(w_k)
-
\frac{\gamma}{S}\inner{\nabla f(w_k)}{\nabla f(z_k)}
+
\frac{\gamma^2 L}{2}\Exp{\|g_k\|^2\mid\mathcal F_k}.
\end{equation}
In view of \Cref{ass:gpd_bounded_block_random} (trajectory-bounded block gradients), we have
$
\|g_k\|
=
\|\nabla_{w^{(s_k)}} f_{m_k}(z_k)\|
\le G,
$
and hence
\[
\Exp{\|g_k\|^2\mid\mathcal F_k} \le G^2.
\]
Plugging this estimate into \eqref{eq:09u9f8yp9fod-fg}, we get
\begin{equation}\label{eq:nihp-98fydp98ufd}
\Exp{ f(w_{k+1})\mid\mathcal F_k}
\le
f(w_k)
-
\frac{\gamma}{S}\inner{\nabla f(w_k)}{\nabla f(z_k)}
+
\frac{\gamma^2 LG^2}{2}.
\end{equation}

Let us now use the elementary inequality
\[
\inner{a}{b}
=
\frac12\|a\|^2+\frac12\|b\|^2-\frac12\|a-b\|^2
\ge
\frac12\|a\|^2-\frac12\|a-b\|^2,
\]
 with
$
a=\nabla f(w_k)$ and $
b=\nabla f(z_k)
$ to estimate the inner product in \eqref{eq:nihp-98fydp98ufd}.
This gives
\[
-\frac{\gamma}{S}\inner{\nabla f(w_k)}{\nabla f(z_k)}
\le
-\frac{\gamma}{2S}\|\nabla f(w_k)\|^2
+
\frac{\gamma}{2S}\|\nabla f(w_k)-\nabla f(z_k)\|^2.
\]
Plugging this estimate into \eqref{eq:nihp-98fydp98ufd}, we get
\begin{equation}\label{eq:-9=fp9du0f-d9gf}
\Exp{f(w_{k+1})\mid\mathcal F_k}
\le
f(w_k)
-
\frac{\gamma}{2S}\|\nabla f(w_k)\|^2
+
\frac{\gamma}{2S}\|\nabla f(w_k)-\nabla f(z_k)\|^2
+
\frac{\gamma^2 LG^2}{2}.
\end{equation}
Since $f$ is $L$-smooth, we know that
$
\|\nabla f(w_k)-\nabla f(z_k)\|
\le
L\|w_k-z_k\|.
$
Further, using \Cref{lem:rpd-stale-distance}, we get
$
\|w_k-z_k\|\le \gamma\delta G.
$
Therefore,
\[
\|\nabla f(w_k)-\nabla f(z_k)\|^2
\le
 \gamma^2 A, \qquad A\eqdef L^2\delta^2 G^2.
\]

Substituting this into \eqref{eq:-9=fp9du0f-d9gf} yields
\[
\Exp{f(w_{k+1})\mid\mathcal F_k}
\le
f(w_k)
-
\frac{\gamma}{2S}\|\nabla f(w_k)\|^2
+
\frac{\gamma^3 A}{2S}
+
\frac{\gamma^2 L G^2}{2}.
\]
Taking expectation and  applying the tower property, we get
\[
\Exp{f(w_{k+1})}
\le
\Exp{f(w_k)}
-
\frac{\gamma}{2S}\Exp{\|\nabla f(w_k)\|^2}
+
\frac{\gamma^3 A}{2S}
+
\frac{\gamma^2 L G^2}{2}.
\]
Summing from $k=0$ to $K-1$ gives
\begin{align*}
\Exp{f(w_K)}
&\le
f(w_0)
-
\frac{\gamma}{2S}
\sum_{k=0}^{K-1}\Exp{\|\nabla f(w_k)\|^2}
+
K\frac{\gamma^3 A}{2S}
+
K\frac{\gamma^2 L G^2}{2}.
\end{align*}
Since we know that $\Exp{f(w_K)}\ge f_\star$, we obtain
\[
\frac{\gamma}{2S}
\sum_{k=0}^{K-1}\Exp{\|\nabla f(w_k)\|^2}
\le
f(w_0)-f_\star
+
K\frac{\gamma^3 A}{2S}
+
K\frac{\gamma^2 L G^2}{2}.
\]
Multiply by $2S/(\gamma K)$ to get
\[
\frac1K\sum_{k=0}^{K-1}\Exp{\|\nabla f(w_k)\|^2}
\le
\frac{2S(f(w_0)-f_\star)}{\gamma K}
+
\gamma^2 A
+
\gamma SL G^2,
\]
which establishes \eqref{eq:gpd_random_nonconvex_rate}.
\end{proof}

\begin{corollary}[Stepsize choice]
\label{cor:rpd-tuned-rate-app}Let the assumptions of \Cref{thm:rpd-random-nonconvex-app} hold, fix $K\geq 1$, and choose the stepsize
\[
\gamma
=
\min\!\left\{
\frac1L,\
\sqrt{\frac{2(f(w_0)-f_\star)}{L G^2 K}}
\right\}.
\]
Then
\[
\frac1K\sum_{k=0}^{K-1}\Exp{\|\nabla f(w_k)\|^2}
=
\cO\!\left(\frac{\delta^2}{K}+\frac{S}{\sqrt K}\right).
\]
For fixed $S$ and fixed delay bound $\delta$, this recovers the standard $\cO(K^{-1/2})$ stationarity rate of gradient-type methods for smooth nonconvex optimization.
\end{corollary}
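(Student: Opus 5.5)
We plug the stepsize into the bound \eqref{eq:gpd_random_nonconvex_rate} of \Cref{thm:rpd-random-nonconvex-app} and split into two cases according to which argument of the minimum is active. Throughout, write $\Delta_0=f(w_0)-f_\star$ and $\gamma_1\eqdef\sqrt{2\Delta_0/(LG^2K)}$, so that $\gamma=\min\{1/L,\gamma_1\}$. This $\gamma$ satisfies $0<\gamma\le 1/L$, provided $\Delta_0>0$ and $G>0$. If $\Delta_0=0$ or $G=0$, the bound degenerates; for example, $G=0$ forces $\nabla f(z_k)=0$ and hence a zero left-hand side. We handle these degenerate cases separately or exclude them. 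The constants $L$, $G$ and $\Delta_0$ are treated as absolute and absorbed into $\cO(\cdot)$.

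\emph{Case $\gamma=\gamma_1\le 1/L$.} The first two terms balance up to the factor $S$:
\[
\frac{2S\Delta_0}{\gamma_1 K}=S\sqrt{\frac{2\Delta_0 LG^2}{K}},
\qquad
\gamma_1 SLG^2=S\sqrt{\frac{2\Delta_0 LG^2}{K}}.
\]
Both are $\cO(S/\sqrt K)$. The stale-read term is
\[
\gamma_1^2L^2\delta^2G^2=\frac{2\Delta_0 L\delta^2}{K}=\cO\!\left(\frac{\delta^2}{K}\right).
\]

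\emph{Case $\gamma=1/L<\gamma_1$.} In this case $K<2\Delta_0 L/G^2$, so $K$ is bounded by a constant. The first term is
\[
\frac{2S\Delta_0 L}{K}=\frac{2S\Delta_0 L}{\sqrt K\,\sqrt K}.
\]
Using $\sqrt K\ge 1$, this is $\cO(S/\sqrt K)$. More carefully, $1/\sqrt K>G/\sqrt{2\Delta_0 L}$ in this case, so $1/K<1/\sqrt K\cdot\sqrt{2\Delta_0L}/G$ and the term is $\cO(S/\sqrt K)$ with constants depending on $\Delta_0,L,G$. The second term satisfies $SG^2\le S\gamma_1 LG^2$, which is the same as in the first case, hence also $\cO(S/\sqrt K)$. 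The third term is $\delta^2G^2\le\gamma_1^2L^2\delta^2G^2=\cO(\delta^2/K)$.

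Combining the two cases gives the stated $\cO(\delta^2/K+S/\sqrt K)$. For fixed $S$ and $\delta$, the $K^{-1/2}$ term dominates, which is the claimed standard rate. The only mildly delicate point is the second case: we must bound the $1/K$ term by $1/\sqrt K$ using the case condition rather than naively, and this is where the hidden constants pick up a dependence on $\Delta_0$, $L$ and $G$.
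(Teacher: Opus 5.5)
Your proposal is correct and is essentially the intended argument: the paper states this corollary without a written proof, and the natural route is to substitute the stepsize into the bound of \Cref{thm:rpd-random-nonconvex-app}, which your two-case computation does correctly (including the use of $\gamma\le\gamma_1$ to control the second and third terms when $\gamma=1/L$). One small correction: in that case the naive bound $1/K\le 1/\sqrt K$, which follows from $K\ge 1$, already gives a first term of at most $2S\Delta_0L/\sqrt K$, so your closing claim that the case condition must be used is unnecessary, and the ``more careful'' variant only introduces a spurious factor $1/G$.
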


\clearpage

\section{Scaling of delay for PD}
\label{sec:delta-scale-proof}

\begin{figure}[t]
  \centering
  \includegraphics[width=0.95\linewidth]{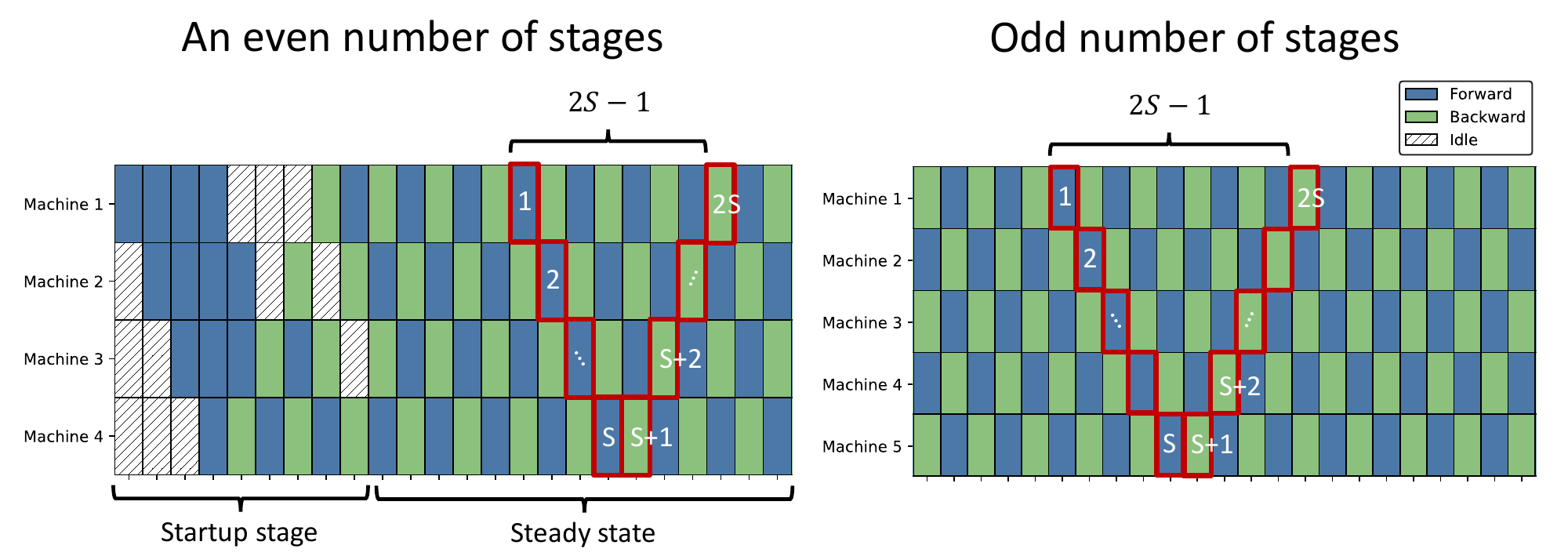}
  \caption{Steady-state counting argument for the exact global-history delay in \algname{PD} 1F1B. The red staircase highlights the forward-to-backward traversal that determines a stale-read window. For a stale block from stage $r$ used at an active backward stage $a$, the window length is $2S-r-a+1$; the worst case is $r=a=1$, with length $2S-1$. In each steady-state pipeline slot, approximately half of the stages perform backward updates: exactly $S/2$ when $S$ is even (left), and alternating between $(S-1)/2$ and $(S+1)/2$ when $S$ is odd (right).}
  \label{fig:proof_delay_1f1b}
\end{figure}

The global-history embedding used in this appendix is the same one used by the simulator. A pipeline slot is first generated as a row of simultaneous stage operations; every backward operation in that row then appends one block update to the global history in stage order. The counting below is exact for this slot-batched linearization. Changing only the within-slot ordering can move individual finite-$S$ delays by lower-order terms, but it does not change the quadratic delay law that drives the convergence comparison.

\begin{lemma}[Steady-state scaling of the exact \algname{PD} delay]
\label{lem:pipedream-delta-scaling}Consider a \algname{PD} {\rm 1F1B} pipeline with $S$ stages, embedded into the global-history \algname{RPD} model. For each steady-state backward update iteration $k$ and each stage $s\in\{1,\dots,S\}$, let $\delta_k^{(s)}$ denote the exact global-history delay of stage $s$ at iteration $k$. Define
\[
\delta_{\mathrm{PD}} \;\eqdef\; \max_{k\in\{1,\dots,K\}} \max_{r\in\{1,\dots,S\}} \delta_k^{(r)},
\qquad
\bar \delta \;\eqdef\; \frac{1}{KS}\sum_{k=1}^K\sum_{r=1}^S \delta_k^{(r)},
\]
where $K$ is the number of steady-state backward-update iterations under consideration.

Under the slot-counting convention used in the global-history embedding, the delay window for stale block $r$ at active stage $a_k$ has length
\[
L_{r,a_k}=2S-r-a_k+1.
\]
Consequently, over a full steady-state period,
\[
\delta_{\mathrm{PD}} = S^2-\frac{S}{2}+O(1),
\qquad
\bar \delta = \frac{S^2}{2}+O(1).
\]
For even $S$, the leading formula is exact:
\[
\delta_k^{(r)}=\frac{S}{2}\bigl(2S-r-a_k+1\bigr),
\qquad
\delta_{\mathrm{PD}}=S^2-\frac{S}{2},
\qquad
\bar\delta=\frac{S^2}{2}.
\]
\end{lemma}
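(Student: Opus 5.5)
I will prove the lemma by explicitly describing the steady-state 1F1B timeline and counting global-history indices. First I fix coordinates for the steady state. In the 1F1B schedule with $S$ stages, microbatch $i$ runs its forward pass at stage $s$ in slot $t^{F}_{i,s}=2i+s-1$ (up to a global offset). It then runs its backward pass at stage $s$ in slot $t^{B}_{i,s}=2i+2S-s$. This makes forward and backward slots at each stage alternate with the parity of $t+s$. From this parity structure it follows that in a steady-state slot $t$ the backward operations occur exactly at the stages $s$ with $s\equiv t \pmod 2$ (or the complementary parity, depending on the offset). Hence each slot contributes $S/2$ backward updates when $S$ is even, and alternately $\lfloor S/2\rfloor$ and $\lceil S/2\rceil$ when $S$ is odd. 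This is the per-slot update count shown in \Cref{fig:proof_delay_1f1b}, and I will verify it directly from the formulas for $t^F$ and $t^B$.

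Second, I identify the stale read. Consider a backward update at active stage $a$ for microbatch $i$. By weight stashing, stage $r$ contributes the block version it held when it ran the forward pass of microbatch $i$, at slot $t^F_{i,r}$. The current version is the one at slot $t^B_{i,a}$. The number of slots separating these two events is
\[
t^B_{i,a}-t^F_{i,r}=(2i+2S-a)-(2i+r-1)=2S-r-a+1=L_{r,a}.
\]
The precise endpoint convention (whether the stashing slot's own updates and the earlier in-slot updates of the current slot are counted) must be fixed to match the slot-batched linearization. The delay $\delta_k^{(r)}$ is then the number of global-history entries appended between the two reads. This equals the sum of backward counts over the $L_{r,a}$ slots, with the boundary slots handled by the in-stage-order convention. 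I expect this endpoint bookkeeping to be the main obstacle. I will resolve it by checking $S=2$ and $S=4$ by hand against the simulator convention, choosing the endpoints so that exactly $L_{r,a}$ full slots are counted. Any residual disagreement from within-slot ordering is $O(S)$ entries in one boundary slot. That affects only lower-order terms and is absorbed in the $O(1)$ after I confirm that for even $S$ it cancels exactly.

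Third, the counts. For even $S$, every slot contributes $S/2$ entries, so $\delta_k^{(r)}=\tfrac S2(2S-r-a_k+1)$. Maximizing at $r=a_k=1$, which is attained since stage $1$ is an active backward stage in steady state, gives $\delta_{\mathrm{PD}}=\tfrac S2(2S-1)=S^2-\tfrac S2$. For the average, $(r,a_k)$ range over the steady-state backward events. Over a full period each stage is active equally often, so $a_k$ is uniform on $\{1,\dots,S\}$, as is $r$. Then $\mathbb{E}[2S-r-a+1]=2S-(S+1)+1=S$, giving $\bar\delta=S^2/2$. For odd $S$, the alternating counts $(S\pm1)/2$ sum over $L$ consecutive slots to $\tfrac{S}{2}L\pm\tfrac12$. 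Hence $\delta_k^{(r)}=\tfrac S2 L_{r,a_k}+O(1)$, and the same maximization and averaging yield $\delta_{\mathrm{PD}}=S^2-\tfrac S2+O(1)$ and $\bar\delta=\tfrac{S^2}{2}+O(1)$.
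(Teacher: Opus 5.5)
Your proposal is correct and follows essentially the paper's argument: the window length $L_{r,a}=2S-r-a+1$, the per-slot backward count $S/2$ (alternating $(S\pm1)/2$ for odd $S$), the delay as the sum of these counts over the window, the maximum at $r=a=1$, and uniform averaging over $(r,a)$; your explicit coordinates $t^F_{i,s}=2i+s-1$ and $t^B_{i,s}=2i+2S-s$ derive both the window length and the parity pattern that the paper reads off its figure. One caution: your fallback claim that an $O(S)$ within-slot discrepancy is ``absorbed in the $O(1)$'' is false, because such offsets (up to about $S/2$ events in the boundary slot) can change the $-S/2$ term of $\delta_{\mathrm{PD}}$ or add a $\Theta(S)$ term to $\bar\delta$, so the stated lower-order terms and the exact even-$S$ formulas hold only under the convention the paper fixes, namely that the delay is exactly the sum of backward counts over the $L_{r,a}$ window slots.
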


\begin{proof}
We work in the global-history \algname{RPD} embedding of \algname{PD}. Thus, every backward update appends one new entry to the global history, and the delay of a stale block is the number of global-history increments between the forward use of that block and the backward event that consumes the resulting stale model.

Fix a steady-state backward event whose active stage is $a\in\{1,\dots,S\}$, and fix a stale block index $r\in\{1,\dots,S\}$. The microbatch whose gradient is being applied at stage $a$ saw block $r$ during its forward pass on stage $r$. From that forward use, the microbatch must traverse the remaining forward stages $r+1,\dots,S$ and then traverse the backward path from stage $S$ down to stage $a$. Including the active backward slot, this gives the window length
\[
L_{r,a}
=
(S-r)+(S-a+1)
=
2S-r-a+1.
\]
The longest window occurs at $r=a=1$ and has length $2S-1$.

Let $b_t$ denote the number of backward operations performed in steady-state pipeline slot $t$. Since the global history increases by one after each backward update, the delay is obtained by summing $b_t$ over the corresponding window of length $L_{r,a}$.

\medskip
\noindent
\textbf{Case 1: $S$ even.}
In steady state, exactly half of the stages execute backward and half execute forward in every pipeline slot; see the left panel of Figure~\ref{fig:proof_delay_1f1b}. Therefore $b_t=S/2$ for every steady-state slot $t$, and hence
\[
\delta_k^{(r)}
=
\frac{S}{2}L_{r,a_k}
=
\frac{S}{2}\bigl(2S-r-a_k+1\bigr).
\]
Taking $r=1$ and $a_k=1$ gives
\[
\delta_{\mathrm{PD}}
=
\frac{S}{2}(2S-1)
=
S^2-\frac{S}{2}.
\]

To compute the average delay over a full steady-state period, note that each active stage appears equally often. Thus
\begin{align*}
\bar \delta
&=
\frac{1}{S^2}\sum_{a=1}^S\sum_{r=1}^S
\frac{S}{2}\bigl(2S-r-a+1\bigr) \\
&=
\frac{S}{2}
\left(
2S+1-\frac{S+1}{2}-\frac{S+1}{2}
\right)
=
\frac{S^2}{2}.
\end{align*}

\medskip
\noindent
\textbf{Case 2: $S$ odd.}
In steady state, the number of backward operations per slot alternates between
\[
\Bigl\lfloor \frac{S}{2}\Bigr\rfloor = \frac{S-1}{2}
\qquad\text{and}\qquad
\Bigl\lceil \frac{S}{2}\Bigr\rceil = \frac{S+1}{2},
\]
as shown in the right panel of Figure~\ref{fig:proof_delay_1f1b}. Therefore, over any contiguous window of length $L$,
\[
\sum_{t=1}^{L} b_t = \frac{S}{2}L + \varepsilon,
\qquad |\varepsilon|\le \frac12.
\]
Applying this with $L=L_{r,a_k}$ yields
\[
\delta_k^{(r)}
=
\frac{S}{2}\bigl(2S-r-a_k+1\bigr)
+
\varepsilon_k^{(r)}, \quad |\varepsilon_k^{(r)}|\le \frac12.
\]
The maximum again occurs at $r=1$ and $a_k=1$, up to the bounded parity correction, so
\[
\delta_{\mathrm{PD}}
=
S^2-\frac{S}{2}+O(1).
\]
Averaging the leading term over stale blocks and active stages gives $S^2/2$, and the averaged parity corrections remain bounded. Hence
\[
\bar\delta=\frac{S^2}{2}+O(1).
\]

Combining the two parity cases proves the result.
\end{proof}

\begin{corollary}[Quadratic scaling of \algname{PD} staleness]
\label{cor:pipedream-delta-scaling}Consider \algname{PD} {\rm 1F1B} embedded into the global-history \algname{RPD} model. Let
\[
\delta_{\mathrm{PD}} \;\eqdef\; \max_{k\in\{1,\dots,K\}} \max_{s\in\{1,\dots,S\}} \delta_k^{(s)},
\qquad
\bar \delta \;\eqdef\; \frac{1}{KS}\sum_{k=1}^K \sum_{s=1}^S \delta_k^{(s)},
\]
where $\delta_k^{(s)}$ is the exact global-history delay of stale block $s$ at backward-update event $k$, and $K$ is the number of steady-state backward-update iterations under consideration. Then
\[
\delta_{\mathrm{PD}} = S^2-\frac{S}{2}+O(1),
\qquad
\bar \delta = \frac{S^2}{2}+O(1).
\]
\end{corollary}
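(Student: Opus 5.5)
This corollary restates the conclusion of \Cref{lem:pipedream-delta-scaling}, so I would prove it by invoking that lemma directly rather than redoing the timeline combinatorics. The first step is to check that the objects coincide. The embedding is the same slot-batched global-history linearization of steady-state 1F1B \algname{PD}. Here $\delta_k^{(s)}$ is the exact global-history delay of stale block $s$ at backward event $k$, and $\delta_{\mathrm{PD}}$ and $\bar\delta$ are the same maximum and average over $k\in\{1,\dots,K\}$ and stages. The only cosmetic difference is that the lemma uses $r$ for the stale-block index and the corollary uses $s$; this is a renaming of a dummy variable.

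Next I would restate the two ingredients the lemma supplies. The first is the window length $L_{r,a}=2S-r-a+1$, which is maximized at $r=a=1$ with value $2S-1$. The second is the per-slot backward count $b_t$. It equals $S/2$ exactly when $S$ is even. When $S$ is odd it alternates between $(S\pm1)/2$, so any contiguous window of length $L$ contains $\frac{S}{2}L+\varepsilon$ backward updates with $|\varepsilon|\le\tfrac12$.

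With these, the maximum is $\delta_{\mathrm{PD}}=\frac{S}{2}(2S-1)+O(1)=S^2-\frac S2+O(1)$. Averaging the leading term $\frac S2(2S-r-a+1)$ uniformly over $r,a\in\{1,\dots,S\}$ gives
\[
\frac S2\Bigl(2S+1-(S+1)\Bigr)=\frac{S^2}{2}.
\]
The bounded parity corrections average to something of size $O(1)$.

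The one point needing care is the uniform average over $(r,a)$. The definition of $\bar\delta$ averages over backward events $k$, and the lemma's computation assumes that each active stage $a$ occurs equally often among these events. This holds exactly when $K$ covers an integer number of steady-state periods. For general $K$, the incomplete final period changes the average by at most $O(S^2/K)$, since each individual delay is $O(S^2)$ and at most $O(S)$ events are left over. I would therefore state the corollary over full periods, as the lemma does, or absorb this edge effect into the $O(1)$ term once $K\gtrsim S^2$. This bookkeeping is the only real obstacle; everything else is quoted from \Cref{lem:pipedream-delta-scaling}.
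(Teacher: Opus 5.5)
Your proposal is correct and matches the paper's proof, which likewise just invokes Lemma~\ref{lem:pipedream-delta-scaling} and combines the exact even-$S$ case with the odd-$S$ parity correction. There is one slip in your optional edge-effect remark. A leftover partial period contains up to $S-1$ backward events, each contributing $S$ delays that differ from $S^2/2$ by $O(S^2)$, so after dividing by $KS$ the perturbation of $\bar\delta$ is $O(S^3/K)$, not $O(S^2/K)$; this order is actually attained, e.g.\ when the leftover events are the low-index active stages $1,3,5,\dots$ of a single slot. Absorbing it into the $O(1)$ term therefore needs $K\gtrsim S^3$, whereas your first option, restricting to full steady-state periods as the lemma does, avoids the issue entirely.
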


\begin{proof}
This follows directly from Lemma~\ref{lem:pipedream-delta-scaling}. For even $S$, the identities
\[
\delta_{\mathrm{PD}} = S^2-\frac{S}{2},
\qquad
\bar \delta = \frac{S^2}{2}
\]
hold exactly. For odd $S$, Lemma~\ref{lem:pipedream-delta-scaling} gives
\[
\delta_{\mathrm{PD}} = S^2-\frac{S}{2}+O(1),
\qquad
\bar \delta = \frac{S^2}{2}+O(1).
\]
Combining the two cases yields the claim.
\end{proof}

\begin{figure}[t]
  \centering
  \includegraphics[width=0.95\linewidth]{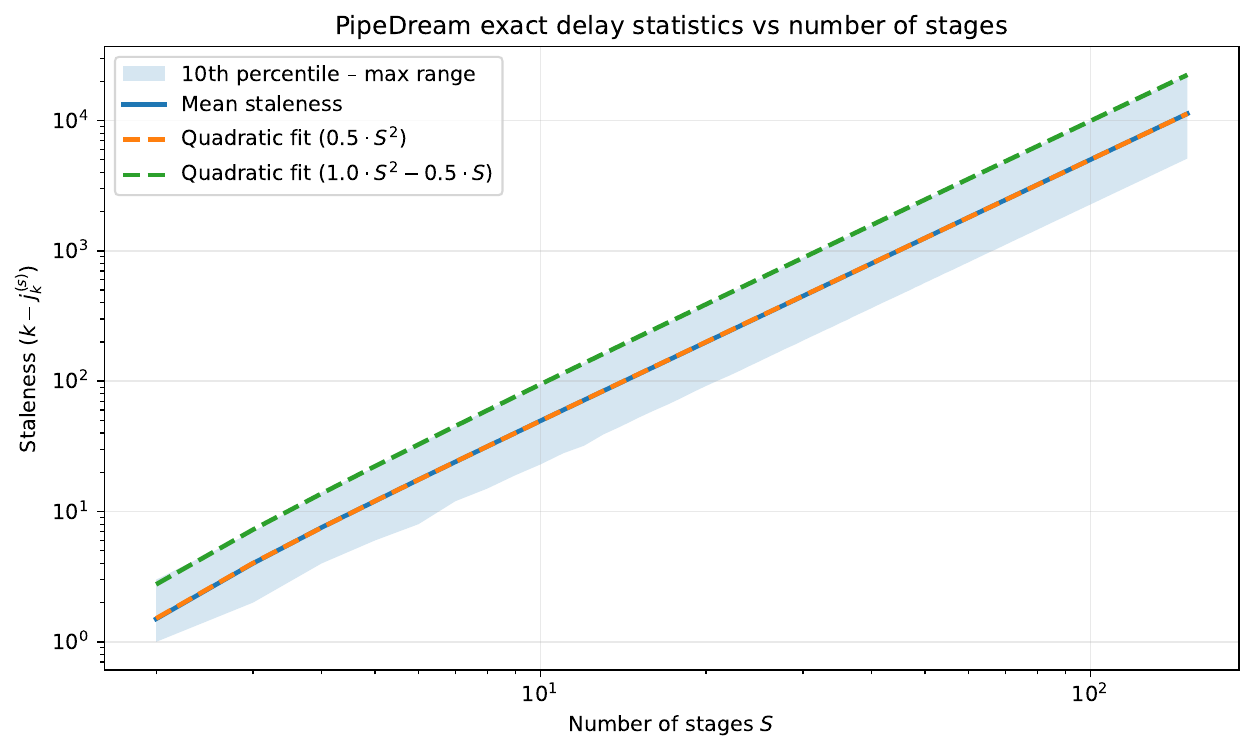}
  \caption{Steady-state statistics of the exact global-history delay $\delta_k^{(s)}$ induced by \algname{PD} {\rm 1F1B} as a function of the number of pipeline stages $S$. For each value of $S$, we generated the corresponding {\rm 1F1B} timeline, reconstructed the exact delays $\delta_k^{(s)}$ directly from the timeline, discarded the first half of the resulting delay sequence to remove the startup transient, and then computed statistics on the remaining steady-state part. The solid blue curve shows the empirical mean delay $\bar \delta$, while the shaded region spans from the $10$th percentile to the maximum delay. Both statistics exhibit quadratic growth in $S$. The orange dashed curve corresponds to the fit $\bar \delta(S)\approx \tfrac12 S^2$, while the green dashed curve corresponds to $\delta(S)\approx S^2-\tfrac12 S$, where $\delta=\max_k\max_s \delta_k^{(s)}$.}
  \label{fig:pipedream_delay_scaling}
\end{figure}

To validate the asymptotic scaling predicted by Lemma~\ref{lem:pipedream-delta-scaling}, we computed the exact global-history delays induced by \algname{PD} {\rm 1F1B} over a wide range of pipeline depths. For each stage count $S$, we first generated the corresponding {\rm 1F1B} schedule and then reconstructed the exact delay matrix directly from the timeline. More precisely, for every steady-state backward-update iteration $k$ and every stage $s$, we computed
\[
\delta_k^{(s)} = k - j_k^{(s)},
\]
where $j_k^{(s)}$ is the global history index of the snapshot used by the corresponding microbatch during its forward pass on stage $s$. Since our goal is to measure the steady-state behavior rather than the startup transient, we discarded the first half of the resulting exact-delay sequence and analyzed only the remaining part, where the pipeline has entered its periodic {\rm 1F1B} regime.

Figure~\ref{fig:pipedream_delay_scaling} reports the resulting statistics as functions of $S$. The solid blue curve plots the empirical average delay
\[
\bar \delta = \frac{1}{KS}\sum_{k=1}^K\sum_{s=1}^S \delta_k^{(s)},
\]
while the shaded region shows the spread from the $10$th percentile to the largest observed delay. The two dashed curves correspond to the quadratic laws predicted by the steady-state counting argument:
\[
\bar \delta(S) \approx \frac{S^2}{2} = \bar \delta,
\qquad
\delta(S) \approx S^2 - \frac{S}{2} = \delta_{\mathrm{PD}},
\]
where
\[
\delta_{\mathrm{PD}} = \max_{k\in\{1,\dots,K\}} \max_{s\in\{1,\dots,S\}} \delta_k^{(s)}.
\]
The agreement is remarkably tight. In particular, the average delay grows as $\tfrac12 S^2$, while the worst-case delay grows as $S^2-\tfrac12 S$, confirming that the exact global-history staleness induced by \algname{PD} scales quadratically with the number of stages.

This observation is important for the \algname{RPD} analysis. In our theoretical convergence bound, the staleness parameter $\delta$ appears explicitly, so a quadratic growth of $\delta$ with $S$ implies a corresponding degradation of the theoretical guarantee as the pipeline depth increases. Thus, even though \algname{PD} {\rm 1F1B} achieves high hardware utilization, in the global-history \algname{RPD} abstraction it induces a staleness regime that becomes progressively less favorable for optimization as $S$ grows. This motivates the search for alternative scheduling policies that preserve good utilization while reducing the effective global-history delay.

\clearpage

\section{Derivation of the LocalSGD rate under the block-update budget}
\label{sec:localsgd-rate-derivation}

We study the following variation of the \algname{LocalSGD} algorithm for solving (\ref{eq:objective}):

\begin{algorithm}[H]
\caption{Stage-distributed \algname{LocalSGD}}
\label{alg:localsgd-1f1b}
\begin{algorithmic}[1]
\STATE Input: stepsize $\gamma>0$, number of stages $S$, number of replicas $R$,
synchronization period $H$, initial model
$w_0=(w_0^{(1)},\dots,w_0^{(S)})$
\STATE Initialize replicas: $w_{0,r}^{(s)} = w_0^{(s)} \qquad \forall r\in\{1,\dots,R\},\quad \forall s\in\{1,\dots,S\}$.
\FOR{$q=0,1,\dots$}
    \FOR{each replica $r=1,\dots,R$}
        \STATE Sample $m_{q,r}\in\{1,\dots,M\}$.
        \STATE Run a 1F1B pass for replica $r$ on $m_{q,r}$ across $S$ stages.
        \STATE Update replica $r$: $w_{q+1,r}^{(s)}=w_{q,r}^{(s)}-\gamma \nabla_{w^{(s)}} f_{m_{q,r}}(w_{q,r}) \qquad \forall s\in\{1,\dots,S\}.$
    \ENDFOR
    \IF{$q+1$ is divisible by $H$}
        \STATE Average weights between replicas: $\bar w_{q+1}^{(s)} = \frac1R\sum_{r=1}^R w_{q+1,r}^{(s)} \qquad \forall s\in\{1,\dots,S\}.$
        \STATE Synchronize replicas: $w_{q+1,r}^{(s)} \leftarrow \bar w_{q+1}^{(s)} \qquad \forall r\in\{1,\dots,R\},\quad\forall s\in\{1,\dots,S\}.$
    \ENDIF
\ENDFOR
\end{algorithmic}
\end{algorithm}

We use the standard nonconvex \algname{LocalSGD} analysis of
\citet{yu2019parallel} in the homogeneous-worker case. To avoid overloading the
finite-sum components $f_m$ from \eqref{eq:objective}, denote the objective
assigned to replica $r$ in the standard LocalSGD formulation by $\phi_r$.
In our setting,
\[
\phi_1=\cdots=\phi_R=f,
\]
where $R$ is the number of replicas, i.e., the number of parallel local
trajectories. Thus, the worker count in \citet{yu2019parallel} corresponds to
our $R$, not to the finite-sum size $M$ in \eqref{eq:objective}.

Let
\[
\bar w_q
:=
\frac1R\sum_{r=1}^R w_{q,r}
\]
be the averaged logical model after $q$ local steps per replica. Assume that
$f$ is lower bounded by $f_\star$ and $L$-smooth. For uniformly sampled
mini-batches $m_{q,r}\in\{1,\dots,M\}$, assume
\[
\E\!\left[
\nabla f_{m_{q,r}}(w_{q,r})
\mid w_{q,r}
\right]
=
\nabla f(w_{q,r}),
\]
and
\[
\E\!\left[
\left\|
\nabla f_{m_{q,r}}(w_{q,r})
-
\nabla f(w_{q,r})
\right\|^2
\right]
\le \sigma^2,
\qquad
\E\!\left[
\left\|
\nabla f_{m_{q,r}}(w_{q,r})
\right\|^2
\right]
\le B^2.
\]
We use $B$ for the second-moment bound in order to avoid confusion with the
block-gradient bound $G$ used in the \algname{RPD} analysis.

The standard periodic-averaging \algname{LocalSGD} guarantee gives that, for
$0<\gamma\le 1/L$ and $T$ local steps per replica,
\begin{equation}
\label{eq:localsgd-rate-T-app}
\frac{1}{T}\sum_{q=0}^{T-1}
\E\bigl[\|\nabla f(\bar w_q)\|^2\bigr]
\le
\frac{2\bigl(f(\bar w_0)-f_\star\bigr)}{\gamma T}
+
4\gamma^2H^2B^2L^2
+
\frac{L\gamma\sigma^2}{R}.
\end{equation}

In our staged implementation, one logical local SGD step updates all $S$ blocks
of one replica. Since there are $R$ replicas, $T$ local steps per replica
correspond to
\[
K=RST
\]
stage/block updates. Equivalently,
\[
T=\frac{K}{RS}.
\]
Substituting this relation into \eqref{eq:localsgd-rate-T-app} gives the bound
under the common block-update budget:
\begin{equation}
\label{eq:localsgd-rate-K-app}
\frac{RS}{K}\sum_{q=0}^{K/(RS)-1}
\E\bigl[\|\nabla f(\bar w_q)\|^2\bigr]
\le
\frac{2RS\bigl(f(\bar w_0)-f_\star\bigr)}{\gamma K}
+
4\gamma^2H^2B^2L^2
+
\frac{L\gamma\sigma^2}{R}.
\end{equation}

Using the stepsize choice from the corollary of \citet{yu2019parallel},
\[
\gamma=\frac{\sqrt{R}}{L\sqrt{T}},
\]
truncated at $1/L$ when necessary, balances the optimization term and the
stochastic-variance term in \eqref{eq:localsgd-rate-T-app} and yields
\[
\frac{1}{T}\sum_{q=0}^{T-1}
\E\bigl[\|\nabla f(\bar w_q)\|^2\bigr]
=
\cO\!\left(
\frac{1}{\sqrt{RT}}
+
\frac{RH^2}{T}
\right).
\]
Their linear-speedup corollary additionally requires
$H\le T^{1/4}/R^{3/4}$ to make the local-drift contribution no larger than the
$1/\sqrt{RT}$ term. We do not impose this restriction in the displayed
comparison; instead we retain the explicit $H$-dependent term.

Finally, using $T=K/(RS)$, we obtain
\[
\frac{1}{\sqrt{RT}}
=
\sqrt{\frac{S}{K}},
\qquad
\frac{RH^2}{T}
=
\frac{R^2SH^2}{K}.
\]
Therefore,
\begin{equation}
\label{eq:localsgd-tuned-rate-K-app}
\frac{RS}{K}\sum_{q=0}^{K/(RS)-1}
\E\bigl[\|\nabla f(\bar w_q)\|^2\bigr]
=
\cO\!\left(
\sqrt{\frac{S}{K}}
+
\frac{R^2SH^2}{K}
\right).
\end{equation}
For the experimental choice $R=S$, this reduces to
\[
\frac{RS}{K}\sum_{q=0}^{K/(RS)-1}
\E\bigl[\|\nabla f(\bar w_q)\|^2\bigr]
=
\cO\!\left(
\sqrt{\frac{S}{K}}
+
\frac{S^3H^2}{K}
\right).
\]

\clearpage

\section{Discrete 1F1B Scheduling Simulators}
\label{sec:1f1b-scheduling}

This appendix describes the discrete-event schedulers used to generate the
pipeline timelines in our experiments. The purpose of these schedulers is to
separate the systems-level ordering of forward and backward computations from
the optimization dynamics. We first generate a static timeline of pipeline
actions and then replay this timeline during optimization. This makes the
comparison between \algname{PD} and \algname{LocalSGD} independent of low-level
runtime effects such as kernel launch overheads, communication latency, or
device-specific implementation details.

We consider a straight pipeline with $S$ stages. Time is discrete. At every
time step, each stage can execute at most one action:
\[
a_{t,s} \in \{\emptyset\} \cup \{F_m : m \in \mathcal M\}
                    \cup \{B_m : m \in \mathcal M\},
\]
where $F_m$ denotes the forward computation of microbatch $m$, $B_m$ denotes
the corresponding backward computation, and $\emptyset$ denotes an idle slot.
The complete schedule is therefore a timeline
\[
\mathcal T = \bigl(a_{t,1},\dots,a_{t,S}\bigr)_{t\ge 0}.
\]
In all schedulers, data movement is strictly causal: a computation that depends
on another stage may start only at a later discrete time step, not in the same
slot.

\subsection{PD 1F1B schedule}
\label{sec:pd-1f1b-schedule}

The \algname{PD} scheduler follows a one-forward-one-backward (1F1B) policy
after pipeline startup. The input stage admits at most $A$ active microbatches
into the pipeline, where $A$ is the number of outstanding active microbatches
allowed by the schedule. In our experiments we use the standard one-machine-per-
stage choice $A=S$, corresponding to the NOAM (number of outstanding active
microbatches) parameter in the original \algname{PD} implementation.

For each stage $s$ and microbatch $m$, we store two completion times:
\[
\tau^F_{s,m}, \qquad \tau^B_{s,m},
\]
where $\tau^F_{s,m}$ is the time at which the forward computation of microbatch
$m$ finishes at stage $s$, and $\tau^B_{s,m}$ is the time at which its backward
computation finishes at stage $s$. A value of $-1$ means that the corresponding
operation has not yet been completed.

At time $t$, a forward operation $F_m$ is ready at stage $s$ if
\[
\begin{cases}
\begin{aligned}[t]
& m \text{ is the next unlaunched microbatch,}\\
& \text{and the number of active microbatches is less than } A,
\end{aligned}
& \text{if } s=1,\\[0.5em]
\tau^F_{s-1,m} \neq -1 \text{ and } \tau^F_{s-1,m} < t,
& \text{if } s>1.
\end{cases}
\]
A backward operation $B_m$ is ready at stage $s$ if the forward computation has
already completed at that stage and either $s$ is the last stage or the backward
computation has already completed at the next stage:
\[
\tau^F_{s,m} \neq -1, \qquad \tau^F_{s,m}<t,
\]
and
\[
\begin{cases}
\text{true}, & s=S,\\
\tau^B_{s+1,m} \neq -1 \text{ and } \tau^B_{s+1,m}<t, & s<S.
\end{cases}
\]
The strict inequalities enforce that activations and gradients cannot propagate
through multiple stages in a single discrete slot.

Each stage begins in a startup phase. During startup, the stage prioritizes
forward work until the first backward operation becomes ready. Once a backward
operation becomes available, the stage enters steady state. In steady state, the
stage alternates between backward and forward work whenever both are ready. If
only one type of work is ready, it executes that available operation rather than
idling.

Equivalently, each stage stores a local preference variable
\[
p_s \in \{F,B\}.
\]
Initially $p_s=F$. After executing a backward operation, the stage sets
$p_s=F$; after executing a forward operation in steady state, it sets $p_s=B$.
Thus the local decision rule is:
\[
a_{t,s}
=
\begin{cases}
B_m, &
\begin{aligned}[t]
&\text{if backward work is ready and either}\\
&\text{the stage is not yet in steady state,}\\
& p_s = B,\ \text{or no forward work is ready},
\end{aligned}
\\[0.4em]
F_m, & \text{if forward work is ready},\\
\emptyset, & \text{otherwise}.
\end{cases}
\]
After all stages have selected their actions for time $t$, the completion-time
arrays are updated simultaneously. This simultaneous update is important: it
prevents one stage from using information produced by another stage in the same
time slot.

The active-microbatch counter is updated only at the input stage. A microbatch
becomes active when its forward computation at stage $1$ completes, and it stops
being active when its backward computation at stage $1$ completes. The schedule
terminates when the final microbatch has completed its backward computation at
stage $1$.

\subsection{LocalSGD 1F1B schedule}
\label{sec:localsgd-1f1b-schedule}

The \algname{LocalSGD} scheduler uses the same stage-level 1F1B execution rule,
but the admissible forward operations are constrained by the local-training and
synchronization structure. Instead of one pipeline trajectory, we run $R$
independent local trajectories, which we call runs or replicas. Each
synchronization round consists of $H$ local steps for each replica. Therefore one
round contains
\[
RH
\]
pipeline microbatch jobs.

We index the scheduled pipeline jobs by a single integer $m=0,1,\dots,N-1$.
This index can be decoded as
\[
q(m) = \left\lfloor \frac{m}{RH} \right\rfloor,
\qquad
\ell(m) =
\left\lfloor \frac{m \bmod RH}{R} \right\rfloor,
\qquad
r(m) = m \bmod R,
\]
where $q(m)$ is the synchronization round, $\ell(m)\in\{0,\dots,H-1\}$ is the
local step inside that round, and $r(m)\in\{0,\dots,R-1\}$ is the replica index.
Thus the microbatches in a round are ordered as
\[
\begin{aligned}
&
(r=0,\ell=0),\dots,(r=R-1,\ell=0),
\\
&
(r=0,\ell=1),\dots,(r=R-1,\ell=1),
\dots,
(r=R-1,\ell=H-1).
\end{aligned}
\]

The readiness rule for backward operations is identical to the \algname{PD}
scheduler. The difference is in the readiness rule for forward operations.
A forward operation $F_m$ at stage $s$ must satisfy three conditions.

First, the usual pipeline dependency must hold:
\[
\begin{cases}
\text{true}, & s=1,\\
\tau^F_{s-1,m}\neq -1 \text{ and } \tau^F_{s-1,m}<t, & s>1.
\end{cases}
\]

Second, if $m$ is not the first local step of its replica inside the current
round, then the previous local step of the same replica must already have
completed its backward computation at the same stage. Since replicas are ordered
fastest within each local-step group, the previous local step of the same
replica has index $m-R$. Hence, for $\ell(m)>0$, we require
\[
\tau^B_{s,m-R}\neq -1,
\qquad
\tau^B_{s,m-R}<t.
\]
This condition enforces that a stage does not start the next local step of a
replica before its own block has been updated by the previous local step.

Third, if $m$ belongs to a new synchronization round, then the previous round
must be fully complete before any microbatch from the new round is launched. Let
$q(m)>0$. The last microbatch of the previous round is
\[
m_{\mathrm{last}}(q(m)-1) = q(m)RH - 1.
\]
Since the backward pass finishes at stage $1$, it is sufficient to check
\[
\tau^B_{1,\,q(m)RH-1} \neq -1,
\qquad
\tau^B_{1,\,q(m)RH-1}<t.
\]
This implements the global model-averaging barrier between \algname{LocalSGD} rounds.
Only after all replicas have completed their $H$ local steps is the averaged
model broadcast and the next round allowed to begin.

Once the forward and backward readiness sets have been constructed, the same
local 1F1B choice rule as in \algname{PD} is applied at every stage. Thus
\algname{LocalSGD} differs from \algname{PD} not in the local stage policy, but
in the admissibility constraints for launching forward work. These constraints
encode two optimization-level requirements: consecutive local steps of the same
replica must respect the stage-local update order, and different synchronization
rounds must be separated by a global averaging barrier.

\subsection{Comparison of the two timelines}

The two schedules share the same pipeline mechanics: forward computations move
from stage $1$ to stage $S$, backward computations move from stage $S$ to stage
$1$, and each stage follows a 1F1B preference once backward work becomes
available. The difference is the source of idle time.

In \algname{PD}, idle time appears mainly during startup and drain. Once the
pipeline is filled, the NOAM constraint allows many microbatches to remain active
simultaneously, so the schedule has high utilization. The cost is optimization
staleness: a backward computation may update a stage using weights and
activations generated many pipeline slots earlier.

In \algname{LocalSGD}, the schedule has additional idle regions at
synchronization boundaries. No microbatch from round $q+1$ is allowed to start
until all microbatches from round $q$ have completed their backward pass at the
input stage. This creates synchronization bubbles. However, the method avoids
the same long cross-stage weight-version staleness mechanism that appears in
\algname{PD}; instead, its optimization error comes from local replica drift
between averaging events.

This is the scheduling-level origin of the main tradeoff studied in the paper:
\algname{PD} maximizes pipeline utilization by accepting stale pipeline updates,
whereas \algname{LocalSGD} reduces stale-version effects by periodically paying
for synchronization barriers.

\begin{figure}[t]
  \centering
  \includegraphics[width=0.95\linewidth]{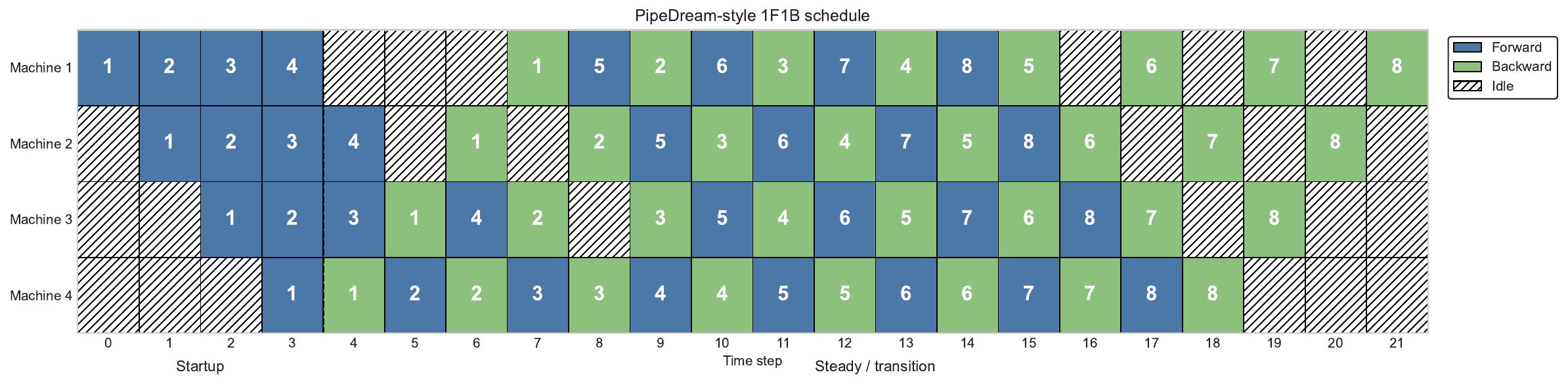}
  \caption{\algname{PD}-style 1F1B schedule for a 4-stage pipeline with 8 microbatches. Rows correspond to pipeline stages (machines), and columns correspond to discrete time steps. Each numbered cell indicates the microbatch being processed at that stage and time. The figure illustrates how \algname{PD} overlaps computation across stages to improve utilization after the initial fill phase.}
  \label{fig:pipedream-schedule}
\end{figure}

\clearpage

\section{Relationship between PD and RPD}

To understand the connection between \algname{PD} and our abstract \algname{RPD} method (\Cref{alg:gpd}), it is useful to distinguish between two notions of staleness that are easily conflated in implementation.

In \algname{PD}, the model is partitioned into $S$ stages, and each stage updates its own block of parameters asynchronously according to the pipeline schedule. When a microbatch performs its forward pass through stage $s$, the current weights of that stage are \emph{stashed}, so that the corresponding backward pass later uses exactly the same stage-local weights. Thus, each microbatch carries a stage-wise record of the parameter values that it encountered during the forward pass. The resulting stale model seen by a given backward event is therefore not arbitrary: it is the collection of stage-wise weights that were actually used when that same microbatch traversed the pipeline on the forward pass.

By contrast, in \algname{RPD}, the update rule is written in terms of a stale mixed model
\[
z_k = \bigl(w_{j_k^{(1)}}^{(1)}, \dots, w_{j_k^{(S)}}^{(S)}\bigr),
\]
where $j_k^{(s)} \le k$ specifies which historical version of block $s$ is read at iteration $k$. The crucial observation is that \algname{RPD} does not require all blocks to be read from the same past iterate. Different blocks may be taken from different history indices, which makes it expressive enough to represent stage-wise stale reads similar to those induced by pipeline parallelism.

\paragraph{Deterministic \algname{PD} as a structured stale-block instance.}
Consider the sequence of backward-update events in \algname{PD}, flattened in their actual execution order:
\[
e_k = (s_k, m_k), \qquad k = 0,1,2,\dots,
\]
where $s_k$ is the stage updated at the $k$-th backward event and $m_k$ is the corresponding microbatch. If, for each such event, one defines $z_k$ to be exactly the collection of stage-wise weights that microbatch $m_k$ saw on the forward pass in \algname{PD}, then the resulting \algname{RPD} update coincides with the \algname{PD} update. In this sense, \algname{PD} can be viewed as a structured special case of the stale mixed-model update template, in which the arriving stages, selected batches, and stale block reads are all determined by the pipeline schedule and by the forward-pass stashing mechanism.

This equivalence is at the level of the update template. It does not mean that deterministic \algname{PD} satisfies the uniform random stage/mini-batch sampling assumption used in \Cref{thm:gpd}; that theorem applies to the randomized abstraction, and the deterministic scheduler enters the paper through the delay calculation.

This viewpoint is important because it clarifies what must be matched in order for \algname{RPD} to reproduce \algname{PD} exactly. Matching only the order of stage updates is not sufficient. Matching only the batch order is not sufficient. Even matching a simple stage-dependent delay pattern is not sufficient. Exact equivalence requires that, for each backward event, the stale model $z_k$ contain the same stage-wise parameter values that the corresponding microbatch encountered on the forward pass in \algname{PD}.

\paragraph{Global history versus local staleness.}
Our implementation maintains a global history
\[
w_0, w_1, w_2, \dots,
\]
where each new iterate is created after a single block update. Since only one block changes per update, the $k$-th history element stores the full parameter vector after the $k$-th global block update. This global history should be distinguished from the \emph{local} version counters maintained by each stage in \algname{PD}.

This distinction leads to two different notions of delay.

First, one may define the \emph{local stage staleness} for microbatch $m$ and stage $s$ as
\[
\Delta^{\mathrm{local}}(m,s)
=
v_s^{\mathrm{current}} - v_s^{\mathrm{forward}}(m),
\]
where $v_s^{\mathrm{forward}}(m)$ is the local version of stage $s$ used by microbatch $m$ during its forward pass, and $v_s^{\mathrm{current}}$ is the current local version of stage $s$ at the time of the backward update. This is the natural stage-local notion of staleness in \algname{PD}.

Second, when representing \algname{PD} inside \algname{RPD}, one must work with a \emph{global history delay}. For the backward event $e_k=(s_k,m_k)$, let
\[
j_k^{(s)}
\]
denote the global history index at which microbatch $m_k$ saw stage $s$ during its forward pass. Then the exact delay needed by \algname{RPD} is
\[
\delta_k^{(s)} = k - j_k^{(s)},
\]
so that
\[
w_{j_k^{(s)}}^{(s)} = w_{k-\delta_k^{(s)}}^{(s)}.
\]
In other words, the delay used by \algname{RPD} is measured with respect to the \emph{global} sequence of block updates, not with respect to the local update count of stage $s$.

\paragraph{Why can the exact delays be large?}
A potentially confusing phenomenon is that the exact delay for a stage can be much larger than its local staleness, and can even be strictly positive for the final stage in steady state. This is not a contradiction. The reason is that the global history index advances after \emph{every} block update, regardless of which stage was updated. Thus, even if a particular stage has not changed much locally, many global history entries may have been inserted by updates to other stages.

For example, suppose a microbatch reaches the final stage on the forward pass, and before its corresponding backward event occurs, several other stages perform backward updates. Even if the final-stage weights themselves are nearly unchanged in a local sense, the required forward-time snapshot for that stage may now lie several entries back in the global history. Consequently, the exact global delay can be larger than zero, and in practice may be substantially larger than the number of local updates experienced by that stage.

This explains an empirical phenomenon observed in our experiments: in the steady regime, the exact delay vectors extracted from \algname{PD} may contain relatively large entries, even for stages that are intuitively ``fresh.'' Such values should not be interpreted as large local staleness of that stage; rather, they represent how far one must rewind the \emph{global} history in order to recover the exact forward-time block value used by the corresponding microbatch.

\paragraph{Steady-state structure.}
Once the pipeline reaches steady state, the exact delay vectors induced by \algname{PD} exhibit a periodic pattern. This is a direct consequence of the deterministic structure of the $1\mathrm{F}1\mathrm{B}$ schedule. For a fixed number of stages, the sequence of backward-update events repeats cyclically, and therefore the associated exact stale-read vectors repeat as well. This steady-state periodicity is useful both conceptually and practically: conceptually, it shows that pipeline-induced staleness is highly structured rather than arbitrary; practically, it provides a way to construct \algname{RPD} instances that reproduce \algname{PD} exactly or approximately by feeding in the corresponding schedule-induced batch order, stage order, and exact delay vectors.

\paragraph{Practical consequence.}
These observations suggest three levels of approximation when relating \algname{RPD} to \algname{PD}. At the coarsest level, one may use simple synthetic delay rules, such as stage-dependent offsets, to obtain a rough pipeline-inspired stale model. At an intermediate level, one may match the stage-update order and batch-selection order to those of \algname{PD}. At the finest level, one may additionally use the exact delay vectors extracted from a \algname{PD} run, in which case \algname{RPD} reproduces the same stale model seen by each backward event. This hierarchy provides a principled way to move from a mathematically convenient abstraction toward a faithful simulation of the original pipeline method.

\clearpage

\section{Experiments and setup}
\label{sec:experiments-extra}

\subsection{Compute Resources}
\label{sec:compute-resources}

We report the compute resources used to run the experiments in this paper. The synthetic experiments, including the quadratic objectives and logistic-regression objectives, were run on a MacBook Pro equipped with an Apple M1 Max chip and 64GB of RAM. These experiments were implemented in Python 3.12.

For the larger LLM experiments, we used a single NVIDIA A100 GPU with 40GB of VRAM. These experiments were also implemented in Python 3.12.

The synthetic experiments are lightweight and can be reproduced on a modern laptop or workstation with sufficient memory. The LLM experiments require GPU acceleration; in our setup, one NVIDIA A100 40GB GPU was sufficient.

\subsection{Objectives}

\paragraph{Experimental objectives.}
We use synthetic finite-sum objectives of the form \eqref{eq:objective}. In all cases, the parameter vector $w\in\R^d$ is partitioned into $S$ contiguous blocks, corresponding to the $S$ pipeline stages. The data are split into mini-batches, and each $f_m$ is the loss evaluated on the $m^{\text{th}}$ mini-batch.

\paragraph{Quadratic objective.}
Our first benchmark is a least-squares quadratic objective. We generate $X\in\R^{N\times d}$, a ground-truth vector $w^\star\in\R^d$, and targets
\[
y = Xw^\star
\]
with optional additive label noise. The objective is
\[
f(w)
=
\frac{1}{2N}\|Xw-y\|_2^2.
\]
Equivalently, this is a quadratic with Hessian
\[
A = \frac{1}{N}X^\top X.
\]
We use two variants. In the random quadratic, $X$ has i.i.d. standard Gaussian entries and $w^\star$ is also sampled from a standard Gaussian distribution. In the tridiagonal quadratic, we construct a Nesterov-style tridiagonal matrix
\[
T
=
\begin{pmatrix}
2 & -1 &        &        \\
-1 & 2 & -1     &        \\
   & \ddots & \ddots & \ddots \\
   &        & -1 & 2
\end{pmatrix},
\]
add a diagonal shift $A=T+\mu I$, and then construct $X$ so that
\[
\frac{1}{N}X^\top X = A
\]
holds exactly. When a target condition number is specified, $\mu$ is chosen so that the resulting matrix $A$ has that condition number; otherwise we use the default shift from the implementation. The optimal value $f_\star$ for this objective is computed by solving the corresponding least-squares problem.

The \algname{PD}/\algname{RPD} validation plot in \Cref{fig:rpd-validation-and-scaling} uses the random quadratic variant. The quadratic panel in \Cref{fig:pd-vs-localsgd} uses the Nesterov-style tridiagonal quadratic variant.

\paragraph{Logistic-regression objective.}
Our second benchmark is binary logistic regression. We sample $X\in\R^{N\times d}$ with i.i.d. standard Gaussian entries and generate a ground-truth vector $w^\star\in\R^d$. Labels are sampled according to
\[
\mathbb{P}(y_i=1\mid x_i)
=
\sigma(x_i^\top w^\star),
\qquad
\sigma(t)=\frac{1}{1+\exp(-t)}.
\]
The optimized objective is the regularized binary cross-entropy
\[
f(w)
=
\frac{1}{N}\sum_{i=1}^{N}
\left[
\max(x_i^\top w,0)
-
y_i x_i^\top w
+
\log\!\left(1+\exp(-|x_i^\top w|)\right)
\right]
+
\frac{\lambda}{2}\|w\|_2^2,
\]
where $\lambda\ge 0$ is the $\ell_2$-regularization parameter. The implementation uses the numerically stable form above for both full-objective evaluation and mini-batch losses. For plots reported in objective-gap form, the reference value $f_\star$ is computed numerically using L-BFGS-B with the exact gradient.

Unless stated otherwise, we use $d=512$ in the reported experiments. The number of stages $S$ is varied in the scaling experiments and fixed to the value indicated in each figure caption for the direct method comparisons. Each method is tuned separately for the corresponding objective and stage count.

\subsection{Weights stashing}

Once the static timeline has been generated, we replay it to simulate training. Each stage maintains its current parameter vector and a local version counter. When a forward operation for microbatch $m$ is executed at stage $i$, the stage uses the most recent locally available weights $w_i^{(t)}$. Immediately after this forward computation, that exact weight value and its version index are stored as part of the microbatch state for stage $i$. Later, when the matching backward operation for the same microbatch and stage is executed, the gradient is computed using the stashed version rather than the current version. After the gradient has been formed, the stage performs an SGD block update
\[
w^{(s_k)} \leftarrow w^{(s_k)} - \gamma \nabla_{w^{(s_k)}}\ell_b,
\]
and increments its local version counter.

This is exactly the \algname{PD} weight-stashing rule: forward uses the latest available stage-local weights, and backward for a given microbatch reuses the same stage-local version that was used during forward. Importantly, this guarantees consistency only within a stage. It does not force all stages to use the same global parameter version for the same microbatch. \algname{PD} calls the stronger cross-stage consistency mechanism vertical sync, but that is not part of its default semantics, and we did not include it in our initial simulator.

\subsection{What was recorded}

For reproducibility, we tracked four classes of quantities during replay.

First, we stored the full operation timeline, i.e. for each time step and each stage, whether the executed action was forward, backward, or idle, and for which microbatch. This timeline fully specifies the schedule.

Second, for every microbatch-stage pair, we stored the weight version used during forward and the weight version reused during backward. This allowed us to verify that the weight-stashing invariant held exactly:
\[
\text{forward\_version}(m,i) = \text{backward\_version}(m,i).
\]

Third, we measured stage-local staleness, defined as the difference between the current local version at backward time and the stashed version that had been used at forward time. This quantity captures how many local updates occurred at a stage between the forward and backward computations of the same microbatch.

Fourth, we tracked the full-dataset objective value over simulated pipeline time, as well as after each microbatch completed its backward pass at stage $1$. The first view measures optimization progress against a wall-clock proxy given by pipeline time steps; the second measures optimization progress against the number of completed training examples.

\subsection{Scope of the current simulator}

The simulator is intentionally minimal. We assume one worker per stage, ignore communication delay, and treat each forward and backward stage operation as occupying one discrete time slot. We also omit vertical sync. These choices are deliberate: the goal of the experiments is not to build a full systems simulator, but to isolate the effect of \algname{PD}'s scheduling and weight-stashing semantics on optimization in a setting where the ground-truth objective is simple and fully observable.

The simulator also does not charge memory as a separate axis in the plots. In particular, \algname{LocalSGD}'s $R$ logical replicas require additional model storage compared with the single \algname{PD} trajectory, and this extra memory is intentionally reported as a modeling difference rather than folded into the simulated-time budget.

\subsection{Reproducibility details for the synthetic experiments}
\label{sec:synthetic-repro-details}

All synthetic experiments use zero initialization and seed $0$ for the reported runs, with the deterministic training batch order generated from seed $1$. When a method has internal sampling randomness, the same reported seed is passed to that method. Unless stated otherwise, the learning rate is selected by a grid sweep and the plotted curve is the stable selected run for that method/configuration. The main-paper synthetic plots are single-seed runs and therefore do not display error bars; the sweep utilities in the repository also support multi-seed aggregation and standard-deviation summaries.

For the \algname{PD}/\algname{RPD} validation in \Cref{fig:rpd-validation-and-scaling}, we use the random quadratic objective with $S=8$, $d=512$, batch size $10$, $M=60$ data batches, and $5$ epochs, so the \algname{PD} timeline contains $300$ microbatches. The learning-rate grid for both \algname{PD} and \algname{RPD} is
\[
\{2^{-8},2^{-7},\dots,2^0\}.
\]
The delay sweep shown with \algname{RPD} uses
\[
\delta\in\{8,30,60,180,420\},
\]
where $\delta=60$ is the predicted steady-state \algname{PD} value $S^2-S/2$ for $S=8$.

For the fixed-simulator-time logistic-regression scaling experiment in \Cref{fig:fixed-time-scaling}, we use $d=512$, batch size $10$, $M=60$, $\lambda=10^{-4}$, gradient-noise standard deviation $0.5$, and stage counts
\[
S\in\{2,4,8,16,32,64,128\}.
\]
The common simulator-time target is $3684$ discrete pipeline ticks, obtained as six times the $S=8$, $300$-microbatch \algname{PD} timeline length used by the validation experiment. For each $S$, the \algname{PD} and \algname{LocalSGD} timelines use the smallest number of microbatches whose schedule reaches this target. \algname{RPD} is assigned the number of block updates equal to the number of backward operations completed by the matched \algname{PD} timeline. The learning-rate grids are $\{2^{-12},2^{-11},\dots,2^{-3}\}$ for \algname{PD}/\algname{RPD} and $\{2^{-8},2^{-7},\dots,2^2\}$ for \algname{LocalSGD}.
For this scaling plot, \algname{LocalSGD} uses $R=S$ and $H=5$.

For the $S=16$ \algname{PD} versus \algname{LocalSGD} panels in \Cref{fig:pd-vs-localsgd}, we use the same simulator and learning-rate selection rule. The synthetic runs use $d=512$ and zero initialization. \algname{LocalSGD} uses $R=S$; the synchronization period $H$ is swept in the corresponding notebook and then the displayed panel reports the selected tuned configuration for the plotted objective.

\subsection{Experimental setup.}
We compare \algname{PD} and \algname{RPD} on the same synthetic quadratic objective under a fixed block-update budget. For \algname{PD}, we use the standard {\rm 1F1B} schedule. For \algname{RPD}, we run separate experiments for several values of the delay bound $\delta$. In all cases, we start from the same initialization and evaluate progress using the full objective as a function of the number of block updates.

To make the comparison fair, we tune the stepsize independently for each method configuration. Concretely, for \algname{PD} we sweep over a fixed grid of learning rates and keep the run with the smallest final full objective. For \algname{RPD}, we repeat the same stepsize sweep separately for each tested value of $\delta$, and again keep the run with the smallest final full objective. The corresponding best-tuned learning rate is reported in the figures.

Figure~\ref{fig:rpd-validation-and-scaling}, the left panel of Figure~\ref{fig:gpd-validation}, compares the best-tuned \algname{PD} trajectory against the best-tuned \algname{RPD} trajectories for several delay bounds $\delta$. The results show a strong dependence of optimization quality on the delay parameter. Small and moderate delay bounds yield substantially better objective decay, whereas larger values of $\delta$ lead to slower convergence and require smaller tuned learning rates in the same delay-sweep experiment.

Together with the fixed-time scaling panel in Figure~\ref{fig:fixed-time-scaling}, this delay sweep highlights a tension between pipeline efficiency and optimization quality in the global-history \algname{RPD} abstraction. On the one hand, \algname{PD} {\rm 1F1B} induces exact delays whose natural scale grows quadratically with the number of stages. On the other hand, \algname{RPD} experiments show that large delay bounds are harmful for optimization and force more conservative stepsizes. Thus, while \algname{PD} is highly attractive from the systems perspective, its induced global-history staleness becomes progressively less favorable as the pipeline depth increases.

\subsection{NanoChat language-modeling experiment}
\label{sec:nanochat-pd-vs-localsgd}

We also compare \algname{PD} and \algname{LocalSGD} on a small autoregressive character-level language-modeling task implemented by the NanoChat-style causal Transformer used in our LLM simulator. The model is partitioned into pipeline stages: the first stage contains token and positional embeddings, the last stage contains the final layer normalization and language-modeling head, and the remaining stages contain Transformer blocks. We train on Tiny Shakespeare and measure last-stage forward cross-entropy loss as a function of simulated wall-clock time.

The model has $11{,}123{,}200$ trainable parameters and is split into $S=16$ pipeline stages. We use sequence length $128$, embedding dimension $256$, $8$ attention heads, mini-batch size $32$, and $1024$ data batches. \algname{PD} is run with $1024$ microbatches, producing a simulated-time budget of $2078$ pipeline ticks and $16{,}384$ backward block updates. \algname{LocalSGD} uses $R=16$ parallel local runs. The selected \algname{LocalSGD} synchronization period is $H=5$, obtained by empirical tuning; the command below reproduces this selected configuration. Additional values of $H$ can be reproduced by rerunning the same command with a different \texttt{--local-steps} value. Batches are shuffled during training.

The comparison uses a matched simulated-time budget. We first generate the \algname{PD} timeline and use its length as the target horizon. For the selected \algname{LocalSGD} setting $H=5$, the smallest matching schedule uses $874$ microbatches and has the same $2078$-tick length. Thus the $x$-axis in \Cref{fig:pd-vs-localsgd-nanochat} is simulated wall-clock time rather than parameter-update count. Under this protocol, the tuned \algname{PD} run obtains the lowest loss.

The experiment was run with:
\begin{verbatim}
python llm_pd_vs_sgd.py \
  --dataset tiny_shakespeare \
  --require-cuda \
  --num-stages 16 \
  --num-data-batches 1024 \
  --batch-size 32 \
  --seq-len 128 \
  --embed-dim 256 \
  --num-heads 8 \
  --pd-num-microbatches 1024 \
  --local-num-runs 16 \
  --local-steps 5 \
  --tune-stepsizes \
  --tuning-seeds 0 \
  --pd-lr-grid pow2:-4:2 \
  --local-sgd-lr-grid pow2:-4:2 \
  --shuffle-batches
\end{verbatim}
The \texttt{pow2:a:b} grid denotes $\{2^a,2^{a+1},\dots,2^{b-1}\}$, so both \algname{PD} and \algname{LocalSGD} are tuned over $\{2^{-4},2^{-3},\dots,2^1\}$ in this run. The script records the selected learning rates, final losses, schedules, curves, and learning-rate sweep summaries in the output directory.

\subsection{Existing Assets and Licenses}
\label{sec:existing-assets}

We use several existing assets in the experimental part of the paper. For the larger LLM experiments, we use a small NanoChat/nanoGPT-style Transformer implementation based on the open-source educational codebases of Karpathy. NanoChat and nanoGPT are released under the MIT License. We cite these repositories and respect their license terms.

For the language-modeling data, we use Tiny Shakespeare, a standard small character-level language-modeling dataset consisting of approximately 40,000 lines of Shakespeare text. We credit the dataset source through Karpathy's \texttt{char-rnn}/Tiny Shakespeare reference. The dataset is used only for small-scale optimization experiments and not as a newly introduced dataset.

No proprietary datasets, scraped private data, user data, or restricted-access pretrained models are used in our experiments.

\clearpage

\section{A stronger convergence theorem under the Polyak--\L ojasiewicz condition}

This section is not used in the main theoretical or empirical comparison. We include it only as a possible direction for future algorithms that use stale-read extrapolation.

We continue to work under the random pair-sampling model of the previous subsection.

\begin{assumption}[Polyak--\L ojasiewicz condition]
\label{ass:gpd_PL}
There exists $\mu>0$ such that
\[
\frac12 \|\nabla f(w)\|^2 \ge \mu\bigl(f(w)-f_\star\bigr)
\qquad \forall w\in\R^d.
\]
\end{assumption}

Under this additional assumption, the \algname{RPD} method enjoys a linear convergence guarantee up to a neighborhood whose size depends on the stepsize and the staleness level.

\begin{theorem}[Linear convergence to a neighborhood for \algname{RPD} under PL]
\label{thm:gpd_random_PL}Let \Cref{ass:gpd_random_pair}, \Cref{ass:gpd_lower_random}, \Cref{ass:gpd_Lsmooth_random}, \Cref{ass:gpd_staleness_random}, \Cref{ass:gpd_bounded_block_random}, \Cref{ass:gpd_PL} hold.
Assume
\[
0<\gamma\le \frac1L.
\]
Then for every $k\ge 0$,
\begin{align}
\E[f(w_{k+1})-f_\star]
&\le
\left(1-\frac{\mu\gamma}{S}\right)\E[f(w_k)-f_\star]
+
\frac{L\gamma^2}{2}G^2
+
\frac{L^2\gamma^3\delta^2}{2S}G^2.
\label{eq:gpd_PL_one_step}
\end{align}
\end{theorem}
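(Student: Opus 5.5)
The plan is to reuse the one-step descent inequality already derived in the proof of \Cref{thm:rpd-random-nonconvex-app}, and then replace the gradient norm by the function gap using \Cref{ass:gpd_PL}. The argument is short and should follow the same chain of estimates.

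We first invoke $L$-smoothness along $w_{k+1}=w_k-\gamma g_k$. Conditioning on $\mathcal F_k$, \Cref{lem:gpd_unbiased_direction} gives $\Exp{g_k\mid\mathcal F_k}=\frac1S\nabla f(z_k)$. \Cref{ass:gpd_bounded_block_random} gives $\Exp{\|g_k\|^2\mid\mathcal F_k}\le G^2$. These yield \eqref{eq:nihp-98fydp98ufd}.

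Next we apply the polarization bound $\inner{a}{b}\ge\frac12\|a\|^2-\frac12\|a-b\|^2$. We combine it with $\|\nabla f(w_k)-\nabla f(z_k)\|\le L\|w_k-z_k\|\le L\gamma\delta G$, which uses \Cref{lem:rpd-stale-distance}. This gives
\[
\Exp{f(w_{k+1})\mid\mathcal F_k}\le f(w_k)-\frac{\gamma}{2S}\|\nabla f(w_k)\|^2+\frac{\gamma^3L^2\delta^2G^2}{2S}+\frac{\gamma^2LG^2}{2},
\]
exactly as in the nonconvex proof.

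The new step is to bound $-\frac{\gamma}{2S}\|\nabla f(w_k)\|^2\le-\frac{\gamma\mu}{S}(f(w_k)-f_\star)$ using \Cref{ass:gpd_PL}. We then subtract $f_\star$ from both sides, take full expectations, and use the tower property. This produces the contraction factor $(1-\mu\gamma/S)$ together with the two additive terms $\frac{L\gamma^2}{2}G^2$ and $\frac{L^2\gamma^3\delta^2}{2S}G^2$, which is \eqref{eq:gpd_PL_one_step}.

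No step is a genuine obstacle; the only care needed is bookkeeping. The gradient is evaluated at $w_k$, not at $z_k$, so PL must be applied at $w_k$, which is $\mathcal F_k$-measurable. The factor $\frac12$ from polarization must cancel the $\frac12$ in the PL inequality. One should also note that $\mu\le L$ and $\gamma\le 1/L$ imply $0\le 1-\mu\gamma/S<1$, so the recursion is a genuine contraction. Unrolling it, if desired, gives linear convergence to a neighborhood of radius $\frac{S}{\mu\gamma}\bigl(\frac{L\gamma^2G^2}{2}+\frac{L^2\gamma^3\delta^2G^2}{2S}\bigr)$.
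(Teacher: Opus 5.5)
Your proposal is correct and follows the paper's proof essentially step for step. Both start from the conditional descent inequality of the nonconvex proof, bound $\|\nabla f(w_k)-\nabla f(z_k)\|\le L\gamma\delta G$ via \Cref{lem:rpd-stale-distance}, apply PL at $w_k$, subtract $f_\star$, and take total expectations. Your remark that $1-\mu\gamma/S\in[0,1)$ is not needed for the one-step inequality, which holds whatever the sign of that factor; it only matters when unrolling the recursion, as the paper does afterwards.
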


The above theorem implies that for every $K\ge 1$,
\begin{align}
\E[f(w_K)-f_\star]
&\le
\left(1-\frac{\mu\gamma}{S}\right)^K \bigl(f(w_0)-f_\star\bigr)
+
\frac{
\frac{L\gamma^2}{2}G^2
+
\frac{L^2\gamma^3\delta^2}{2S}G^2
}{
\mu\gamma/S
}
\notag\\
&=
\left(1-\frac{\mu\gamma}{S}\right)^K \bigl(f(w_0)-f_\star\bigr)
+
\frac{SL\gamma}{2\mu}G^2
+
\frac{L^2\gamma^2\delta^2}{2\mu}G^2.
\label{eq:gpd_PL_linear}
\end{align}

\begin{proof}
From the proof of \Cref{thm:rpd-random-nonconvex-app}, we already have
\begin{align}
\E[f(w_{k+1})\mid\mathcal F_k]
&\le
f(w_k)
-
\frac{\gamma}{2S}\|\nabla f(w_k)\|^2
+
\frac{\gamma}{2S}\|\nabla f(w_k)-\nabla f(z_k)\|^2
+
\frac{L\gamma^2}{2}G^2.
\label{eq:gpd_PL_start}
\end{align}
By $L$-smoothness and \Cref{lem:rpd-stale-distance},
\[
\|\nabla f(w_k)-\nabla f(z_k)\|
\le
L\|w_k-z_k\|
\le
L\gamma\delta G,
\]
and therefore
\[
\|\nabla f(w_k)-\nabla f(z_k)\|^2
\le
L^2\gamma^2\delta^2G^2.
\]
Substituting this into \eqref{eq:gpd_PL_start}, we get
\[
\E[f(w_{k+1})\mid\mathcal F_k]
\le
f(w_k)
-
\frac{\gamma}{2S}\|\nabla f(w_k)\|^2
+
\frac{L^2\gamma^3\delta^2}{2S}G^2
+
\frac{L\gamma^2}{2}G^2.
\]
Now apply the PL inequality:
\[
\frac12\|\nabla f(w_k)\|^2 \ge \mu\bigl(f(w_k)-f_\star\bigr),
\]
which implies
\[
\frac{\gamma}{2S}\|\nabla f(w_k)\|^2
\ge
\frac{\mu\gamma}{S}\bigl(f(w_k)-f_\star\bigr).
\]
Hence
\[
\E[f(w_{k+1})\mid\mathcal F_k]
\le
f(w_k)
-
\frac{\mu\gamma}{S}\bigl(f(w_k)-f_\star\bigr)
+
\frac{L\gamma^2}{2}G^2
+
\frac{L^2\gamma^3\delta^2}{2S}G^2.
\]
Subtract $f_\star$ from both sides:
\[
\E[f(w_{k+1})-f_\star\mid\mathcal F_k]
\le
\left(1-\frac{\mu\gamma}{S}\right)\bigl(f(w_k)-f_\star\bigr)
+
\frac{L\gamma^2}{2}G^2
+
\frac{L^2\gamma^3\delta^2}{2S}G^2.
\]
Taking total expectation proves \eqref{eq:gpd_PL_one_step}.

Now define
\[
\Delta_k \eqdef \E[f(w_k)-f_\star],
\qquad
a \eqdef 1-\frac{\mu\gamma}{S},
\qquad
b \eqdef \frac{L\gamma^2}{2}G^2+\frac{L^2\gamma^3\delta^2}{2S}G^2.
\]
Then
\[
\Delta_{k+1}\le a\Delta_k+b.
\]
Iterating this recursion gives
\[
\Delta_K \le a^K \Delta_0 + b\sum_{t=0}^{K-1} a^t
\le
a^K \Delta_0 + \frac{b}{1-a}.
\]
Since $1-a=\mu\gamma/S$, we obtain
\[
\Delta_K
\le
\left(1-\frac{\mu\gamma}{S}\right)^K (f(w_0)-f_\star)
+
\frac{
\frac{L\gamma^2}{2}G^2+\frac{L^2\gamma^3\delta^2}{2S}G^2
}{
\mu\gamma/S
},
\]
which simplifies to \eqref{eq:gpd_PL_linear}.
\end{proof}

\clearpage
\section{Exploratory Nesterov-inspired extrapolated variant of Randomized PipeDream (NAG-RPD)}

We now describe a predictive variant of \algname{RPD} in which the stale model is extrapolated before the gradient is evaluated. This appendix is exploratory and is not used in the main comparison. The worst-case bound below does not show that the extrapolation parameter improves the guarantee; it only records what follows from the same smoothness argument as the base \algname{RPD} theorem.

\begin{algorithm}[H]
\caption{Nesterov-inspired extrapolated variant of Randomized PipeDream (\algname{NAG-RPD})}
\label{alg:nag_gpd}
\begin{algorithmic}[1]
\STATE Input: stepsize $\gamma>0$, prediction parameter $\alpha\ge 0$, initial point $w_0\in\R^d$
\STATE Initialize $w_{-1}\eqdef w_0$
\FOR{$k=0,1,2,\dots$}
    \STATE Select a delay vector $j_k$ from the past, with $0\le k-j_k^{(s)}\le \delta$ for all $s$
    \STATE Read the stale model
    \[
    z_k \eqdef \bigl(w_{j_k^{(1)}}^{(1)},\dots,w_{j_k^{(S)}}^{(S)}\bigr)
    \]
    \STATE Form the extrapolated stale model
    \[
    \widehat z_k
    \eqdef
    \bigl(
    \widehat z_k^{(1)},\dots,\widehat z_k^{(S)}
    \bigr),
    \qquad
    \widehat z_k^{(s)}
    \eqdef
    w_{j_k^{(s)}}^{(s)}
    +
    \alpha\bigl(
    w_{j_k^{(s)}}^{(s)}-w_{j_k^{(s)}-1}^{(s)}
    \bigr),
    \]
    with the convention $w_{-1}\eqdef w_0$
    \STATE Sample $(s_k,m_k)$ uniformly from $\{1,\dots,S\}\times\{1,\dots,M\}$, conditionally independently of $\widehat z_k$
    \STATE Update the active stage block:
    \[
    w_{k+1}^{(s_k)}
    =
    w_k^{(s_k)}-\gamma\,\nabla_{w^{(s_k)}} f_{m_k}(\widehat z_k)
    \]
    \STATE Keep the remaining blocks unchanged:
    \[
    w_{k+1}^{(r)}=w_k^{(r)}
    \qquad \forall r\neq s_k
    \]
\ENDFOR
\end{algorithmic}
\end{algorithm}

The increment
\(
w_{j_k^{(s)}}^{(s)}-w_{j_k^{(s)}-1}^{(s)}
\)
is the previous \emph{global-history} increment in block $s$ and may be zero if
the previous global update changed another block. A version that extrapolates
from the previous local update of block $s$ would require different notation
and a separate analysis.

\begin{lemma}[Unbiasedness of the \algname{NAG-RPD} update]
\label{lem:nag_gpd_unbiased_direction}
Let \Cref{ass:gpd_random_pair} hold, and define
\[
\widehat g_k
\eqdef
U_{s_k}\nabla_{w^{(s_k)}} f_{m_k}(\widehat z_k).
\]
Then
\[
\Exp{S\widehat g_k\mid \mathcal F_k}
=
\nabla f(\widehat z_k),
\]
where $\mathcal F_k$ includes the current delay vector and extrapolated stale point, but not the current sampled pair $(s_k,m_k)$.
\end{lemma}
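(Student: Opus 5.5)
The plan mirrors the proof of \Cref{lem:gpd_unbiased_direction} verbatim, with $z_k$ replaced by $\widehat z_k$. First I will check that $\widehat z_k$ is $\mathcal F_k$-measurable. Each block $\widehat z_k^{(s)}$ is built from $w_{j_k^{(s)}}^{(s)}$ and $w_{j_k^{(s)}-1}^{(s)}$, with the convention $w_{-1}=w_0$. Both are past iterates with index at most $k$, and $j_k$ is chosen before $(s_k,m_k)$ is drawn, so they are determined by the history and the current delay vector. The statement of the lemma builds this into $\mathcal F_k$ explicitly. Hence, conditional on $\mathcal F_k$, the point $\widehat z_k$ is fixed. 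By \Cref{ass:gpd_random_pair}, the pair $(s_k,m_k)$ is then uniform on $\{1,\dots,S\}\times\{1,\dots,M\}$, independently of anything else not already in $\mathcal F_k$.

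Next I will compute the conditional expectation as a finite average:
\[
\Exp{\widehat g_k\mid\mathcal F_k}=\frac{1}{SM}\sum_{s=1}^S\sum_{m=1}^M U_s\nabla_{w^{(s)}} f_m(\widehat z_k).
\]
Then I pull $U_s$ out of the inner sum, which is allowed because it is linear. The finite-sum structure \eqref{eq:objective} gives $\frac1M\sum_m\nabla_{w^{(s)}}f_m=\nabla_{w^{(s)}}f$, and reassembling the blocks gives $\sum_s U_s\nabla_{w^{(s)}}f(\widehat z_k)=\nabla f(\widehat z_k)$. Multiplying by $S$ yields the claim.

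There is no real obstacle here. The only point that needs care is the measurability of $\widehat z_k$. In particular, the extrapolation term uses the index $j_k^{(s)}-1$, which is still in the past, and at $j_k^{(s)}=0$ it falls back on the convention $w_{-1}=w_0$. Both observations should be stated explicitly so that conditioning treats $\widehat z_k$ as deterministic. No smoothness or boundedness assumption is needed, since the argument is purely about the sampling distribution.
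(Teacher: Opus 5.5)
Your proposal is correct and matches the paper's proof essentially step for step: condition on $\mathcal F_k$ so that $\widehat z_k$ is fixed, write $\Exp{\widehat g_k\mid\mathcal F_k}$ as the uniform average over $(s,m)$, and use the finite-sum structure to obtain $\frac1S\nabla f(\widehat z_k)$. Your added check that $\widehat z_k$ is $\mathcal F_k$-measurable, including the $w_{-1}=w_0$ convention, is a harmless clarification; the paper does not argue it separately but builds it into the definition of $\mathcal F_k$.
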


\begin{proof}
Conditioned on $\mathcal F_k$, the extrapolated point $\widehat z_k$ is fixed. Hence
\begin{align*}
\Exp{\widehat g_k\mid \mathcal F_k}
&=
\frac{1}{SM}
\sum_{s=1}^S\sum_{m=1}^M
U_s \nabla_{w^{(s)}} f_m(\widehat z_k) \\
&=
\frac{1}{S}
\sum_{s=1}^S
U_s \left( \frac1M\sum_{m=1}^M \nabla_{w^{(s)}} f_m(\widehat z_k) \right) \\
&=
\frac1S
\sum_{s=1}^S
U_s \nabla_{w^{(s)}} f(\widehat z_k)
=
\frac1S \nabla f(\widehat z_k).
\end{align*}
\end{proof}

\begin{lemma}[Distance between current and extrapolated stale models]
\label{lem:nag-rpd-stale-extrapolated-distance}
Let \Cref{ass:gpd_staleness_random} hold, and assume that all block gradients evaluated by the \algname{NAG-RPD} trajectory are bounded by $G$.
Then for every $k\ge 0$,
\[
\|w_k-\widehat z_k\|
\le
\gamma(\delta+\alpha\sqrt S)G.
\]
\end{lemma}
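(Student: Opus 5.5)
The plan is to use the triangle inequality to split $w_k-\widehat z_k = (w_k - z_k) + (z_k-\widehat z_k)$. The first piece is handled exactly as in \Cref{lem:rpd-stale-distance}. That argument only used that each global step changes a single block by $\gamma$ times a block gradient of norm at most $G$, together with $\delta$-bounded staleness. Under the present assumption that all block gradients evaluated along the \algname{NAG-RPD} trajectory (now at the points $\widehat z_t$) are bounded by $G$, we again have $\|w_{t+1}-w_t\|\le\gamma G$. Summing over the at most $\delta$ most recent steps gives $\|w_k-z_k\|\le\gamma\delta G$.

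For the second piece, blockwise we have $z_k^{(s)}-\widehat z_k^{(s)} = -\alpha\bigl(w^{(s)}_{j_k^{(s)}}-w^{(s)}_{j_k^{(s)}-1}\bigr)$. Each such global-history increment is either zero, when $j_k^{(s)}=0$ by the convention $w_{-1}=w_0$ or when the step at index $j_k^{(s)}-1$ updated another block, or it equals $-\gamma$ times a single block gradient. Hence each block difference has norm at most $\alpha\gamma G$. Summing squares over the $S$ blocks,
\[
\|z_k-\widehat z_k\|^2=\sum_{s=1}^S\alpha^2\bigl\|w^{(s)}_{j_k^{(s)}}-w^{(s)}_{j_k^{(s)}-1}\bigr\|^2\le S\alpha^2\gamma^2G^2,
\]
so $\|z_k-\widehat z_k\|\le\alpha\gamma\sqrt S\,G$. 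Adding the two bounds yields $\|w_k-\widehat z_k\|\le\gamma(\delta+\alpha\sqrt S)G$.

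The only delicate point is the second step. Different blocks read different history indices $j_k^{(s)}$, so the increments come from distinct global steps, and we cannot use the fact that one step touches one block to get a bound of $\alpha\gamma G$ overall. This is why the crude per-block bound summed in squares, which gives $\sqrt S$, is the natural estimate. One might sharpen it in some cases, but that is not needed for the statement. I would also check the boundary case $j_k^{(s)}=0$ explicitly via the convention $w_{-1}=w_0$.
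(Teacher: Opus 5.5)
Your proposal is correct and matches the paper's proof essentially step for step: the paper also writes $w_k-\widehat z_k=(w_k-z_k)-\alpha u_k$, with $u_k$ the vector of global-history block increments, reuses the argument of \Cref{lem:rpd-stale-distance} to get $\|w_k-z_k\|\le\gamma\delta G$, and bounds $\|u_k\|\le\gamma G\sqrt S$ because $u_k$ collects at most one increment of norm at most $\gamma G$ per block. Your explicit check of the boundary case $j_k^{(s)}=0$ via the convention $w_{-1}=w_0$ is a detail the paper leaves implicit.
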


\begin{proof}
Fix $k\ge 0$.
By definition,
\[
\widehat z_k^{(s)}
=
w_{j_k^{(s)}}^{(s)}
+
\alpha\bigl(w_{j_k^{(s)}}^{(s)}-w_{j_k^{(s)}-1}^{(s)}\bigr).
\]
Hence
\[
w_k-\widehat z_k
=
(w_k-z_k)-\alpha\,u_k,
\]
where
\[
u_k
\eqdef
\bigl(
w_{j_k^{(1)}}^{(1)}-w_{j_k^{(1)}-1}^{(1)},
\dots,
w_{j_k^{(S)}}^{(S)}-w_{j_k^{(S)}-1}^{(S)}
\bigr).
\]
By the triangle inequality,
\[
\|w_k-\widehat z_k\|
\le
\|w_k-z_k\|+\alpha\|u_k\|.
\]

The same argument as in \Cref{lem:rpd-stale-distance} gives
\[
\|w_k-z_k\|\le \gamma\delta G,
\]
because every update changes a single block by norm at most $\gamma G$.

It remains to bound $\|u_k\|$.
At any iteration $t$, only one block is updated, and
\[
\|w_{t+1}-w_t\|
=
\gamma\|\nabla_{w^{(s_t)}} f_{m_t}(\cdot)\|
\le
\gamma G.
\]
For each stage $s$, the vector
\[
w_{j_k^{(s)}}^{(s)}-w_{j_k^{(s)}-1}^{(s)}
\]
is exactly the increment of block $s$ at iteration $j_k^{(s)}-1$, or zero if that block did not change there.
Thus $u_k$ is obtained by collecting at most one block increment for each block.
Each block increment has norm at most $\gamma G$, so the concatenated vector satisfies
\[
\|u_k\|\le \gamma G\sqrt S.
\]
Therefore
\[
\|w_k-\widehat z_k\|
\le
\gamma\delta G+\alpha\gamma G\sqrt S
=
\gamma(\delta+\alpha\sqrt S)G.
\]
\end{proof}

\begin{theorem}[Nonconvex convergence of \algname{NAG-RPD}]
\label{thm:nag_gpd_random_nonconvex}
Let \Cref{ass:gpd_random_pair}, \Cref{ass:gpd_lower_random},
\Cref{ass:gpd_Lsmooth_random}, \Cref{ass:gpd_staleness_random},
and \Cref{ass:gpd_bounded_block_random} hold.
Assume also that the trajectory-bounded gradient condition holds at the extrapolated stale points $\widehat z_k$.
Assume
\[
0<\gamma\le \frac1L,
\qquad
\alpha\ge 0.
\]
Then for every integer $K\ge 1$,
\begin{align}
\frac1K\sum_{k=0}^{K-1}\Exp{\|\nabla f(w_k)\|^2}
&\le
\frac{2S\bigl(f(w_0)-f_\star\bigr)}{\gamma K}
+
\gamma SL G^2
+
\gamma^2L^2(\delta+\alpha\sqrt S)^2 G^2.
\label{eq:nag_gpd_random_nonconvex_rate}
\end{align}
\end{theorem}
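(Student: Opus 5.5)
The plan is to repeat the proof of \Cref{thm:rpd-random-nonconvex-app} almost verbatim, with the stale point $z_k$ replaced by the extrapolated point $\widehat z_k$ and with \Cref{lem:rpd-stale-distance} replaced by \Cref{lem:nag-rpd-stale-extrapolated-distance}. Write the update in full-space form as $w_{k+1}=w_k-\gamma\widehat g_k$ with $\widehat g_k=U_{s_k}\nabla_{w^{(s_k)}} f_{m_k}(\widehat z_k)$. Then apply the $L$-smoothness descent inequality to get
\[
f(w_{k+1})\le f(w_k)-\gamma\inner{\nabla f(w_k)}{\widehat g_k}+\frac{\gamma^2L}{2}\|\widehat g_k\|^2 .
\]

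Next, take conditional expectation with respect to $\mathcal F_k$. This $\mathcal F_k$ contains the delay vector and hence $\widehat z_k$, which is built only from past history entries $w_{j_k^{(s)}}$ and $w_{j_k^{(s)}-1}$. Apply \Cref{lem:nag_gpd_unbiased_direction} to turn the linear term into $-\frac{\gamma}{S}\inner{\nabla f(w_k)}{\nabla f(\widehat z_k)}$. The quadratic term is bounded by $\frac{\gamma^2LG^2}{2}$, using the assumed trajectory bound on block gradients at the extrapolated points. Then use the polarization inequality $\inner{a}{b}\ge\frac12\|a\|^2-\frac12\|a-b\|^2$ with $a=\nabla f(w_k)$ and $b=\nabla f(\widehat z_k)$. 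This gives
\[
\Exp{f(w_{k+1})\mid\mathcal F_k}\le f(w_k)-\frac{\gamma}{2S}\|\nabla f(w_k)\|^2+\frac{\gamma}{2S}\|\nabla f(w_k)-\nabla f(\widehat z_k)\|^2+\frac{\gamma^2LG^2}{2}.
\]

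The gradient-mismatch term is handled by $L$-smoothness together with \Cref{lem:nag-rpd-stale-extrapolated-distance}:
\[
\|\nabla f(w_k)-\nabla f(\widehat z_k)\|^2\le L^2\|w_k-\widehat z_k\|^2\le L^2\gamma^2(\delta+\alpha\sqrt S)^2G^2 .
\]
Taking total expectation and telescoping from $k=0$ to $K-1$, then using $\Exp{f(w_K)}\ge f_\star$ and multiplying by $2S/(\gamma K)$, yields \eqref{eq:nag_gpd_random_nonconvex_rate}. In that final scaling, the $\frac{\gamma^3}{2S}L^2(\delta+\alpha\sqrt S)^2G^2$ term becomes $\gamma^2L^2(\delta+\alpha\sqrt S)^2G^2$, and the $\frac{\gamma^2LG^2}{2}$ term becomes $\gamma SLG^2$.

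The only point needing care is the applicability of \Cref{lem:nag-rpd-stale-extrapolated-distance}. Its hypothesis is that every block gradient actually used by the trajectory has norm at most $G$. Since the NAG-RPD updates evaluate gradients at $\widehat z_t$, this is exactly the extra assumption in the theorem statement. That assumption ensures each single-block increment $w_{t+1}-w_t$ has norm at most $\gamma G$, which drives both the $\gamma\delta G$ part and the $\alpha\gamma\sqrt S G$ part of the distance bound. I would also check that the convention $w_{-1}=w_0$ only makes some increments zero, which only helps. Beyond this, the argument is routine and follows the base proof line by line.
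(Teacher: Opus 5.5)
Your proposal is correct and follows the paper's proof essentially step for step. The steps match: the descent inequality for $w_{k+1}=w_k-\gamma\widehat g_k$, the unbiasedness lemma for $\widehat g_k$, the bound $\|\widehat g_k\|\le G$ from the extra assumption at $\widehat z_k$, the polarization inequality, the distance lemma $\|w_k-\widehat z_k\|\le\gamma(\delta+\alpha\sqrt S)G$, and telescoping with $\Exp{f(w_K)}\ge f_\star$. You were also right to flag that the bounded-gradient assumption at the extrapolated points, not the base assumption at $z_k$, is what makes the distance lemma applicable.
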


\begin{proof}
Define
\[
\widehat g_k
\eqdef
U_{s_k}\nabla_{w^{(s_k)}} f_{m_k}(\widehat z_k).
\]
Then the update is
\[
w_{k+1}=w_k-\gamma \widehat g_k.
\]
By $L$-smoothness of $f$,
\[
f(w_{k+1})
\le
f(w_k)
-
\gamma\inner{\nabla f(w_k)}{\widehat g_k}
+
\frac{\gamma^2L}{2}\|\widehat g_k\|^2.
\]
Taking conditional expectation and using \Cref{lem:nag_gpd_unbiased_direction},
\begin{equation}
\Exp{f(w_{k+1})\mid\mathcal F_k}
\le
f(w_k)
-
\frac{\gamma}{S}\inner{\nabla f(w_k)}{\nabla f(\widehat z_k)}
+
\frac{\gamma^2L}{2}\Exp{\|\widehat g_k\|^2\mid\mathcal F_k}.
\label{eq:nag_gpd_conditional}
\end{equation}
By the trajectory-bounded gradient assumption at the extrapolated stale points,
\[
\|\widehat g_k\|
=
\|\nabla_{w^{(s_k)}} f_{m_k}(\widehat z_k)\|
\le G,
\]
so
\[
\Exp{\|\widehat g_k\|^2\mid\mathcal F_k}\le G^2.
\]
Substituting into \eqref{eq:nag_gpd_conditional},
\[
\Exp{f(w_{k+1})\mid\mathcal F_k}
\le
f(w_k)
-
\frac{\gamma}{S}\inner{\nabla f(w_k)}{\nabla f(\widehat z_k)}
+
\frac{\gamma^2LG^2}{2}.
\]

Now use
\[
\inner{a}{b}
\ge
\frac12\|a\|^2-\frac12\|a-b\|^2
\]
with
\[
a=\nabla f(w_k),
\qquad
b=\nabla f(\widehat z_k).
\]
This gives
\[
-\frac{\gamma}{S}\inner{\nabla f(w_k)}{\nabla f(\widehat z_k)}
\le
-\frac{\gamma}{2S}\|\nabla f(w_k)\|^2
+
\frac{\gamma}{2S}\|\nabla f(w_k)-\nabla f(\widehat z_k)\|^2.
\]
Hence
\[
\Exp{f(w_{k+1})\mid\mathcal F_k}
\le
f(w_k)
-
\frac{\gamma}{2S}\|\nabla f(w_k)\|^2
+
\frac{\gamma}{2S}\|\nabla f(w_k)-\nabla f(\widehat z_k)\|^2
+
\frac{\gamma^2LG^2}{2}.
\]

By $L$-smoothness of $f$,
\[
\|\nabla f(w_k)-\nabla f(\widehat z_k)\|
\le
L\|w_k-\widehat z_k\|.
\]
Using \Cref{lem:nag-rpd-stale-extrapolated-distance},
\[
\|w_k-\widehat z_k\|\le \gamma(\delta+\alpha\sqrt S)G,
\]
and therefore
\[
\|\nabla f(w_k)-\nabla f(\widehat z_k)\|^2
\le
\gamma^2L^2(\delta+\alpha\sqrt S)^2G^2.
\]
Substituting this gives
\[
\Exp{f(w_{k+1})\mid\mathcal F_k}
\le
f(w_k)
-
\frac{\gamma}{2S}\|\nabla f(w_k)\|^2
+
\frac{\gamma^3L^2(\delta+\alpha\sqrt S)^2G^2}{2S}
+
\frac{\gamma^2LG^2}{2}.
\]
Taking expectation and summing from $k=0$ to $K-1$,
\begin{align*}
\Exp{f(w_K)}
&\le
f(w_0)
-
\frac{\gamma}{2S}\sum_{k=0}^{K-1}\Exp{\|\nabla f(w_k)\|^2}
+
K\frac{\gamma^3L^2(\delta+\alpha\sqrt S)^2G^2}{2S}
+
K\frac{\gamma^2LG^2}{2}.
\end{align*}
Using $\Exp{f(w_K)}\ge f_\star$, we get
\[
\frac{\gamma}{2S}\sum_{k=0}^{K-1}\Exp{\|\nabla f(w_k)\|^2}
\le
f(w_0)-f_\star
+
K\frac{\gamma^3L^2(\delta+\alpha\sqrt S)^2G^2}{2S}
+
K\frac{\gamma^2LG^2}{2}.
\]
Multiply by $2S/(\gamma K)$ to obtain
\[
\frac1K\sum_{k=0}^{K-1}\Exp{\|\nabla f(w_k)\|^2}
\le
\frac{2S\bigl(f(w_0)-f_\star\bigr)}{\gamma K}
+
\gamma SLG^2
+
\gamma^2L^2(\delta+\alpha\sqrt S)^2G^2
,
\]
which proves \eqref{eq:nag_gpd_random_nonconvex_rate}.
\end{proof}


\end{document}